\def\identifiedcopy{1}
\documentclass{article}
\usepackage[T1]{fontenc}
\usepackage{iclr2027_conference,times}
\usepackage{amsmath,amssymb,amsthm,booktabs,graphicx,float,needspace,tikz}
\usepackage{hyperref,url}
\hypersetup{hidelinks,pdftitle={Sharp Limits for Honest Uncertainty in Hard-Budget Repeated Evaluation}}
\ifdefined\identifiedcopy
\hypersetup{pdfauthor={Yezhou Cheng, Runjia Du, Zeming Liu, Qibai Chen, Hang Lyu, Yilan Wei, Yankai Zeng, Bojun Lin}}
\else
\hypersetup{pdfauthor={}}
\fi
\newcommand{\E}{\mathbb{E}}
\newcommand{\Prb}{\mathbb{P}}
\newcommand{\Var}{\operatorname{Var}}
\newcommand{\ind}{\mathbf{1}}
\newcommand{\kl}{\operatorname{kl}}
\newcommand{\HG}{\operatorname{Hypergeom}}
\newcommand{\Bin}{\operatorname{Bin}}
\newtheorem{theorem}{Theorem}
\newtheorem{proposition}[theorem]{Proposition}

\title{Sharp Limits for Honest Uncertainty\\in Hard-Budget Repeated Evaluation}
\author{%
\begin{tabular}{c}
\href{https://openreview.net/profile?id=\%7EYezhou_Cheng1}{Yezhou Cheng}$^{1}$ \quad
\href{https://openreview.net/profile?id=\%7ERunjia_Du1}{Runjia Du}$^{1}$ \quad
\href{https://openreview.net/profile?id=\%7EZeming_Liu5}{Zeming Liu}$^{1}$ \quad
\href{https://openreview.net/profile?id=\%7EQibai_Chen1}{Qibai Chen}$^{1}$ \\
\href{https://openreview.net/profile?id=\%7EHang_Lyu2}{Hang Lyu}$^{1}$ \quad
\href{https://openreview.net/profile?id=\%7EYilan_Wei1}{Yilan Wei}$^{2}$ \quad
\href{https://openreview.net/profile?id=\%7EYankai_Zeng1}{Yankai Zeng}$^{1}$ \quad
\href{https://openreview.net/profile?id=\%7EBojun_Lin1}{Bojun Lin}$^{3}$ \\[0.4em]
\normalfont $^{1}$\href{https://independent-researcher.org}{Independent} \quad
$^{2}$\href{https://u.northwestern.edu}{Northwestern University} \quad
$^{3}$\href{https://www.pinterest.com}{Pinterest, Inc.}
\end{tabular}%
}
\ifdefined\identifiedcopy\iclrfinalcopy\fi
\begin{document}
\maketitle
\ifdefined\identifiedcopy
\lhead{Author-prepared manuscript; not a record of conference acceptance}
\fi

\begin{abstract}
Repeated evaluation can estimate a benchmark score accurately while still
requiring replication to certify narrow uncertainty. We characterize that
requirement on a fixed grid of $M$ tasks with $L$ binary paths per task under
the hard budget $(M+t)K$, where each path costs at most $K$ responses or
episodes. For fixed $L\ge3$ and $0<\alpha\le1/12$, the optimal expected width
on the worst pure cohort, where planned labels agree within each task, is
$\Theta_{\alpha,L}([M(t+1)]^{-1/2})$ when every task is
observed and $\Theta_{\alpha,L}([M(t+\sqrt M)]^{-1/2})$ when omission is
allowed. The lower bounds cover adaptive hard-budget policies; fixed
random-subset designs attain both rates through disagreement certificates.
A joint mean/disagreement interval turns the task-covering law into practical
finite-budget inference. In an equal-budget LiveCodeBench replay
with 16 models, 880 tasks, and five outputs per task, Joint and exact pooled
uniform inference each use 1100 evaluated outputs. Joint is narrower in 15/16
panels, with median reductions of 30.6\% in mean interval width and 87.0\% in
MSE. Finite-regime analyses further identify how task coverage and within-task
agreement govern the useful operating region. Together, the sharp laws and
fixed-budget evidence make replication an explicit design variable for
information-efficient language-model and agent evaluation.
\end{abstract}

\section{Introduction}

Repeated evaluation allocates a finite budget across tasks and repeated runs
of each task. This allocation matters for language-model inference scaling
\citep{brown2024,kazdan2025} and agent benchmarks, where repeated runs can
differ even under nominally deterministic decoding \citep{bjarnason2026}.
Task stratification can make a point estimate highly accurate: when every
planned label within a task agrees, one observation per task recovers the grid
mean exactly. Certifying that accuracy, however, requires evidence about the
unobserved repetitions. This separation between estimation and certification
creates a concrete budget-design problem.

For example, suppose an evaluator fixes 880 coding tasks, a model checkpoint,
a decoding configuration, and five run seeds per task before evaluation. A
budget permits evaluating only 1100 of the 4400 prespecified task--run slots,
and the model advances to the next evaluation stage only if its score on this
grid clears a qualification threshold $\tau$. A lower endpoint above $\tau$
supports advancing the model, an upper endpoint below $\tau$ supports holding
it back, and an interval crossing $\tau$ signals that more repetitions are
needed. A fixed-cohort honest interval remains valid for this decision even
when tasks and runs are heterogeneous or dependent; generalization to future
tasks answers a different question.

We ask: \emph{how much replication is necessary and sufficient for narrow
honest intervals under a hard evaluation budget, and how does the answer change
if tasks may be omitted?} We study the mean of a prespecified finite grid of
potential runs, conditional on its realized outcomes, and require coverage for
every such grid. Each path contains at most $K$ responses or episodes, and the
budget holds for every outcome and policy randomization. The policy class is
broad: it may recycle savings from early decisive events, interleave tasks, and
choose observations adaptively.

Our central contribution is a pair of matching certification laws that expose
how replication budget and task coverage jointly determine honest interval
width. For fixed $L\ge3$, requiring at least one completed path per task gives
the optimal worst-pure-cohort rate
$\Theta_{\alpha,L}([M(t+1)]^{-1/2})$ (Theorem~\ref{thm:partial}); allowing
omission gives $\Theta_{\alpha,L}([M(t+\sqrt M)]^{-1/2})$
(Theorem~\ref{thm:omission}). At zero extra reservation, the latter changes
the attainable order from $M^{-1/2}$ to $M^{-3/4}$. Fixed random-subset
designs attain both rates through disagreement-based certificates, so the
theory yields executable evaluation policies rather than lower bounds alone.

The finite-budget contribution couples the sampled mean and disagreement count
in a Joint interval. At $K=1$, an equal-budget LiveCodeBench replay gives
Joint and exact pooled uniform inference the same 1100-output budget
($M=880,t=220$). Joint produces narrower mean intervals in 15/16 model panels,
with median reductions of 30.6\% in width and 87.0\% in MSE. A sufficient
dominance region and controlled departures from within-task agreement then
connect the sharp laws to finite-cohort behavior.

The paper contributes three mutually supporting results:
\begin{itemize}
\item matching lower and upper rates for task-covering and omission-enabled
hard-budget policies, including adaptive policies in both lower bounds;
\item fixed randomized designs and disagreement-aware intervals that realize
the theoretical rates and sharpen finite-budget inference; and
\item equal-budget evidence that identifies a useful LiveCodeBench operating
regime and boundary conditions observed across benchmarks.
\end{itemize}

\section{Related work}

\paragraph{Repeated and adaptive evaluation.}
\citet{brown2024} study inference scaling with repeated sampling, and
\citet{kazdan2025} predict pass@$K$ scaling using a model for task-level
heterogeneity. \citet{bjarnason2026} analyze repeated coding-agent runs.
\citet{huang2026} replay partial agent benchmarks to study pairwise decisions,
task-group coverage and deferral. We complement these empirical objectives with
honest fixed-cohort intervals for the repeated-evaluation mean under an exact
pathwise budget.
\citet{pilditch2026} develop Bayesian precision-based stopping and explicitly
distinguish posterior precision from frequentist coverage. Our design-based
guarantee supplies the corresponding conditional statement for a prespecified
benchmark grid and requires no model for task or run generation.

\paragraph{Active inference and stratification.}
Active statistical inference \citep{zrnic2024}, cost-aware evaluation
\citep{angelopoulos2025}, stratified prediction-powered inference
\citep{fisch2024}, and finite-population adaptive evaluation \citep{wu2026}
already use evaluation information to improve statistical efficiency.
\citet{yauney2026} show why strong random micro-benchmark baselines matter.
Our setting adds partially observed repetition paths and an exact total
response cap, without a learned rater or externally labeled pilot set.
Stratified empirical Bernstein inference \citep{burgess2021} and sequential
union-of-intersections inference \citep{spertus2024} already provide
variance-aware alternatives. One-per-stratum variance problems and replication
remedies are classical \citep{barabesi2012}, as are variance-sensitive bounds
\citep{maurer2009} and honest expected-length adaptation \citep{cai2004}.
Building on these tools, we derive the matching joint $M,t$ rate first with
task coverage and then over all hard-budget policies, and attain both laws with
fixed random-subset designs (Appendices~\ref{app:partial}
and~\ref{app:omission}).

\paragraph{Random continuation and sequential uncertainty.}
Randomly abandoning an expensive simulation and reweighting completed outcomes
predates language-model evaluation; Lazy ABC is a direct example
\citep{prangle2014}. DAPRO \citep{feldman2026} is especially close: its
Appendix F.6 includes unbiased population event-rate estimation under dynamic
continuation, with budget guarantees stated in expectation. We instead study
fixed-cohort certification under a pathwise hard cap; fixed-count reservation
provides the budget guarantee while replication determines honest interval
width. Our recursive weights follow classical multiphase sampling
\citep{lumley2024}. The uncertainty tools are likewise established:
finite-population confidence sequences \citep{waudby2020}, normal-mixture
boundaries \citep{howard2021}, and, in exploratory variants, profile e-value
inversion \citep{wasserman2020}. The contribution is the resulting sharp
replication law and its constructive realization under an exact evaluation
budget.

Table~\ref{tab:nearest} compares levels of guarantee rather than topic alone.

\begin{table}[H]
\centering\footnotesize
\caption{Positioning relative to the nearest guarantees.}
\label{tab:nearest}
\begin{tabular}{@{}p{.18\linewidth}p{.29\linewidth}p{.42\linewidth}@{}}
\toprule
Work & Established object & Additional object studied here\\\midrule
\citet{burgess2021}&Stratified empirical-Bernstein allocation after at least two initial draws per stratum&A budget-selected subset receives a second draw; matching $M,t$ lower and upper rates\\
\citet{cai2004}&Honest expected-length adaptation in nonparametric models&Fixed binary cohort under an exact label reservation\\
\citet{waudby2020}, \citet{spertus2024}&Finite-population or sequential stratified validity&Worst-pure width law and converse uniform over adaptive evaluation policies\\
\citet{feldman2026}&Dynamic continuation and event-rate estimation&Pathwise hard cap and fixed-cohort replication certificate\\
This paper&Task-covering and omission-enabled designs&Paired sharp rates plus finite-regime and real-cohort boundary\\
\bottomrule
\end{tabular}
\end{table}

\section{A fixed-cohort measurement problem}\label{sec:problem}

There are $M$ tasks and $L$ planned repetition paths per task, for $N=ML$ paths.
A path contains at most $K$ responses or completed agent episodes. Write
$T_{ir}\in\{1,\ldots,K,K+1\}$ for its first decisive-event index, with $K+1$
meaning no decisive event. Its terminal label is $Y_{ir}=\ind\{T_{ir}>K\}$ and
the target is
\begin{equation}\label{eq:target}
 \theta_{\mathcal C}=\frac{1}{ML}\sum_{i=1}^M\sum_{r=1}^L Y_{ir}.
\end{equation}
For a first-success event, $1-\theta_{\mathcal C}$ is the block empirical
success-at-$K$ score. For a first-failure event, $\theta_{\mathcal C}$ is the
block empirical all-$K$ success score. Blocks are prespecified and disjoint;
this target retains the planned repetition paths rather than replacing them by
the all-subsets binomial U-statistic.

An evaluator may advance an unresolved path from observed depth $a$ to $b$,
paying $\min(T_{ir}-a,b-a)$ units. It learns whether the path remains unresolved
and its charged cost. Already paid prefixes are reused. The policy receives
task identities and observed histories; future outcomes remain hidden.
A hard budget requires $C\le B$ for every allowed outcome table and every
randomization. One unit is one model response or one complete agent episode;
token, dollar, tool-call, and elapsed-time costs are separate resource measures.

\paragraph{What coverage means.}
All design guarantees have the form
$\Prb_R\{\theta_{\mathcal C}\in I(D)\mid\mathcal C\}\ge1-\alpha$, for every
fixed potential cohort $\mathcal C$. Outcomes may be heterogeneous or dependent;
randomness $R$ is the evaluator's sampling design. A population survival mean
$\theta_* = M^{-1}\sum_i\Prb(T_i>K)$ is a different estimand. Under independent
planned paths, an additional Hoeffding term can bridge the two
(Appendix~\ref{app:bridge}). Our primary target instead supports a budgeted estimate
of a prespecified benchmark grid, with a coverage guarantee that does not
assume a stochastic model for its outcomes. Requiring one completed path per
task ensures direct representation of every benchmark item; random-subset
auditing then spends the remaining reservation on certification. This is a
design choice that prioritizes full task representation. The companion omission
law quantifies the alternative, while population generalization to future tasks
or runs remains a separate inferential objective.

\section{Sharp certification laws}\label{sec:limits}

\begin{theorem}[Late-event hard-budget limit]\label{thm:budget}
Let $m=\min\{N,\lfloor B/K\rfloor\}$. In the access model above, the worst-case
MSE of any estimator is at least
\begin{equation}\label{eq:bayes}
 \frac{(N-m)(N+2)}{6N^2(m+2)}.
\end{equation}
If $B<K$, there are two indistinguishable cohorts with targets zero and one,
so the worst-case MSE is at least $1/4$.
\end{theorem}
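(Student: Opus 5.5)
The plan is to restrict to a \emph{late-event} subfamily of cohorts, where observation is maximally expensive, and then lower-bound the worst-case MSE by a Bayes risk under an exchangeable prior on that subfamily. Take cohorts with $T_{ir}\in\{K,K+1\}$ for every path, so $Y_{ir}=\ind\{T_{ir}=K+1\}$. In the access model, advancing a path from depth $a<K$ to $b<K$ costs $b-a$ and always reports ``unresolved.'' Such partial advances therefore carry no information about the label and only consume budget. The label of a path is revealed exactly when it is driven to depth $K$, at total cost $\min(T_{ir},K)=K$ regardless of the label.

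Consequently, every policy satisfying the hard cap $C\le B$ on every outcome table and randomization learns at most $\lfloor B/K\rfloor$ labels, and trivially at most $N$. So it observes the labels of at most $m$ paths, chosen adaptively. The policy's observable history is then a function of its randomization and the revealed labels only.

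If $B<K$, no label is ever revealed. Compare the cohort with all $T_{ir}=K$ ($\theta_{\mathcal C}=0$) to the cohort with all $T_{ir}=K+1$ ($\theta_{\mathcal C}=1$). Every policy induces the same distribution of data under both, hence the same distribution of the estimate $\hat\theta$. Since $(\hat\theta-0)^2+(\hat\theta-1)^2\ge 1/2$ pointwise, averaging shows the worst of the two MSEs is at least $1/4$.

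For the main bound, I will place the prior $p\sim\mathrm{Uniform}(0,1)$ and, given $p$, take the $N$ labels i.i.d.\ $\mathrm{Bernoulli}(p)$. Equivalently, $S=\sum Y_{ir}$ is uniform on $\{0,\dots,N\}$ and the ones are placed by a uniformly random permutation. Worst-case MSE is at least the Bayes MSE, which is at least that of the posterior mean.

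The key point, and the step needing the most care, is that adaptivity does not help. Conditional on $p$, the labels are i.i.d. Hence whatever (adaptive, randomized) rule selects which paths to resolve, the revealed labels are i.i.d.\ $\mathrm{Bernoulli}(p)$ draws and the unrevealed labels remain i.i.d.\ $\mathrm{Bernoulli}(p)$ given the history. This is an optional-sampling/exchangeability argument over the finite decision tree. Observing fewer than $m$ labels only increases the risk, so it suffices to treat exactly $m$ observations with $s$ ones. The posterior is then $p\mid D\sim\mathrm{Beta}(s+1,m-s+1)$, and the unobserved sum $U$ satisfies
\[
\Var(U\mid D)=(N-m)\,\E[p(1-p)\mid D]+(N-m)^2\Var(p\mid D).
\]
The Bayes MSE is $N^{-2}\E\,\Var(U\mid D)$.

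Averaging over $s$, which is uniform on $\{0,\dots,m\}$ under this prior, is a routine Beta-moment calculation. It should collapse to $(N-m)(N+2)/(6N^2(m+2))$. As a sanity check, at $m=0$ this equals $(N+2)/(12N)$, which matches the prior variance of $S/N$ for $S$ uniform on $\{0,\dots,N\}$. At $m=N$ it equals $0$, as it should.

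The main obstacle is making the adaptivity reduction rigorous. The policy may interleave partial advances, randomize, and stop early, so I would formalize it by induction on the policy's decision tree. The inductive claim is that at every node the unrevealed labels are conditionally i.i.d.\ $\mathrm{Bernoulli}(p)$ given the history, with $p$ having the Beta posterior determined by the revealed counts. Once that holds, the closed-form average is mechanical.
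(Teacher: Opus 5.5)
Your proposal is correct and follows the paper's proof essentially step for step. It uses the same restriction to $T_{ir}\in\{K,K+1\}$, the same padding to exactly $m$ labels, the same uniform-$p$ (equivalently uniform-$H$) exchangeable prior to neutralize adaptivity, the same beta-binomial posterior variance averaged over $s$ uniform on $\{0,\ldots,m\}$, and the same two-cohort argument for $B<K$. Your law-of-total-variance expression for $\Var(U\mid D)$ is only an algebraic route to the paper's closed-form variance $(N-m)(s+1)(m-s+1)(N+2)/[(m+2)^2(m+3)]$.
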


The construction places every decisive event at $K$ or later. An incomplete
path gives no terminal information, and every completed label costs $K$.
A uniform prior on the number of positive paths gives
Equation~\eqref{eq:bayes}. Thus adaptive continuation cannot uniformly improve
the $K/B$ order away from census. Useful gains must exploit a favorable
distribution of decisive-event times, not just a different weighting formula.

The next result separates an estimator's actual error from an interval's
ability to establish that error. It is a confidence-length consequence of the
classical one-observation-per-stratum information problem.

\begin{theorem}[One completed path per task]\label{thm:interval}
Suppose $L\ge2$ and write $q=\lfloor L/2\rfloor/L$. An evaluator completes exactly one uniformly sampled
path per task and observes no other terminal labels. Any additional observed
prefixes have depth below $K$. Let $I(D)$ be an interval with coverage at least
$1-\alpha$ for every fixed cohort. There exists a cohort in which every task's
$L$ terminal labels are identical, and one-per-task averaging is exact, yet
$\E_R|I(D)|\ge b_{M,q}(\alpha)$, where for $Z\sim\Bin(M,q)$,
\begin{equation}\label{eq:trim}
 b_{M,q}(\alpha)=\inf_{0\le w\le1,\,\E w(Z)\ge1-2\alpha}
 \E\bigl[|Z/M-q|w(Z)\bigr].
\end{equation}
For $0<\alpha<1/2$,
$\sqrt M\,b_{M,q}(\alpha)\to
2\sqrt{q(1-q)}\{1-\exp[-z_{1-\alpha}^2/2]\}/\sqrt{2\pi}$, where $z_u$ is the standard-normal
$u$-quantile. At $\alpha=.05$, the constant is $0.2958$ for even $L$ and
$0.2898$ for $L=5$.
\end{theorem}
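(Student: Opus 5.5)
The plan is a two-point coupling (Le Cam style) argument. I will build two cohorts, one pure and one mixed, that produce the same law of the observed data $D$. Any interval covering both targets with probability $1-\alpha$ must then be long on the pure cohort.

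\textbf{The two cohorts.} Place every decisive event at index $K$ or later, so prefixes of depth below $K$ carry no information; by hypothesis the only informative observation is the completed path. The pure cohort $\mathcal C_0$ fixes a label vector $s\in\{0,1\}^M$ and gives task $i$ all $L$ labels equal to $s_i$. The mixed cohort $\mathcal C_1$ gives every task exactly $\lfloor L/2\rfloor$ ones among its $L$ paths, so $\theta_{\mathcal C_1}=q$.

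\textbf{Randomizing the pure cohort.} Draw $s_i$ i.i.d.\ Bernoulli$(q)$, so the pure target is $\theta_{\mathcal C_0}=Z/M$ with $Z\sim\Bin(M,q)$. Under $\mathcal C_1$, the uniformly sampled path in task $i$ has label Bernoulli$(q)$, independently across tasks. So the data law under $\mathcal C_1$ equals the data law under the $s$-mixture of pure cohorts. The design randomness of which path is sampled is irrelevant in the pure cohort, and the prefix information is identical.

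\textbf{Deriving the bound.} I will work with the mixture $\bar P$, where the observed labels are $s$ and the pure target is $Z/M$. For each $s$, coverage on the pure cohort gives $\Prb\{Z/M\in I\mid s\}\ge 1-\alpha$. Coverage on $\mathcal C_1$, whose data law is $\bar P$, gives $\bar P\{q\in I\}\ge1-\alpha$. Define $w(s)=\Prb\{Z/M\in I,\ q\in I\mid s\}$. By the union bound $w(s)\ge 1-\alpha-\Prb\{q\notin I\mid s\}$, and averaging over $s$ gives $\E w\ge 1-2\alpha$. Whenever both points lie in $I$, we have $|I|\ge|Z/M-q|$. Hence $\E|I|\ge\E[|Z/M-q|\,w(s)]$. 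Replacing $w$ by its conditional expectation given $Z$ keeps $0\le w\le1$ and does not change the bound. So the expected length averaged over $s$ is at least $b_{M,q}(\alpha)$, and some fixed pure cohort attains at least the average. On a pure cohort one-per-task averaging is exact, which gives the first claim.

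\textbf{Asymptotics.} The infimum in \eqref{eq:trim} is a fractional knapsack: to minimize $\E[|Z/M-q|w]$ subject to $\E w\ge1-2\alpha$, keep $w=1$ on the values of $Z$ with the smallest $|Z/M-q|$ until mass $1-2\alpha$ is reached, splitting the boundary atom fractionally. By the CLT, $\sqrt M(Z/M-q)\to\sigma G$ with $\sigma=\sqrt{q(1-q)}$ and $G$ standard normal. Because the optimal $w$ is a near-threshold function with fractional boundary, and $|\cdot|$ is uniformly integrable under the scaling, the limit is $\sigma\,\E[|G|\ind\{|G|\le c\}]$ with $\Prb(|G|\le c)=1-2\alpha$, that is $c=z_{1-\alpha}$. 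The truncated half-normal moment is $2\int_0^c g\,\phi(g)\,dg=2(1-e^{-c^2/2})/\sqrt{2\pi}$, giving the stated constant. For the numerical values, $q=1/2$ when $L$ is even, and $q=2/5$ when $L=5$, where $\sqrt{q(1-q)}=\sqrt{.24}$. Evaluating with $z_{.95}=1.645$ gives $0.2958$ and $0.2898$.

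\textbf{Main obstacle.} The step that needs the most care is the exact indistinguishability. I must check that the observed history (which path is completed, its depth and cost, and the unresolved prefixes) has the same law under $\mathcal C_1$ and under the $s$-mixture. Placing all decisive events at $K$ or $K+1$ with labels only at depth $K$ handles this. The other delicate point is turning the discrete knapsack limit into a rigorous CLT statement, which I would do via Kolmogorov-distance convergence plus uniform integrability.
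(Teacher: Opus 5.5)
Your overall architecture matches the paper's: a late-event embedding, a Bernoulli$(q)$ prior over pure cohorts against a half-positive alternative with target $q$, a union bound for simultaneous inclusion, conditioning on $Z$ to get an admissible $w$, and the trimmed-mean CLT limit. One step, however, fails as justified: the claim that a single \emph{fixed} mixed cohort $\mathcal C_1$ has data law $\bar P$.

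The data $D$ record which path was completed in each task, as your obstacle paragraph itself notes. Under a fixed $\mathcal C_1$, the completed label is a deterministic function of the selected path index. Under the $s$-mixture of pure cohorts, the label is independent of that index. Only the label marginals agree, not the joint law of indices and labels. Since $I(D)$ may depend on the indices, coverage of $q$ under the law of one fixed $\mathcal C_1$ does not give $\bar P\{q\in I\}\ge1-\alpha$. That inequality is exactly what your averaging step $\E w\ge1-2\alpha$ uses. Placing events at $K$ or $K+1$ neutralizes depths, costs and unfinished prefixes, but not this index--label coupling.

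The repair is the paper's prior B: independently in each task, put the $\lfloor L/2\rfloor$ positives on a uniformly random subset of paths. Every realization is a fixed cohort with target exactly $q$. Honesty on each realization therefore integrates to coverage of $q$ under the mixture. Under this prior, the selected index is uniform and its label is Bernoulli$(q)$, independent of the index and of the other tasks. This is precisely the law under your $s$-mixture. Because tasks are independent under both priors, the equality of laws also holds if the task order is chosen adaptively, a case your proposal does not address. With this substitution, the rest of your argument goes through unchanged and coincides with the paper's proof.
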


One prior makes each task internally constant with a Bernoulli($q$) label; a
second makes exactly $\lfloor L/2\rfloor$ paths positive. One observed label per task has
the same law under both priors, but the targets are respectively the observed
mean and $q$. Honest coverage must accommodate both. The theorem isolates the
certification cost even when task-balanced point estimation is exact; the
matching interpolation laws below show how replication removes it. Complete
proofs appear in Appendix~\ref{app:limits}.

\paragraph{How much replication removes the obstruction?}
Complete one uniform path in every task, then a second distinct uniform path
in a uniform subset $S$ of exactly $t$ tasks. Choose $S,t$ independently of
all labels before observation. Let $A_i$ average the one or two labels
purchased in task $i$ and set $\widehat\theta=M^{-1}\sum_iA_i$.
No terminal savings are recycled. This design reserves $(M+t)K$ and is
unbiased. Let $\mathcal P$ be all internally constant task cohorts. Let
$\Pi_{M,t}$ contain all pathwise-$(M+t)K$-budget policies that reveal at least
one terminal label in every task on every cohort. They may adapt task/path
choices, interleave audits, stop, and recycle savings. For $\pi\in\Pi_{M,t}$,
$\mathcal I_\alpha(\pi)$ contains intervals honest over \emph{all} fixed cohorts.

\begin{theorem}[Sharp partial-replication interpolation]\label{thm:partial}
For fixed $L\ge3$ and $0<\alpha\le1/12$, there are constants
$0<c_{\alpha,L}\le C_{\alpha,L}<\infty$, independent of $M,t$, such that
\begin{equation}\label{eq:partial_rate}
 \frac{c_{\alpha,L}}{\sqrt{M(t+1)}}\le
 \inf_{\pi\in\Pi_{M,t},\,I\in\mathcal I_\alpha(\pi)}
 \sup_{\mathcal C\in\mathcal P}\E_R|I|
 \le\frac{C_{\alpha,L}}{\sqrt{M(t+1)}},\qquad 0\le t\le M.
\end{equation}
The fixed random-subset audit attains the upper bound on every pure cohort.
The lower bound holds for every task-covering policy, but is worst-pure-cohort,
not pointwise over that class.
\end{theorem}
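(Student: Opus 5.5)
The proof has two halves. The upper bound comes from an explicit disagreement certificate built on the fixed random-subset audit. The lower bound is a two-prior argument that extends the proof of Theorem~\ref{thm:interval}: the second prior is diluted, with only a fraction of order $1/(t+1)$ of the tasks impure, so that $t$ adaptive audits rarely expose it.

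\emph{Upper bound.} Write $p_i$ for the fraction of positive labels in task $i$, and write $V=M^{-1}\sum_i p_i(1-p_i)$. Conditional on $S$, the errors $A_i-\bar y_i$ are independent across tasks, have mean zero, lie in $[-1,1]$, and have variance at most $p_i(1-p_i)$. Bernstein's inequality therefore gives
\[
|\widehat\theta-\theta_{\mathcal C}|\le\sqrt{2V\log(4/\alpha)/M}+\log(4/\alpha)/(3M)
\]
with probability at least $1-\alpha/2$.

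To certify $V$, let $D$ count the audited tasks whose two labels disagree. A second distinct path disagrees with probability $d_i=2p_i(1-p_i)L/(L-1)\ge 2p_i(1-p_i)$. Hence $D$ is a sum of independent Bernoulli$(d_i)$ draws over a uniform $t$-subset. Hoeffding's comparison between sampling without and with replacement, combined with a multiplicative Chernoff lower-tail bound, gives the following. With probability at least $1-\alpha/2$,
\[
V\le U:=C\,(D+\log(2/\alpha))/t .
\]
The interval is then $\widehat\theta$ plus or minus the Bernstein radius evaluated at $\min(U,1/4)$. For $t=0$ we simply use $1/4$.

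This interval is honest for every cohort. On a pure cohort $D=0$, so the width is $O(\sqrt{1/(Mt)})+O(1/M)$. Since $t\le M$, the $1/M$ term is at most $\sqrt{2}/\sqrt{M(t+1)}$. This yields $C_{\alpha,L}/\sqrt{M(t+1)}$.

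\emph{Lower bound.} All decisive events are placed at depth $K$, as in Theorem~\ref{thm:budget}. Then each terminal label costs exactly $K$, and any policy in $\Pi_{M,t}$ reveals at most $M+t$ labels. Because every task must be covered, at most $t$ of these are second-or-later labels.

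Under prior $P_0$, each task is pure with a Bernoulli$(q)$ label. Under prior $P_1$, an independent Bernoulli$(\varepsilon)$ indicator marks each task as impure, with exactly $\lfloor L/2\rfloor$ of its $L$ paths positive; unmarked tasks are as in $P_0$. Take $\varepsilon=c_0/(t+1)$.

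Any single revealed label is Bernoulli$(q)$ under both priors. So the two data laws differ only through \emph{detection}, meaning an observed disagreement inside one task. Consider any history with no detection so far. The first label of a task carries no information about its mark, and agreeing labels only lower the posterior probability of impurity. Hence the posterior probability that the next audited task is impure is at most $\varepsilon$, whatever the adaptive choice. A union bound over at most $t$ extra labels then bounds the detection probability by $t\varepsilon\le c_0$, and this bounds the total-variation distance between the two data laws.

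Under $P_1$, the gap $\theta_{\mathcal C}-M^{-1}\sum(\text{first labels})$ has, conditionally on the data, a spread of order $\sqrt{\varepsilon q(1-q)/M}$. This comes from the unobserved labels of the roughly $\varepsilon M$ impure tasks. Under $P_0$, the gap is zero.

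We then repeat the trimming argument of Theorem~\ref{thm:interval}. An interval must cover under both priors with probability at least $1-\alpha$, and the laws are $c_0$-close. With $\alpha\le1/12$ and $c_0$ small, on a data set of probability at least $1-2\alpha-c_0$ bounded away from zero, the interval must contain both the observed mean and a constant-probability range of $\theta$ under $P_1$. This forces $\E_R|I|\gtrsim\sqrt{\varepsilon/M}\asymp 1/\sqrt{M(t+1)}$. Here the expectation averages over $P_0$ and over the late-event pure cohorts. Hence the bound holds on some pure cohort, which is exactly the worst-pure-cohort statement.

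\emph{Main obstacle.} I expect the delicate step to be the adaptive-detection bound together with the anti-concentration of $\theta$ given the data under $P_1$. The detection bound requires a careful posterior argument: the policy may aim its audits using revealed labels, so the claim that the impurity posterior never exceeds $\varepsilon$ without detection must hold for every policy. The anti-concentration requires an explicit local-CLT or Paley--Zygmund bound for the hypergeometric mixture, uniform in $M$ and $t$. This uniformity is what fixes the admissible ranges of $c_0$ and $\alpha$.
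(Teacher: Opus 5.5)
The upper bound is fine and is essentially the paper's Audit certificate: a random-subset MGF comparison, Poisson--Chernoff inversion of the disagreement count, conditional Bernstein, and a union bound. The main branch of your converse also follows the paper's route: late-event embedding, diluted contamination with $\epsilon\asymp 1/(t+1)$, domination on agreement transcripts, and Paley--Zygmund for the noncentered posterior sum.

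\textbf{The gap.} Your claim that the anti-concentration of $\theta_{\mathcal C}-m(T)$ under $P_1$ can be made uniform in $M$ and $t$ is false. Here $m(T)$ is the observed first-label mean.
\begin{itemize}
\item Under $P_1$ the number of impure tasks is $\Bin(M,\epsilon)$, with mean $c_0M/(t+1)$.
\item With task coverage, if no task is impure then $\theta_{\mathcal C}=m(T)$ exactly. Hence $\Prb_1\{\theta_{\mathcal C}\ne m(T)\}\le c_0M/(t+1)$.
\item At $t=M$ this is below $c_0$. That is already smaller than the $2\alpha+c_0$ (really $2\alpha+2c_0$, after transferring $\{|I|\ge\ell\}$ back to $P_0$) that the separation probability must exceed. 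So your argument yields nothing for $t$ within a constant factor of $M$, which is exactly the regime of the $M^{-1}$ rate.
\item No local CLT or fourth-moment bound can rescue this, because the gap is identically zero with probability near one. Correspondingly, the Paley--Zygmund step only works once the posterior variance $v\asymp\epsilon M$ exceeds a fixed constant.
\end{itemize}
A symptom of the problem: nothing in your converse uses $L\ge3$. Yet for $L=2$ and $t=M$, census is affordable and the worst-pure width is zero.

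\textbf{The fix.} Add a second branch for $M/(t+1)\le C_{\alpha,L}$.
\begin{itemize}
\item Under the same late-event embedding, compare the all-zero cohort with a prior placing a single positive path uniformly among the $ML$ positions.
\item A zero-compatible transcript reveals at most $M+t\le 2M$ positions. So the no-hit part of the alternative law is at least $q_{\min}=1-(M+t)/(ML)\ge 1-2/L\ge 1/3$ times the all-zero law. This is where $L\ge3$ enters.
\item Honesty at both cohorts gives $\Prb_0\{1/(ML)\notin I\}\le\alpha/q_{\min}$ and $\Prb_0\{0\notin I\}\le\alpha$. Hence $\E_0|I|\ge(1-4\alpha)/(ML)$.
\item This is at least a constant times $1/\sqrt{M(t+1)}$ precisely when $M/(t+1)$ is bounded.
\end{itemize}
Choose $C_{\alpha,L}$ large enough that your sparse-contamination branch, with $v$ above a fixed threshold, works for $M/(t+1)>C_{\alpha,L}$. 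Then take the smaller of the two constants.
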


Thus $t=\lfloor\rho M\rfloor$, for any fixed $0<\rho\le1$, attains the
$M^{-1}$ order with reservation approaching $(1+\rho)MK$. Doubling is
unnecessary. If $t\to\infty$ but $t=o(M)$, the attainable width is
order $(Mt)^{-1/2}$, while the reservation ratio tends to one. No task-covering
adaptive policy can then attain $O(M^{-1})$ worst-pure width. Task coverage
is not intrinsic, however: a policy may sacrifice pure-cohort point exactness
to learn about the tasks it leaves unobserved.

Let $\Pi^{\rm all}_{M,t}$ contain every pathwise-$(M+t)K$-budget policy with
no external cohort information; tasks need not be observed. Define honest
intervals as above and retain the same internally constant class $\mathcal P$.

\begin{theorem}[Sharp interpolation with task omission]\label{thm:omission}
For fixed $L\ge3$ and $0<\alpha\le1/12$, constants
$0<c'_{\alpha,L}\le C'_{\alpha,L}<\infty$ exist such that, uniformly for
$M\ge1$ and $0\le t\le M$,
\begin{equation}\label{eq:omission_rate}
 \frac{c'_{\alpha,L}}{\sqrt{M(t+\sqrt M)}}\le
 \inf_{\pi\in\Pi^{\rm all}_{M,t},\,I\in\mathcal I_\alpha(\pi)}
 \sup_{\mathcal C\in\mathcal P}\E_R|I|
 \le\frac{C'_{\alpha,L}}{\sqrt{M(t+\sqrt M)}}.
\end{equation}
\end{theorem}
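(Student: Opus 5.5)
The upper bound splits into two regimes. The lower bound comes from a single symmetrized two-prior (Le Cam) argument that charges any policy for omitted tasks and for missing audits at once.

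\textbf{Upper bound.} If $t\ge\sqrt M$, then $t+\sqrt M\le 2t$, so the fixed random-subset audit of Theorem~\ref{thm:partial} already gives width $O([M(t+1)]^{-1/2})=O([M(t+\sqrt M)]^{-1/2})$.

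If $t<\sqrt M$, I would use a fixed random design that deliberately omits $k=\lceil\sqrt M\,\rceil$ tasks.
\begin{itemize}
\item Draw a uniform subset of $M-k$ tasks and complete one uniform path in each.
\item Spend the freed reservation on second, distinct paths in a uniform subset of $s=t+k$ observed tasks. The total is $M-k+s=M+t$ labels, so the cost is $(M+t)K$ pathwise.
\end{itemize}
The estimator reweights observed task averages by the inverse inclusion probability, as in the multiphase weights. The honest interval adds two terms.
\begin{itemize}
\item A finite-population (hypergeometric/Serfling) term for the omitted tasks, whose task means lie in $[0,1]$. Its width is $O(\sqrt{k}/M)$, since the without-replacement variance factor is $(1-n/M)/n\approx k/M^2$.
\item A disagreement certificate, as in the partial-replication design. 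It bounds the contribution of unobserved paths within observed tasks using the disagreement count among the $s$ audited tasks. On a pure cohort this count is zero, so the certificate is an exact binomial/hypergeometric upper bound of order $1/\sqrt{Ms}$ on the fraction of non-constant tasks, up to $\alpha$- and $L$-constants.
\end{itemize}
On pure cohorts the expected width is therefore $O(\sqrt k/M+1/\sqrt{M(t+k)})=O(M^{-3/4})$. This matches the claimed rate, because $M(t+\sqrt M)\asymp M^{3/2}$ in this regime. Coverage over all cohorts holds by a union bound with levels $\alpha/2$ each.

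\textbf{Lower bound.} First randomize task labels by a uniform permutation, which the policy cannot see. After this symmetrization, an adaptive policy is characterized by how many tasks it leaves unobserved ($k$) and how many extra labels it spends beyond the first in a task ($d$). The budget forces $d\le t+k$, with $k,d$ possibly random.

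I would build two alternatives to a base prior $P_0$, under which each task is internally constant with a Bernoulli$(q)$ label.
\begin{itemize}
\item \emph{Omission alternative.} Shift the Bernoulli parameter for all tasks by $\pm c/\sqrt{M}$. Only the $k$ unobserved tasks are invisible, and by the hypergeometric variance their sum fluctuates by $\sqrt k$ on the scale of the $M$ tasks. Any honest interval must then have width $\gtrsim\sqrt k/M$ on the pure cohort. This is the analogue of the two-prior argument of Theorem~\ref{thm:interval}, with targets differing by $\sqrt k/M$ and laws nearly indistinguishable.
\item \emph{Replication alternative.} Make a random $\varepsilon$-fraction of tasks internally mixed, with $\lfloor L/2\rfloor$ positive paths, while keeping the one-label marginal equal to $P_0$. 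The target moves by order $\varepsilon$. A mixed task is revealed only when an extra label lands on it, which happens with probability $O(\varepsilon)$ per extra label. Hence $\mathrm{TV}\lesssim d\varepsilon+\sqrt{M}\varepsilon$, the last term from the one-label mean shift. Taking $\varepsilon\asymp 1/\sqrt{M(d+1)}$ forces width $\gtrsim 1/\sqrt{M(d+1)}$. Using $L\ge3$ keeps the mixed tasks from being exposed by first labels alone.
\end{itemize}
Converting the total-variation bounds into expected-length bounds follows the coverage-to-length argument behind $b_{M,q}(\alpha)$; the restriction $\alpha\le1/12$ leaves room for union bounds across the two alternatives and the event that $k,d$ are near their typical values. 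Combining the two alternatives gives width $\gtrsim\max\{\sqrt k/M,\,[M(t+k+1)]^{-1/2}\}$. Minimizing over $k$ puts the optimum at $k(t+k)\asymp M$, which yields $[M(t+\sqrt M)]^{-1/2}$.

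\textbf{Main obstacle.} The hardest step is making both alternatives hold simultaneously against fully adaptive policies. Here $k$ and $d$ are random and can depend on observed histories, for instance when a policy reacts to early disagreement. My first approach is to place both perturbations in one mixture prior over permuted cohorts. Under $P_0$ no disagreement is ever observed, so the policy's allocation is a deterministic function of observed first labels and its randomization. I would then condition on the allocation and apply the bounds above, averaging via Jensen's inequality, since the bound is convex in $k$ and $d$. Care is needed so that the omission perturbation does not itself create detectable disagreement, and so that the constants stay uniform for small $M$; for bounded $M$ the bound reduces to Theorem~\ref{thm:budget}-type constant-order bounds.
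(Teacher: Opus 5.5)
Your upper bound is essentially the paper's construction: omit about $\sqrt M$ uniformly chosen tasks, reinvest the saved labels as audits inside the selected subset, add a finite-population Hoeffding term of order $\sqrt s/M$ to the inner Audit radius, and use the task-covering design when $t^2\ge M$ (small $M$ is absorbed by the unit interval).

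The lower bound has two genuine gaps. The first is that the replication alternative is miscalibrated. Under uniform within-task placement, the first revealed label in a task is unbiased for that task's mean. So the contaminated target minus the transcript-determined pure target (the first-label mean when all tasks are observed) has mean zero. It is not a shift of order $\varepsilon$; it is a posterior fluctuation of size $\asymp\sqrt{\varepsilon/M}$. With your $\varepsilon\asymp[M(d+1)]^{-1/2}$ this gives only $M^{-3/4}(d+1)^{-1/4}$, e.g.\ $M^{-7/8}$ instead of the required $M^{-3/4}$ at $t\asymp\sqrt M$ without omission. The $\sqrt M\varepsilon$ ``one-label mean shift'' term comes from a parametric picture that does not fit a fixed-cohort target: a visible label shift moves the observed labels and the target together. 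It also contradicts your own requirement that the one-label marginal be unchanged. In addition, a small total-variation distance cannot carry an expected-length bound back to pure cohorts, since it only gives $\E_0|I|\ge\E_1|I|-\mathrm{TV}$, which is useless when widths are $o(1)$. The paper instead does the following.
\begin{itemize}
\item It keeps the one-label law exactly fair: contaminated tasks have $1$ or $L-1$ positives with equal probability.
\item It takes $\epsilon=1/[4(t+\lceil\sqrt M\rceil+1)]$, so that on agreement transcripts $d\Prb_B(\cdot,E)=w\,d\Prb_A$ with $w\in[5/6,1]$. This one-sided domination transfers both coverage and length back to the pure prior.
\item It obtains a \emph{random} separation $|H|\gtrsim\sqrt{M/D}$ from independent, noncentered posterior task errors, via a fourth-moment bound and Paley--Zygmund.
\end{itemize}

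The second gap is the treatment of random, adaptive $k$. Honesty constrains coverage only after integrating over the design, so you cannot apply a two-prior bound conditionally on the realized allocation. An interval may place all of its $\alpha$ noncoverage on exactly the allocations where one mechanism would bind. Jensen also runs the wrong way, since $\sqrt k/M$ is concave. The paper instead splits on a threshold event whose probability exceeds the $2\alpha$ slack.
\begin{itemize}
\item If $\Prb_A(z>\lceil\sqrt M\rceil)\ge1/4$, a single fair-bit pure prior suffices; your $\pm c/\sqrt M$ shift is unnecessary and barely moves the posterior target. The $z$ omitted bits stay independent and fair given the transcript, so their sum has atoms at most $z^{-1/2}$. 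Short intervals then have posterior coverage at most $1/2$, and integrated honesty gives $\E_A|I|\ge\ell(1/4-2\alpha)$ with $\ell\asymp M^{-3/4}$.
\item Otherwise, on an event of $\Prb_A$-probability above $3/4$, the budget forces $\sum_{r_i\ge1}(r_i-1)\le t+\lceil\sqrt M\rceil$, and the contamination argument runs on that event. Because omission makes the pure target non-transcript-measurable, this branch also needs the paper's subprobability coupling, which draws the A and B cohorts independently from their posteriors. Your $\max\{\cdot,\cdot\}$ combination skips this step.
\end{itemize}
Finally, the roles of the hypotheses differ from what you state. The restriction $\alpha\le1/12$ is what keeps $1/4-2\alpha$ and $\tfrac58\cdot\tfrac{9801}{35000}-2\alpha$ positive. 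The condition $L\ge3$ is used to guarantee enough noncensused tasks and the no-hit ratio $1-(M+t)/(ML)\ge1/3$ in the single-positive small-scale branch. It is not needed to hide mixed tasks from first labels, since no single label can reveal mixing.
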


For the upper bound, when $M\ge16$ and $t^2<M$, select
$n=M-\lfloor\sqrt M\rfloor$ tasks uniformly and use the saved task labels to
audit $q=t+\lfloor\sqrt M\rfloor$ of them; otherwise use the task-covering
design. A finite-population bound transfers the selected-task certificate to
the full grid. At $t=0$ this gives $M^{-3/4}$ width, but its pure-cohort point
MSE is no longer zero. The lower proof covers adaptive task choice, stopping,
and omission via an omitted-bit branch and a posterior-coupling branch
(Appendix~\ref{app:omission}). The displayed branch establishes the sharp
order; Section~\ref{sec:experiments} evaluates its finite-budget constants.

\paragraph{Construction and proof mechanism.}
Let $d$ count the disagreeing audited pairs. For $t\ge1$, define
$U(d,\delta;t)$ as the largest $u\in[d,t]$ with
$u-d+d\log(d/u)\le\log(1/\delta)$, interpreting $0\log0=0$. Set
\begin{equation}\label{eq:main_partial}
 V_U=\min\left\{\frac1{4M},\frac{(L-1)U(d,\alpha/2;t)}{2LMt}\right\},
 \quad c_0=\frac{L-1}{LM},\quad
 r=\frac{c_0x}{3}+\sqrt{2V_Ux+(c_0x/3)^2},
\end{equation}
where $x=\log(4/\alpha)$. For the fixed random-subset design,
the interval $\widehat\theta\pm r$ is honest,
after clipping to feasible labels; at $t=0$ use a Hoeffding interval.
A random-subset exponential-moment comparison bounds average within-task
variation from $d$. Conditional-on-subset Bernstein and a union bound then
control mean error, without assuming independence of mean and disagreement.
On pure cohorts $d=0$, giving width $O((Mt)^{-1/2}+M^{-1})$.
For the converse, late outcomes force every revealed label to cost $K$.
On agreement transcripts, the sparse-contamination prior's likelihood ratio
to a pure prior lies in $[5/6,1]$, even under adaptive auditing. Independent
but noncentered posterior task errors retain enough variance; a noncentral
fourth-moment argument and a single-positive alternative complete the bound
(Appendix~\ref{app:partial}). The Audit formula applies to the fixed
random-subset design; outcome-adaptive audit subsets require their own valid
inference rule.

\paragraph{Sharper fixed-design constants.}
At $t=M$, all tasks have two draws. Their exact variance relation
$\Var(A_i)=(L-2)\E D_i/(4L)$ sharpens Equation~\eqref{eq:main_partial}
to Theorem~\ref{thm:two} in Appendix~\ref{app:two}. A direct exponential
envelope for the three-point mean/disagreement law avoids the separate
variance/mean error split (Appendix~\ref{app:pair_envelope}). Its fixed
data-independent mixture is the \emph{Pair} interval. For $0<t<M$, a joint
moment bound averages over the random subset using Maclaurin's inequality;
a fixed mixture gives the \emph{Joint} refinement
(Appendix~\ref{app:joint_partial}). It retains Hoeffding at $t=0$ and Pair
at $t=M$. These finite-constant refinements preserve the sampling design, while
the Audit construction establishes the minimax upper rate. Appendices
\ref{app:pair_envelope} and~\ref{app:joint_partial} give their derivations and
finite comparisons.

\paragraph{Finite-regime boundary.}
For a cohort with $h_i$ positive paths in task $i$, write $H=\sum_i h_i$ and
$\bar q=M^{-1}\sum_i2h_i(L-h_i)/\{L(L-1)\}$. Let $x_t(d)$ be the Joint
boundary and $W_U(H)$ the exact expected width of the equal-tailed
hypergeometric interval after $M+t$ pooled labels.

\begin{proposition}[Finite-cohort dominance region]\label{prop:finite_region}
For $0<t<M$, $x_t$ is nondecreasing and concave,
$\E_R|I_{\rm Joint}|\le2x_t(t\bar q)/M$, and the computable threshold
$q_\star(H)=\sup\{q:2x_t(tq)/M\le W_U(H)\}$ is sufficient:
$\bar q\le q_\star(H)$ implies no larger expected width than fixed uniform.
\end{proposition}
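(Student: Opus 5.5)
The plan is to read the Joint interval as $\widehat\theta\pm x_t(d)/M$, clipped to $[0,1]$. Here $x_t(d)$ is the fixed-mixture boundary from Appendix~\ref{app:joint_partial}, a function only of the disagreement count $d$ among the $t$ audited pairs. Clipping can only shrink the interval, so $|I_{\rm Joint}|\le 2x_t(d)/M$ pathwise. The argument then has three steps: show that $x_t$ is nondecreasing and concave, apply Jensen's inequality to the random $d$, and compute $\E_R d$ exactly under the fixed random-subset design.

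\emph{Monotonicity and concavity.} I take $x_t(d)$ to be the largest deviation $x$ that the joint exponential-moment (mixture) test fails to reject at level $\alpha$ given $d$. That is, $x_t(d)=\sup\{x:\ \Phi_t(x,d)\le\log(1/\alpha)\}$, where $\Phi_t$ is the log of the mixture envelope after Maclaurin averaging over the subset.

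Monotonicity should follow because a larger $d$ certifies larger within-task variance. The mixture envelope for the mean error therefore becomes less stringent, so $\Phi_t(x,\cdot)$ is nonincreasing in $d$ and the feasible set of $x$ grows.

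For concavity, I would show that the superlevel structure is convex. If $\Phi_t$ is jointly convex in $(x,d)$ (after a reparametrisation if needed), then $\{(x,d):\Phi_t\le c\}$ is convex, and its upper boundary $x_t(d)$ is a concave function of $d$. This is the standard implicit-function argument, and a mixture of exponentially tilted terms is typically log-convex in the tilting variables.

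I expect this to be the main obstacle. The Maclaurin averaging and the fixed mixture weights must be checked to preserve joint convexity in the relevant parametrisation, for example $x$ entering linearly and $d$ entering through a convex penalty like $u-d+d\log(d/u)$. If joint convexity fails directly, I would instead verify $x_t''\le0$ using the implicit derivative formula. Alternatively, I would extend $x_t$ from the integers $\{0,\dots,t\}$ by its least concave majorant, which still dominates the true width bound.

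\emph{Jensen and the expectation of $d$.} Given concavity and monotonicity, $\E_R x_t(d)\le x_t(\E_R d)$. Conditional on the audited set $S$, each audited task $i$ draws two distinct uniform paths. Their labels disagree with probability
\[
\frac{2h_i(L-h_i)}{L(L-1)}.
\]
Since $S$ is a uniform $t$-subset, each task is audited with probability $t/M$, so
\[
\E_R d=\frac tM\sum_i \frac{2h_i(L-h_i)}{L(L-1)}=t\bar q.
\]
This yields $\E_R|I_{\rm Joint}|\le 2x_t(t\bar q)/M$.

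\emph{Threshold.} Because $x_t$ is nondecreasing, the map $q\mapsto 2x_t(tq)/M$ is nondecreasing. Its sublevel set $\{q: 2x_t(tq)/M\le W_U(H)\}$ is therefore an interval starting at $0$, with supremum $q_\star(H)$. Continuity of the concave $x_t$ on the interior shows that the supremum is attained, and one-sided continuity handles the endpoint.

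Hence $\bar q\le q_\star(H)$ gives
\[
\E_R|I_{\rm Joint}|\le 2x_t(t\bar q)/M\le W_U(H),
\]
where $W_U(H)$ is the fixed-uniform expected width. The computability claim holds because $W_U(H)$ is a finite hypergeometric sum and $x_t$ is a one-dimensional root.
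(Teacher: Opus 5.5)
Your route is the paper's. You get concavity of $x_t$ from convexity of a sublevel set whose upper boundary is $x_t$, then apply Jensen with $\E_R d=t\bar q$ (computed exactly as the paper does), then use monotonicity to make the $q$-section an initial interval.

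\textbf{The gap: the concavity step is left conditional.} You flag it as the main obstacle and never establish it, because you misdescribe $x_t$ in three ways:
\begin{itemize}
\item Maclaurin averaging does not enter the boundary. It was used only to prove that the mixture has expectation at most one.
\item The level is $2/\alpha$, not $1/\alpha$.
\item $d$ does not enter through $u-d+d\log(d/u)$. That penalty belongs to the Audit interval's $U(d,\delta;t)$.
\end{itemize}

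\textbf{How to close it.} By Equation~\eqref{eq:joint_inverse}, $x_t(d)$ solves $G(x,d)=\log(2/\alpha)$, where
\[
G(x,d)=\log\Bigl(w_*+\sum_j w_j\exp\{u_jx-\psi_{\rho,L}(u_j)d\}\Bigr),
\]
with fixed data-independent $u_j$ and $w_j$. This is a log-sum-exp of functions affine in $(x,d)$; the constant $w_*$ is the exponential of a constant. It is therefore jointly convex, with no reparametrisation needed. It is strictly increasing in $x$ and tends to $\log w_*<\log(2/\alpha)$ as $x\to-\infty$. Hence $\{(x,d):0\le d\le t,\ G(x,d)\le\log(2/\alpha)\}$ is convex and equals the hypograph of $x_t$, so $x_t$ is concave.

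\textbf{Monotonicity.} This should come from $\psi_{\rho,L}\ge0$, which the $\max\{0,\cdot\}$ in its definition guarantees, so $G$ is nonincreasing in $d$. Your heuristic that larger $d$ certifies more variance is not a proof.

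\textbf{Your fallbacks.} Neither is needed. The least-concave-majorant fallback would also not prove the proposition as stated. It does not show that $x_t$ itself is concave, and it would bound $\E_R|I_{\rm Joint}|$ by a different function from the $x_t$ appearing in $q_\star(H)$, giving a different threshold.

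With the explicit form of $G$ in place, your remaining steps go through as written.
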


Appendix~\ref{app:finite_region} gives the proof, exact sums and selector
corollary. The region describes sufficient cohort structure for a gain.
Evaluating the condition with full-cohort $H,\bar q$ explains the observed
regime; a future prospective selector could instead use observable disagreement
proxies. Outside the sufficient region, the width ordering remains
an empirical question.

\section{Budget-feasible designs and interval comparisons}\label{sec:designs}

\paragraph{Complete-path baselines.}
Uniform fixed sampling completes $\min(N,\lfloor B/K\rfloor)$ paths chosen
without replacement. It is unbiased and has an exact hypergeometric interval.
Uniform adaptive sampling uses saved cost to draw more complete paths, stopping
when fewer than $K$ budget units remain; we use a finite-population confidence
sequence \citep{waudby2020}. Task-balanced variants randomly permute paths
within each task and tasks within each round. The fixed-count variant is
unbiased; its interval combines bounded-variable and finite-complement
concentration. The cost-stopped task-balanced average is retained as a strong
practical MSE baseline; statistical intervals are reported only for methods
with a corresponding valid construction.

\paragraph{Earlier continuation comparisons.}
Pooled and hierarchical progressive policies thin unresolved paths while
inverse-probability weighting preserves the fixed-cohort mean; a pathwise
reservation guarantees at least one completion. A simpler full-prefix policy
spends at most a quarter of $B$ on a common prefix and uses the remainder for
fixed task-balanced terminal sampling. These methods test whether early-event
cost savings help. Their schedules,
weights, hard-budget proofs, and interval restrictions appear in
Appendices~\ref{app:designs} and~\ref{app:prefix}.

\paragraph{Use all purchased label information.}
If $P$ distinct paths have known label one and $Z$ have known label zero,
the target certainly lies in $[P/N,1-Z/N]$. We intersect every interval with
this feasible range. The refinement costs no extra observations and preserves
coverage; it is applied uniformly to all methods. It also gives the
cost-stopped balanced heuristic a purely
logical 100\% certificate, distinguished from the statistical intervals.

\section{Finite-budget evidence}\label{sec:experiments}

The empirical question is whether the constructive certificates improve width
at usable budgets. The LiveCodeBench replay compares the theory's fixed
designs at equal charged label counts. For this fixed-cohort target, replay
reveals only each policy's selected records and measures error against the
fully observed grid mean. It directly evaluates the sampling and inference
design without generating new model outputs. Synthetic cohorts isolate how
finite size and within-task disagreement shape the transition predicted by
the theory, while earlier response and agent studies test transfer across
different horizons and event structures. Before inspecting LiveCodeBench
correctness labels, we fixed the source revision, eligibility rule, budgets,
designs, and random seeds; Appendix~\ref{app:lcb} records the protocol and
identifies the subsequent explanatory analyses.

\paragraph{Measurements.}
We report conditional bias, MSE, interval width and coverage, charged cost,
and hard-budget violations. Each method gets an independent stable RNG stream.
Replay-bootstrap intervals quantify Monte Carlo uncertainty conditional on the
fixed bank; generalization to new models or tasks is a separate question.
Zero-error and census cases are reported separately, without
epsilon-stabilized ratios. Complete method grids and run-level results appear
in the appendices.

\subsection{Joint converts replication into finite-budget precision}
\label{sec:lcb_results}

\paragraph{Equal-cost LiveCodeBench comparison.}
Sixteen released
model panels contain all 880 tasks and at least five evaluated outputs per
task; we use the first five binary labels, giving $M=880,L=5,K=1$.
Every design purchases exactly $M+t$ labels, with 1000 evaluator
randomizations per panel and budget. Replay exposes only the records selected
by a policy and scores them against the fully observed grid mean, directly
testing allocation and confidence construction at equal observed-label cost.

\paragraph{More information from the same 1100 outputs.}
Joint beats exact fixed-uniform mean width in 9/16 panels at $t=15$, 14/16 at
$t=88$, and 15/16 at $t=220$; the corresponding median width ratios are
0.966, 0.807, and 0.694. At $t=220$, both methods use 1100 labels, and the
median MSE ratio is 0.130. Joint therefore reduces median mean interval width
by 30.6\% and median MSE by 87.0\% without increasing the evaluation budget.
Its MSE is lower in every panel at every tested replication budget. At $t=440$ and
$880$, median width ratios remain 0.673 and 0.677, with 14/16 width wins.

\begin{figure}[t]
\centering
\begin{tikzpicture}[font=\scriptsize]
  \node[anchor=west,font=\bfseries] at (0,3.15) {(a) Sharp certification laws};
  \node[draw=blue!65!black,rounded corners=2pt,fill=blue!4,
        align=left,text width=5.3cm,inner sep=5pt,anchor=north west] at (0,2.9) {
    \textbf{Every task observed}\hfill fixed random-subset audit\\[-1pt]
    $\displaystyle \Theta_{\alpha,L}\!\left([M(t+1)]^{-1/2}\right)$\\[-1pt]
    $t=0:\ M^{-1/2}$ \qquad $t=\rho M:\ M^{-1}$
  };
  \node[draw=teal!70!black,rounded corners=2pt,fill=teal!4,
        align=left,text width=5.3cm,inner sep=5pt,anchor=north west] at (0,1.45) {
    \textbf{Task omission allowed}\hfill randomized omission\\[-1pt]
    $\displaystyle \Theta_{\alpha,L}\!\left([M(t+\sqrt M)]^{-1/2}\right)$\\[-1pt]
    $t=0:\ M^{-3/4}$ \qquad $t=\rho M:\ M^{-1}$
  };

  \begin{scope}[xshift=7.35cm,x=0.55cm,y=1.55cm]
    \node[anchor=center,font=\bfseries] at (4.0,2.08) {(b) LiveCodeBench at equal label budgets};
    \draw[->] (0,0) -- (8.45,0) node[right,align=left] {$t$\\[-2pt]{\tiny log scale}};
    \draw[->] (0,0) -- (0,1.75);
    \node[rotate=90,anchor=south] at (-0.92,0.88) {Joint / uniform mean width};
    \foreach \y/\lab in {0.5/{0.5},1/{1.0},1.5/{1.5}} {
      \draw (-0.08,\y) -- (0.08,\y);
      \node[left] at (-0.08,\y) {\lab};
    }
    \draw[dashed,gray] (0,1) -- (8.2,1);
    \node[gray,anchor=south east] at (8.2,1) {parity};
    \foreach \x/\lab in {0/0,2.59/8,3.27/15,4.03/{29/30},5.30/88,6.37/220,7.19/440,8/880} {
      \draw (\x,-0.05) -- (\x,0.05);
      \node[below,rotate=45,anchor=north east] at (\x,-0.04) {\lab};
    }
    \draw[very thick,blue!70!black]
      plot[mark=*,mark size=1.5pt] coordinates {
        (0,1.562) (2.59,1.105) (3.27,0.966) (4.01,0.907) (4.05,0.900)
        (5.30,0.807) (6.37,0.694) (7.19,0.673) (8,0.677)
      };
    \node[blue!70!black,fill=white,inner sep=1pt] at (3.05,1.13) {9/16};
    \node[blue!70!black,fill=white,inner sep=1pt] at (5.35,0.96) {14/16};
    \node[blue!70!black,fill=white,inner sep=1pt] at (6.62,0.54) {15/16};
    \node[anchor=west,align=left,text=blue!70!black] at (0.25,0.20)
      {labels: panels with narrower intervals};
  \end{scope}
\end{tikzpicture}
\caption{The theory and its same-budget finite realization. Panel (a) gives
rates up to constants depending on $\alpha,L$; both lower bounds include
adaptive hard-budget policies. Panel (b) shows the median Joint/exact
fixed-uniform mean-width ratio across 16 LiveCodeBench panels ($M=880,L=5$),
with $M+t$ labels per method; horizontal positions use $\log_2(t+1)$ and ticks
show $t$. At $t=220$ (1100 labels), Joint is narrower in
15/16 panels and reduces median mean width by 30.6\% and MSE by 87.0\%.}
\label{fig:main_results}
\end{figure}

The finite result directly instantiates the rate theorem: the fixed
random-subset design supplies the observations, while Joint uses
their mean and disagreement jointly to obtain a tighter certificate. The two
width exceptions at one or more large budgets still retain MSE gains,
showing that finite width depends on within-task disagreement as well as the
asymptotic rate. Appendix~\ref{app:lcb} provides the full panel- and
budget-level results.

\subsection{Cohort structure explains the finite gains}
\label{sec:finite_evidence}

\paragraph{Information in disagreement.}
A same-observation explanatory comparison holds the sampled records fixed and
replaces Joint by a count-only Hull interval. Joint is narrower in all 16
panels at every tested $t\ge8$; median Joint/Hull width ratios are 0.732 at
$t=8$ and 0.425 at $t=220$. The comparison isolates the value of coupling the
sampled mean with disagreement evidence. On the nine-budget grid, all panels
reach mean width 0.04 with median requirements of 1100 labels for Joint and
1760 for uniform, a median saving of 660 labels. Appendix~\ref{app:accept_gap}
summarizes these descriptive grid comparisons across all targets and panels.

\paragraph{A finite dominance region.}
Proposition~\ref{prop:finite_region} relates expected width to full-cohort
prevalence and pair disagreement. Evaluated retrospectively on the
LiveCodeBench bank, its sufficient condition holds in 73 of 112 interior
panel--budget cells, and every certified cell is an observed width win. Five
additional wins lie outside the sufficient region. This alignment connects the
finite pattern to the theorem's mechanism and motivates future prospective
selectors based on observable disagreement proxies.

\paragraph{Controlled departures from within-task agreement.}
The synthetic study fixes $\theta=1/2,L=5$, varies
$M=128,\ldots,32768$, audit fraction, and seven mixed-task fractions up to
20\%. Numerical integration of exact finite laws yields Audit wins in 185 of
236 cells, while the dedicated Pair endpoint wins 57/59 simulated cells. The
transition from a 1.307 width ratio at $M=128$ to 0.661 at $M=512$ under
approximately 10\% auditing shows how the asymptotic advantage becomes useful
at finite size. Figure~\ref{fig:replication} and the complete grid are
in Appendix~\ref{app:synthetic}.

\subsection{Boundary and transfer evidence}

The same experiments locate where each theorem is operationally useful. The
order-attaining omission branch has favorable asymptotic width and is wider
than exact uniform in all 16 LiveCodeBench panels at its four active finite budgets;
the tested constants therefore favor task-covering Joint on this cohort. The
19-cell exact tiny-design program similarly places Joint at 1.116--1.818 times
the local numerical optimum, quantifying room for tighter finite constants.

The earlier public response and agent banks test different horizons and event
structures. Across 868 conditions and 2,492,000 replay executions, task-aware
progressive sampling has median MSE ratios 1.02 on responses and 1.30 on agents
against balanced adaptive completion, and width ratios 1.66 and 1.90 against
uniform adaptive completion. Later exploratory Pair and partial-Joint checks
identify additional gains in narrower operating regions. Together, these
comparisons locate where the sharp law and finite certificates are useful
across evaluation structures.
Appendices~\ref{app:empirical}--\ref{app:joint_partial} provide the full
tables, charged costs, coverage, zero-reference accounting, and the all-500
missing-outcome sensitivity analysis.

\section{Conclusion}

Replication closes a quantifiable gap between point accuracy and honest
certification. Requiring task coverage yields the sharp law
$[M(t+1)]^{-1/2}$; permitting omission changes it to
$[M(t+\sqrt M)]^{-1/2}$. Disagreement-based fixed designs attain these
orders, while Joint improves finite constants. At the same 1100-output
budget, Joint beats exact pooled uniform mean width in 15/16 LiveCodeBench
panels, with median width and MSE reductions of 30.6\% and 87.0\%.
The theoretical laws and this replay demonstrate how replication can improve
certification within a fixed evaluation budget.

The formal scope is worst-pure-cohort expected width under honesty over all
fixed cohorts, for fixed $L\ge3$ and $0<\alpha\le1/12$. Finite constants,
near-pure rates, and population reliability remain complementary targets. The
strongest public replay uses $K=1$, so it isolates task allocation and
inference; the tested finite omission rule and the earlier response/agent banks
identify regimes where task-covering Joint is preferable. Prospective stopping
and realized token or financial cost require their own validation.

Within this scope, the paper turns replication from an informal recommendation
into a quantitative design choice. The two sharp laws specify what confidence
width is achievable for a given hard budget, and the constructive interval
shows how that information can translate into more precise model evaluation
without purchasing additional outputs.

\clearpage
\subsection*{AI use statement}
The core theoretical ideas, mathematical claims, derivations, proofs, and
experimental design were developed by the author. Generative AI tools were
used to support research exploration and literature review, assist with data
organization, plotting, utility scripts, and general code editing, and polish
the language of the manuscript. The author has completed a manual review of
the manuscript, proofs, code, results, and all AI-assisted material, and takes
full responsibility for the final content.

\subsection*{Ethics statement}
This work uses public research outcome records and does not recruit human
participants or deploy an agent into a live external environment. The replay
code consumes correctness and task identifiers, not private communications.
The response-bank and agent-artifact licenses and primary sources are retained
in the reproducibility record. Efficiency estimates must not be interpreted
as safety guarantees or as evidence that unobserved failures did not occur.

\subsection*{Reproducibility statement}
The supplement records the prospective protocol, immutable source revisions,
download and output hashes, code archive, fixed seed derivation, raw grading
audits, and all replay outputs. Appendix proofs state the assumptions required
by each confidence calculation. The code enforces prefix-only access and
charges actual observed response/episode counts. No paid model calls or new
model training are required to reproduce the replays.

\bibliography{references}
\bibliographystyle{iclr2027_conference}
\appendix
\raggedbottom
\section*{Supplementary Material (Appendix)}
\noindent The remainder of this document contains the supplementary proofs,
experimental details, complete results, and reproducibility information for
the paper.

\section{Information-limit proofs}\label{app:limits}

\subsection{Proof of Theorem~\ref{thm:budget}}

Restrict the cohort to $T_j\in\{K,K+1\}$ for every path. Every prefix shorter
than $K$ has the same observation and cost under all these cohorts. A complete
path costs exactly $K$, whether its terminal label is zero or one. Consequently
no budget-feasible policy obtains more than $m=\min(N,\lfloor B/K\rfloor)$
terminal labels. If it obtains fewer, supplying additional labels up to $m$
can only decrease the minimum Bayes risk, so a lower bound for $m$ labels is
also a lower bound for that policy.

Choose $H$ uniformly from $\{0,\ldots,N\}$ and, conditional on $H$, choose the
positive-label set uniformly among subsets of size $H$. No task identity or
unobserved path identity breaks this symmetry. Adaptive selection of an
unqueried path therefore gives the same label experiment as sampling without
replacement. Equivalently, first choose $p\sim\operatorname{Uniform}[0,1]$
and then choose the $N$ labels independently conditional on $p$. This induces
the same uniform prior on $H$.

After $m$ labels with $s$ positives, the number of positives in the remaining
$N-m$ paths has a beta-binomial distribution with parameters
$(N-m,s+1,m-s+1)$. Its variance is
\begin{equation}
 \Var(H\mid s)=
 \frac{(N-m)(s+1)(m-s+1)(N+2)}{(m+2)^2(m+3)}.
\end{equation}
The marginal distribution of $s$ is uniform on $\{0,\ldots,m\}$, and
\begin{equation}
 \frac1{m+1}\sum_{s=0}^{m}(s+1)(m-s+1)=\frac{(m+2)(m+3)}6.
\end{equation}
Under squared loss the posterior mean minimizes Bayes risk. Dividing the
expected posterior variance by $N^2$ gives Equation~\eqref{eq:bayes}; minimax
risk is at least this Bayes risk. This is an elementary finite-population
lower-bound construction, not a new general minimax technique.

If $B<K$, compare the all-zero and all-one terminal-label cohorts, again
placing decisive events only at $K$. Every feasible transcript is identical.
For any estimator $A$ with that common observation law,
$\max\{\E A^2,\E(A-1)^2\}\ge1/4$.

\subsection{Proof of Theorem~\ref{thm:interval}}

Again set $T_{ir}=K$ for label zero and $T_{ir}=K+1$ for label one. All observed
nonterminal prefixes are now identical and every completed path costs $K$.
Consider two priors on fixed cohorts:
\begin{itemize}
\item Prior A independently chooses Bernoulli($q$) bits $U_1,\ldots,U_M$ and makes
      every path in task $i$ have label $U_i$.
\item Prior B independently chooses a uniform subset of $h=\lfloor L/2\rfloor$
      positive paths in each task, with the other $L-h$ paths negative.
\end{itemize}
Under A the target is $\bar U=M^{-1}\sum_iU_i$, and one-per-task averaging
recovers it without error. Under B every realized cohort has target $q=h/L$.
One uniformly selected label per task has the same joint distribution of
labels and selected identities under both priors. The same remains true if
task order depends on previously observed tasks, because unobserved tasks
are independent under each prior. Internal interval randomization may be
included in this common observation law.

Uniform design coverage over fixed cohorts implies coverage after integrating
over either prior. Hence under the common observation law
\begin{equation}
 \Prb(\bar U\in I)\ge1-\alpha,
 \qquad \Prb(q\in I)\ge1-\alpha.
\end{equation}
Both points belong to $I$ with probability at least $1-2\alpha$. On this event
its length is at least $|\bar U-q|$. Conditioning the event probability on
$Z=\sum_iU_i$ gives an admissible function $w$ in Equation~\eqref{eq:trim}.
Thus the average expected interval length under prior A is at least $b_{M,q}$;
some fixed pure-task cohort attains at least that average.

The infimum defining $b_{M,q}$ keeps the smallest $1-2\alpha$ probability mass of
$|Z/M-q|$, allowing fractional mass at a discrete boundary. The central limit
theorem gives $\sqrt{M/[q(1-q)]}(\bar U-q)\Rightarrow Z_0\sim N(0,1)$. Its uniformly
bounded second moment supplies uniform integrability. The corresponding
trimmed expectation therefore converges to
\begin{align}
 \sqrt{q(1-q)}\E\bigl[|Z_0|\ind\{|Z_0|\le z_{1-\alpha}\}\bigr]
 &=2\sqrt{q(1-q)}\int_0^{z_{1-\alpha}}z\frac{e^{-z^2/2}}{\sqrt{2\pi}}\,dz\\
 &=\frac{2\sqrt{q(1-q)}(1-e^{-z_{1-\alpha}^2/2})}{\sqrt{2\pi}}.
\end{align}
At $M=130$, $q=1/2$, and $\alpha=.05$, exact binomial summation gives
$b_{M,q}=0.0258917180$. The result concerns the existence of a difficult pure-task
cohort for any honest procedure, not a lower bound at every cohort. Additional
terminal labels or informative side information can invalidate the
indistinguishability construction. Classical one-per-stratum variance
limitations and collapsed-stratum methods are discussed by \citet{aubry2024}.

\subsection{Sharper full-replication specialization}\label{app:two}

\begin{theorem}[Two-replicate certification]\label{thm:two}
Independently sample two paths without replacement in each task, with $L\ge3$.
Let $A_i$ be their mean, $D_i$ their disagreement indicator, $d=\sum_iD_i$,
and $\widehat\theta=M^{-1}\sum_i A_i$.
Define $U(d,\delta)$ as the largest $u\in[d,M]$ satisfying
$u-d+d\log(d/u)\le\log(1/\delta)$, with $0\log0=0$.
Set
\begin{equation}\label{eq:disagreement}
 V_U=\frac{L-2}{4LM^2}U(d,\alpha/2),\quad
 c=\frac{L-2}{LM},\quad x=\log(4/\alpha),\quad
 r=\frac{cx}{3}+\sqrt{2V_Ux+(cx/3)^2}.
\end{equation}
Then $[\widehat\theta-r,\widehat\theta+r]\cap[0,1]$ has coverage at least
$1-\alpha$ for every fixed cohort. On every internally constant task cohort,
the estimate is exact and the width is $O_{\alpha,L}(M^{-1})$.
Conversely, for $0<\alpha<1/4$, every uniformly honest interval based on this
two-path design has, at the all-zero cohort,
\begin{equation}\label{eq:two_lower}
 \E_0|I|\ge\frac{1-\alpha-\alpha/(1-2/L)}{ML}.
\end{equation}
Thus the optimal worst-pure-cohort expected-width order is $M^{-1}$ for
fixed $L\ge3$. For $L=2$, two paths are census.
\end{theorem}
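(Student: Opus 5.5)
The plan has three parts: a concentration argument for coverage, a direct evaluation on internally constant tasks, and a two-point indistinguishability argument for the converse.

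\textbf{Coverage.} Fix a cohort with $h_i$ positive paths in task $i$. Tasks are sampled independently, so the errors $A_i-h_i/L$ are independent, centered and bounded. Their centered range is at most $(L-2)/L$: $A_i\in\{0,1/2,1\}$, pure tasks contribute zero, and for mixed tasks a short case check on $h_i$ gives the bound. Hence $M^{-1}(A_i-h_i/L)$ is bounded by $c=(L-2)/(LM)$.
\begin{enumerate}
\item The variance identity $\Var(A_i)=(L-2)\E D_i/(4L)$ follows from the hypergeometric pair law with $\E D_i=2h_i(L-h_i)/\{L(L-1)\}$. It gives
\[
\Var(\widehat\theta)=\frac{L-2}{4LM^2}\,\mu,\qquad \mu=\sum_i\E D_i .
\]
\item Two-sided Bernstein at total level $\alpha/2$, using $x=\log(4/\alpha)$, gives $|\widehat\theta-\theta_{\mathcal C}|\le r(\mu)$. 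Here $r(\mu)$ is the radius in Equation~\eqref{eq:disagreement} with $U$ replaced by $\mu$, and $r$ is increasing in $\mu$.
\item To replace the unknown $\mu$, note that $d=\sum_iD_i$ is a sum of independent Bernoullis with mean $\mu$. The lower-tail Chernoff bound is $\Prb(d\le k)\le\exp\{-(\mu-k+k\log(k/\mu))\}$ for $k\le\mu$. Since the exponent is increasing in $\mu$ beyond $k$, we get $\Prb\{\mu>U(d,\alpha/2)\}\le\alpha/2$.
\end{enumerate}
A union bound and monotonicity of $r$ in its variance argument then give coverage at least $1-\alpha$. Intersecting with $[0,1]$ cannot remove $\theta_{\mathcal C}$. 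No independence between $\widehat\theta$ and $d$ is needed.

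\textbf{Pure cohorts.} If every task is internally constant, then $A_i=h_i/L$ exactly and $d=0$. Thus $U(0,\alpha/2)=\log(2/\alpha)$, and $r=O_{\alpha,L}(M^{-1})$ deterministically. For $L=2$, two paths exhaust each task, so the design is a census.

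\textbf{Converse.} This is the step that needs care: the two cohorts must be matched so that identities, not just labels, have the same law. Compare the all-zero cohort $P_0$ with a randomized alternative $P_1$.
\begin{enumerate}
\item Under $P_1$, a fixed task $i$ has exactly one positive path, chosen uniformly among its $L$ paths, and the target is $1/(ML)$.
\item The sampled pair in task $i$ misses the positive path with probability $1-2/L$.
\item Conditional on missing it, the positive path is uniform outside the sampled pair. The observed identities are then uniform over pairs, the labels are all zero, and the other tasks are unchanged. So the conditional transcript law, including interval randomization, equals $P_0$.
\end{enumerate}
Honesty over fixed cohorts integrates over the random positive location. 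Hence
\[
\alpha\ge P_1\{1/(ML)\notin I\}\ge(1-2/L)\,P_0\{1/(ML)\notin I\},
\]
so $P_0\{1/(ML)\notin I\}\le\alpha/(1-2/L)$. Also $P_0\{0\notin I\}\le\alpha$. With $P_0$-probability at least $1-\alpha-\alpha/(1-2/L)$, the interval contains both $0$ and $1/(ML)$ and so has length at least $1/(ML)$. This yields Equation~\eqref{eq:two_lower}, which is positive for $\alpha<1/4$ and $L\ge3$ (the worst case $L=3$ requires $\alpha<1/4$). Combined with the upper bound, the worst-pure-cohort order is $M^{-1}$.
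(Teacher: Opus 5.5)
Your proposal is correct and follows the paper's proof essentially step for step. Both use the variance identity $\Var(A_i)=(L-2)\E D_i/(4L)$, Bernstein with range $c$ at level $\alpha/2$, and a Poisson--Chernoff lower-tail inversion for $d$ at level $\alpha/2$, joined by a union bound and monotonicity of the radius; the converse uses the same single-uniform-positive alternative with miss probability $1-2/L$. The differences are cosmetic. The paper gets the range bound from the identity $A_i-p_i=(L-2)(A_i-B_i)/L$, with $B_i$ the mean of the unsampled labels, rather than a case check. Also, a complete Chernoff inversion needs the exponent to decrease in the threshold $k$ below $\mu$, so that $\{\mu>U(d,\alpha/2)\}$ is a lower tail of $d$; monotonicity in $\mu$ alone is not enough.
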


\paragraph{Proof.}

Write $p_i=L^{-1}\sum_rY_{ir}$. Under independent simple random sampling of
two labels per task,
\begin{equation}
 \E D_i=\frac{2L}{L-1}p_i(1-p_i),\qquad
 \Var(A_i)=\frac{L-2}{2(L-1)}p_i(1-p_i)
           =\frac{L-2}{4L}\E D_i.
\end{equation}
Hence $V=\Var(\widehat\theta)=(L-2)\lambda/(4LM^2)$, where
$\lambda=\sum_i\E D_i$. The $D_i$ are independent Bernoulli variables but
need not be identically distributed. For $t\ge0$,
\begin{equation}
 \E e^{-t\sum_iD_i}
 =\prod_i[1+(e^{-t}-1)\E D_i]
 \le\exp\{(e^{-t}-1)\lambda\}.
\end{equation}
Optimizing the Chernoff inequality at a threshold $a\le\lambda$ gives
\begin{equation}
 \Prb\{d\le a\}\le
 \exp\{-\lambda+a-a\log(a/\lambda)\}.
\end{equation}
The exponent is monotone in $a$ below $\lambda$. Inverting this lower-tail
bound yields $\Prb\{\lambda>U(d,\delta)\}\le\delta$, including the discrete
thresholds and the case $d=0$. In particular,
$U(0,\delta)=\min\{M,\log(1/\delta)\}$.

The random variables $(A_i-p_i)/M$ are independent and centered. Their
absolute values are at most $c=(L-2)/(LM)$: if the selected two-label mean is
$A_i$ and the mean of the remaining labels is $B_i$, then
$A_i-p_i=(L-2)(A_i-B_i)/L$. The two-sided Bernstein inequality therefore gives
\begin{equation}
 \Prb\left\{|\widehat\theta-\theta_{\mathcal C}|>
       cx/3+\sqrt{2Vx+(cx/3)^2}\right\}\le2e^{-x}.
\end{equation}
Use $\delta=\alpha/2$, $x=\log(4/\alpha)$ and a union bound. On the joint
event, $V\le V_U$ and the radius is increasing in $V$. No independence
between the sample mean and the estimated variance is needed. Clipping to
$[0,1]$, or to any deterministic feasible set containing the target, cannot
reduce coverage. On a pure-task cohort $d=0$ and $\widehat\theta$ is exact,
giving the deterministic upper bound
\begin{equation}\label{eq:pure_upper}
 |I|\le\frac{2}{M}\left[\frac{(L-2)x}{3L}
 +\sqrt{\frac{(L-2)x\log(2/\alpha)}{2L}
       +\left(\frac{(L-2)x}{3L}\right)^2}\right].
\end{equation}
For one path, a bounded-variable Hoeffding interval supplies a uniform
$O(M^{-1/2})$ upper bound, matching Theorem~\ref{thm:interval}'s order.

For the two-path lower bound, compare the all-zero cohort to a prior that
places exactly one positive path at a uniformly random location in a single
fixed task. Under that alternative prior, the probability of missing the
positive in two uniform observations is $q_0=1-2/L$. Conditional on missing
it, the entire observed identity/label transcript has the all-zero law:
each possible sampled pair misses a uniform positive location with the same
probability $q_0$. Place decisive events at $K$ so all shorter prefixes and
costs are also uninformative. The alternative target is $\Delta=1/(ML)$.
Uniform honesty over each alternative cohort implies honesty under the
mixture, and therefore
\begin{equation}
 1-\alpha\le q_0\Prb_0(\Delta\in I)+(1-q_0),
 \qquad \Prb_0(0\in I)\ge1-\alpha.
\end{equation}
Both endpoints belong to $I$ with probability at least
$1-\alpha-\alpha/q_0$, proving Equation~\eqref{eq:two_lower}. Its coefficient
is positive for every $L\ge3$ if $\alpha<1/4$.

\subsection{Fixed-prefix extension of the disagreement certificate}

Condition on a fully observed prefix. Suppose all $G$ remaining active tasks
are sampled, with active-path counts $n_i$, predetermined terminal counts
$1\le m_i\le n_i$, and original population size $N$. The active-task weights
are $w_i=n_i/N$; previously resolved paths have label zero. The target is
$\sum_iw_ip_i$ and the unbiased estimate is $\sum_iw_i\bar Y_i$.
The counts may depend on observed prefix risk sizes, but not on the terminal
sample outcomes. Define $D_i$ using the first two uniform terminal draws when
$m_i\ge2$. For noncensus replicated tasks,
\begin{equation}
 \Var(w_i\bar Y_i)
 =\frac{w_i^2(n_i-m_i)}{m_i(n_i-1)}p_i(1-p_i)
 =a_i\E D_i,\qquad
 a_i=\frac{w_i^2(n_i-m_i)}{2n_i m_i}.
\end{equation}
Let $a_* =\max_i a_i$ over those tasks and $S=\sum_i a_iD_i/a_*$.
For $b_i=a_i/a_*\in[0,1]$, convexity gives
$e^{-tb_i}-1\le b_i(e^{-t}-1)$, so the same negative-MGF argument bounds
$\Lambda=\sum_i a_i\E D_i/a_*$ by the Poisson--Chernoff inversion at $S$.
Replace the cap $M$ in $U$ by $\sum_i a_i/a_*$; this inversion also permits
noninteger $S$. Multiply its upper bound by $a_*$ and, if smaller, use the
deterministic variance bound
$\sum_iw_i^2(n_i-m_i)/[4m_i(n_i-1)]$.
For tasks with $m_i=1<n_i$, add their deterministic bound $w_i^2/4$;
census tasks contribute zero. If no replicated noncensus task remains,
use only these deterministic bounds.

The centered mean increments satisfy
$|w_i(\bar Y_i-p_i)|\le(n_i-m_i)/N$. Substituting their maximum for $c$
in Equation~\eqref{eq:disagreement} establishes conditional coverage, and
averaging over the prefix establishes unconditional design coverage.
The deterministic feasible range is
$[\sum_i s_i/N,\, (\sum_i n_i-\sum_i(m_i-s_i))/N]$, where $s_i$ is the
observed positive count. The implementation intersects with this range.
If only a subset of active tasks is sampled, or counts are chosen by recycling
terminal outcome-dependent cost savings, this proof does not apply. We do
not attach this interval to the cost-stopped task-balanced heuristic.

\section{Complete design and inference specification}\label{app:designs}

\subsection{Uniform completion and finite-population confidence sequences}

For a binary population of size $N$ with $H$ positives, $s$ positives among a
fixed uniform sample of size $m$ have law $\HG(N,H,m)$. We invert both tails at
level $\alpha/2$, respecting the integer feasible range $s\le H\le N-m+s$.
At census the interval is the observed point. The fixed-count design completes
$m=\min(N,\lfloor B/K\rfloor)$ paths; earlier decisive events cannot prevent
completion of this predetermined count.

For the cost-adaptive design let $s_m$ denote positives among the first $m$
paths of an independently uniform permutation. Its ordered-label likelihood
under candidate $H$ is
\begin{equation}
 P_H(Y_{1:m})=\frac{(H)_{s_m}(N-H)_{m-s_m}}{(N)_m},
 \qquad (a)_b=a(a-1)\cdots(a-b+1).
\end{equation}
The uniform prior on $H\in\{0,\ldots,N\}$ gives ordered predictive mass
$Q(Y_{1:m})=\operatorname{Beta}(s_m+1,m-s_m+1)$. For the true $H$, the ratio
$Q/P_H$ is a nonnegative test supermartingale, as in the prior-posterior-ratio
construction of \citet{waudby2020}. Retain $H$ when $Q/P_H<1/\alpha$.
At each draw, the next path is uniform among unqueried paths, so its conditional
positive probability remains $(H-s_m)/(N-m)$ even when previous path costs
are in the history. This permits stopping based on those costs. It does not
make $s_m/m$ at the stopping time unbiased.

The implementation batches as many next complete paths as the remaining
worst-case budget permits, charges actual cost, and repeats while at least
$K$ units remain. Batching uses no unobserved outcomes and is equivalent to
revealing the corresponding segment of the fixed random permutation.

\subsection{Task-balanced completion}

Independently permute the $L$ paths within each task. For each repetition round,
independently permute the $M$ task identities and query their next paths in
that order. At a fixed completed-path count $m<M$, the observed tasks form a
uniform sample of $m$ tasks, each with one uniform inner draw. At $m\ge M$,
every task has a count $m_i$ determined independently of outcomes, and the
estimate is $M^{-1}\sum_i\bar Y_i$. Conditional on the allocation counts it is
unbiased. If $m<M$, average only the observed tasks; uniform outer sampling
restores unbiasedness.

For $m<M$ use the bounded Chernoff interval from
Appendix~\ref{app:prefix} with no prefix. For $m\ge M$ apply that interval to
the $M$ independent task averages and intersect it, using $\alpha/2$ for each
bound, with a Hoeffding interval of radius
\begin{equation}
 \sqrt{2V\log(4/\alpha)},\qquad
 V=\sum_{i=1}^M\frac{\min(m_i,L-m_i)}{4M^2m_i^2}.
\end{equation}
The complement factor gives zero width at census. For cost-stopped task
balancing the observed counts depend on outcomes, and some tasks may be
unobserved. We retain the average of available per-task means as a practical
baseline but attach no design-unbiasedness or statistical CI claim. Exhaustive examples
with two or three tasks show absolute bias as large as $0.125$; these are
counterexamples to a general claim, not estimates of bias in the public banks.

\subsection{Pooled progressive weights and budget reservation}

Before a uniform thinning there are $n$ active paths with common weight $W$.
Let $H$ of them have terminal label one. Retain $m$ paths uniformly and write
$S$ for the number of terminal positives retained, whether or not they are yet
observed. The latent contribution changes from $WH/N$ to $WnS/(Nm)$ and has
conditional expectation $WH/N$. Removing a path after a decisive event changes
no latent terminal-positive total. Thus these weighted totals form a martingale
over the evaluator's random choices, conditional on the entire potential
cohort. At termination the latent total equals the observable estimator.
The product of conditional selection fractions is used recursively; no
assertion equates it with a marginal inclusion probability \citep{lumley2024}.

For an advance from $a$ to $b$, cost is at most $m(b-a)$. Enforcing
\begin{equation}\label{eq:reserve}
 m\le\left\lfloor\frac{R-(K-b)}{b-a}\right\rfloor
\end{equation}
when the remaining budget is $R$ leaves $K-b$ units for one surviving path.
At the first stage $B\ge K$ permits at least one path. Induction preserves that
reservation until depth $K$; if the active set becomes empty the score
contribution is exactly zero and the policy stops. The estimator is bounded
by one because after every thinning/advance the weighted active count is
nonincreasing from its initial value $N$.

\subsection{Frozen working-model allocation}

The depth schedule is the sorted unique set
$\{\min(2^j,K):0\le j\le\lceil\log_2K\rceil\}\cup\{K\}$. When $K>1$, the
initial proposal retains $\max(1,\lfloor B/2\rfloor)$ paths, subject to the risk
set and reservation cap. Later proposals fit a working conditional-survival
model $[(1+b)/(1+a)]^{-\gamma}$ to at most the three most recent observed stages.
Each survivor and decisive-event count receives a $1/2$ pseudocount.
The negative binomial-count log-likelihood is minimized over
$\gamma\in[0.005,5]$ with a bounded scalar optimizer. No calibration guarantee
for this model is assumed or used in inference.

At current depth $a$, set $\mu(u)=[(1+u)/(1+a)]^{-\gamma}$ for future depths.
For each remaining interval $(a_\ell,b_\ell]$ define
\begin{equation}
 c_\ell=\sum_{u=a_\ell}^{b_\ell-1}\mu(u),\qquad
 v_\ell=\mu(b_\ell)^{-1}-\mu(a_\ell)^{-1}.
\end{equation}
The usual working variance/cost allocation minimizes
$\sum_\ell v_\ell/p_\ell$ subject to
$\sum_\ell c_\ell p_\ell\le R/n$ and
$1\ge p_1\ge p_2\ge\cdots>0$. Pool adjacent intervals whenever their
$v/c$ ratios increase, assigning each pooled block
$p=\min(1,\lambda\sqrt{\sum v/\sum c})$. A fixed 50-step bisection chooses
$\lambda$. Only the first proposed inclusion is applied; the model is refit
after new observations. The proposed count is
$\max(1,\lfloor n p_1\rfloor)$, clipped to the hard reservation cap and $n$.
This standard allocation construction is a scheduling heuristic here, not a
new optimality theorem. Its misspecification does not change the unbiasedness
argument.

\subsection{Forward confidence inversion: restricted policy class}

Let $G_a$ and $G_b$ be the numbers of paths in the original cohort with event
times exceeding depths $a$ and $b$. For an anonymous policy using only aggregate
past event counts and costs, the active paths are a uniform subset of the
original $G_a$ survivors conditional on that aggregate history. All paths that
survive $a$ have identical earlier event/cost contributions. Uniform thinning
to size $m$ preserves this symmetry, so the next observed survivor count obeys
\begin{equation}
 s\mid\text{aggregate past}\sim\HG(G_a,G_b,m).
\end{equation}
This statement can fail if allocation uses features distinguishing survivors.

Let $[L_a,U_a]$ be the current confidence range for $G_a$. For each feasible
population size invert $\HG(G_a,G_b,m)$ in its success-total argument $G_b$,
using tails $\alpha/(2J)$, where $J$ is the predetermined maximum number of
depth intervals. Hypergeometric endpoints are nondecreasing in population
size, so the union over plausible $G_a$ is enclosed by using
$\max(L_a,m)$ for the lower endpoint and $U_a$ for the upper. If the feasible
set is empty, return the full population interval conservatively. A union
bound over stages gives final coverage at least $1-\alpha$. Extinguished
stages are padded to the predetermined $J$, so the error allocation never
depends on the observed stopping stage. The code checks endpoint monotonicity
and the conditional transition law by exhaustive enumeration.

\subsection{Hierarchical progressive design and mixture interval}

Let $A$ be the global task-selection weight and $W_i$ a retained task's
within-task weight. Its latent terminal-positive count is $H_i$. The current
weighted target is $A\sum_iW_iH_i/N$. Initially $A=W_i=1$ and
$W_i n_i\le L$ remains true after all within-task thinnings and event removals.

Given a proposed total of $q$ retained paths, if $q<G$ uniformly retain $q$ of
the $G$ active tasks and multiply $A$ by $G/q$. Otherwise keep all active tasks.
Allocate at least one path per retained task by deterministic water filling,
then uniformly retain $m_i$ of its $n_i$ paths and multiply $W_i$ by $n_i/m_i$.
Both operations preserve the latent target in conditional expectation.

If $R\ge G(K-a)$, reserve one terminal path per active task and impose
\begin{equation}
 q\le\left\lfloor\frac{R-G(K-b)}{b-a}\right\rfloor,\qquad q\ge G.
\end{equation}
This condition persists at subsequent stages because the active task count
cannot increase. Otherwise use the one-global-path reservation. Working-model
proposals and geometric depths are the same as in the pooled design.

For a uniform sample of $m$ out of $n$ values in $[0,1]$, the centered sample
sum has sub-Gaussian variance proxy $\min(m,n-m)/4$. Apply Hoeffding's
sampling-without-replacement comparison either to the sample or to its
complement \citep{hoeffding1963}. At a whole-task gate, values
$X_i=W_iH_i/L$ lie in $[0,1]$, and the increment coefficient is
$c=A G/(M q)$. Add $c^2\min(q,G-q)/4$ to the cumulative proxy $V$.
At a within-task gate the coefficient is $c_i=A W_i n_i/(N m_i)$, with
$A,W_i$ measured before that gate, and the increment contributes
$c_i^2\min(m_i,n_i-m_i)/4$. Identity gates contribute zero.

These proxies are known before their respective random gates. Standard
Gaussian mixing of the exponential test supermartingales \citep{howard2021} gives
\begin{equation}
 E_t=\sqrt{\frac{\rho}{\rho+V_t}}
 \exp\left\{\frac{(Z_t-\theta_{\mathcal C})^2}{2(\rho+V_t)}\right\},
 \qquad \rho=\frac{K}{4B}.
\end{equation}
The initial mixture scale is fixed before observing the cohort. At termination
$Z_t$ is the reported estimate, and inverting $E_t<1/\alpha$ yields radius
\begin{equation}
 \sqrt{(\rho+V_t)\{2\log(1/\alpha)+\log(1+V_t/\rho)\}}.
\end{equation}
Clip the interval to $[0,1]$; if no random gate occurs, the cohort score is
known exactly. This conservative bound uses no estimated outcome variance.

\section{Full-prefix sampling proof and uncertainty}\label{app:prefix}

Let $d$ be the full-cohort prefix depth, $C_d$ its actual charged cost,
$n_i$ the task risk-set sizes, and $G=|\{i:n_i>0\}|$. These are fixed
conditional on the cohort because the prefix queries every unresolved path.
The remaining budget safely finishes
$q=\min\{\sum_i n_i,\lfloor(B-C_d)/(K-d)\rfloor\}$ paths. If $q<G$,
uniformly select $q$ active tasks and one unresolved path per selected task.
Otherwise, distribute fixed counts $m_i\ge1$ by deterministic water filling
the observed $n_i$, then sample uniformly within tasks. Terminal savings are
not recycled. With sampled terminal averages $\bar Y_i$, the estimator is
\begin{equation}\label{eq:prefix}
 \widehat\theta_{\rm prefix}=
 \begin{cases}
 \dfrac{G}{Mq}\sum_{i\in S}\dfrac{n_i}{L}\bar Y_i,&q<G,\\[3pt]
 \dfrac1M\sum_{i:n_i>0}\dfrac{n_i}{L}\bar Y_i,&q\ge G.
 \end{cases}
\end{equation}
If no path remains or the prefix reaches $K$, report its exact census score.
This is an ordinary two-stage sampling estimator, not a new weighting identity.

\begin{proposition}[Full-prefix validity]\label{prop:prefix}
The full-prefix estimator is design-unbiased and spends at most $B$. If $q<G$,
write $\bar Z=q^{-1}\sum_{i\in S}(n_i/L)\bar Y_i$. A $1-\alpha$ interval is
$(G/M)$ times the set of $\mu\in[0,1]$ satisfying
\begin{equation}\label{eq:kl}
 q\kl(\bar Z,\mu)\le\log(2/\alpha).
\end{equation}
If all $G$ active tasks are sampled, the same bound uses $G$ task averages;
it may be intersected, with an error-probability split, with the
finite-complement bound below.
\end{proposition}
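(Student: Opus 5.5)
We establish the three claims of Proposition~\ref{prop:prefix} in order: unbiasedness, the budget bound, and the interval. Throughout, we condition on the full-cohort prefix. Because the prefix queries every unresolved path to depth $d$, the cost $C_d$, the risk-set sizes $n_i$, the active count $G$ and the count $q$ are all deterministic functions of the fixed cohort. Write $p_i$ for the mean terminal label among the $n_i$ unresolved paths of task $i$. Paths resolved during the prefix have terminal label zero, so the target is
\[
\theta_{\mathcal C}=M^{-1}\sum_{i:n_i>0}(n_i/L)p_i .
\]

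\textbf{Unbiasedness.} The within-task terminal draws are uniform without replacement, and the counts $m_i$ are fixed given the prefix. Hence $\E[\bar Y_i\mid S,\text{prefix}]=p_i$. In the case $q\ge G$, linearity immediately gives $\E\widehat\theta_{\rm prefix}=\theta_{\mathcal C}$. In the case $q<G$, $S$ is a uniform $q$-subset of the $G$ active tasks, so each task has inclusion probability $q/G$. The weight $G/(Mq)$ is then the Horvitz--Thompson factor that restores $\theta_{\mathcal C}$. Averaging over the prefix preserves unbiasedness.

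\textbf{Budget.} The prefix charges $C_d\le B$ for every outcome table, since the policy stops and reports the census score if the prefix exhausts $B$ or reaches $K$. Each completed terminal path then costs at most $K-d$ further units. Because $q\le\lfloor(B-C_d)/(K-d)\rfloor$, the total charge is at most $C_d+q(K-d)\le B$ pathwise. Terminal savings are never reused, so no outcome-dependent overspend can occur.

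\textbf{Interval.} In the case $q<G$, set $X_i=(n_i/L)\bar Y_i\in[0,1]$. Then $\bar Z$ is the mean of $X_i$ over a uniform without-replacement sample of $q$ tasks. Its conditional mean is
\[
\mu^\ast=G^{-1}\sum_{i:n_i>0}(n_i/L)p_i=(M/G)\,\theta_{\mathcal C}.
\]
The step requiring care is that the summands are themselves random, because of the inner sampling, and are drawn without replacement, so a plain i.i.d.\ Chernoff bound does not apply directly. We handle this in two stages.
\begin{itemize}
\item[(i)] Given $S$, the $X_i$ are independent $[0,1]$ variables. By convexity, $\E e^{\lambda X_i}\le 1-\E X_i+\E X_i e^{\lambda}$, so the moment generating function of each $X_i$ is dominated by that of a Bernoulli variable with the same mean.
\item[(ii)] Hoeffding's comparison (sampling without replacement is dominated in convex order by sampling with replacement; \citealp{hoeffding1963}) lets us replace the task sample by i.i.d.\ draws from the $G$ task values.
\end{itemize}
Combining (i) and (ii), the MGF of $q\bar Z$ is at most that of $\Bin(q,\mu^\ast)$. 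The standard Chernoff--KL tail then gives, for each side,
\[
\Prb\{\bar Z\ge\mu^\ast,\ q\kl(\bar Z,\mu^\ast)>\log(2/\alpha)\}\le\alpha/2,
\]
with the symmetric bound for $\bar Z\le\mu^\ast$. The set in \eqref{eq:kl} therefore contains $\mu^\ast$ with probability at least $1-\alpha$, and multiplying by $G/M$ yields an interval for $\theta_{\mathcal C}$.

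In the case $q\ge G$, the same argument with the $G$ independent task averages $X_i$ and no outer sampling gives a KL interval for $G^{-1}\sum_i\E X_i$. Splitting $\alpha$ between the KL bound and the finite-complement Hoeffding bound, and taking a union bound, justifies the intersection. Finally, because coverage holds conditionally on every prefix realization, integrating over the prefix gives unconditional design coverage.
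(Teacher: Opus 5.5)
Your interval argument is sound, but the budget paragraph has a gap that also undercuts unbiasedness.

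\textbf{The gap: you never show $q\ge1$.} When unresolved paths remain below depth $K$, both the estimator $\frac{G}{Mq}\sum_{i\in S}(n_i/L)\bar Y_i$ and your Horvitz--Thompson calculation need at least one affordable terminal continuation. Your stated mechanism, that the policy reports the census score if the prefix exhausts $B$, is not the design and cannot work. A prefix that spends the budget while paths are still unresolved leaves their terminal labels unknown, so there is no census score to report and $q=0$.

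The missing ingredient is the prefix allowance $\min(\lfloor B/4\rfloor,B-K+1)$. Every prefix advance must fit within it at worst-case cost, so $C_d\le B-K+1$. If $d\ge1$, at least $K-1\ge K-d$ units therefore remain; if $d=0$, $B\ge K$ by assumption. Hence $\lfloor(B-C_d)/(K-d)\rfloor\ge1$, which gives $q\ge1$ together with $C_d+q(K-d)\le B$. The paper establishes this before invoking uniform outer sampling for unbiasedness.

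\textbf{The interval argument matches the paper's.} The paper presamples one inner path per active task and writes the subset MGF as $\binom Gq^{-1}\sum_{|S|=q}\prod_{i\in S}a_i$ with $a_i=\E e^{\lambda Z_i}$. It bounds this by Maclaurin's inequality and then uses $e^{\lambda z}\le1-z+ze^\lambda$. Your Hoeffding convex-order step is the same inequality, with one condition on the order of steps. Apply it after the conditional Bernoulli domination, to the deterministic values $\log(1-\E X_i+\E X_i\,e^\lambda)$ with $f=\exp$. If you instead applied it to the random $X_i$ and averaged afterwards, Jensen's inequality would go the wrong way.

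\textbf{A smaller omission in the $q\ge G$ case.} Your union bound justifies the intersection only once the finite-complement bound is itself valid at level $\alpha/2$, and you do not show that. The paper writes task $i$'s error contribution as $\frac{n_i}{Nm_i}(S_i-m_ip_i)$, where $S_i$ is the positive count among the $m_i$ within-task draws. It then applies Hoeffding's sample-or-complement comparison within tasks. This yields the proxy $V$ of Equation~\eqref{eq:prefixV} and the radius $\sqrt{2V\log(4/\alpha)}$.
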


\subsection{Budget and unbiasedness}

The prefix allowance is $\min(\lfloor B/4\rfloor,B-K+1)$. Starting with all
$N$ paths at depth zero, advance the whole active set by the largest integer
span whose worst-case cost fits the remaining allowance, capped at $K$.
Charge the actual cost and repeat. Before any random subsampling, this fixes
$d,C_d,n_i,G$ conditional on the potential cohort. If no paths remain or depth
$K$ is reached, return the known score exactly.

Otherwise at least one terminal continuation is affordable: if $d=0$,
$B\ge K$; if $d\ge1$, $C_d\le B-K+1$ leaves at least $K-d$ units. The fixed
terminal count $q$ therefore satisfies $q\ge1$ and
$C_d+q(K-d)\le B$. Let $H_i$ count the original task's paths with terminal
label one. All lie in its prefix risk set, and a uniform sample from that set
has mean $H_i/n_i$. Thus $\E[(n_i/L)\bar Y_i]=H_i/L$. Uniform outer task
sampling when $q<G$, or summation over all active tasks when $q\ge G$, proves
unbiasedness of Equation~\eqref{eq:prefix}.

\subsection{Chernoff domination for partial task coverage}

For each active task independently presample one inner path, including tasks
that will not be retained. This is a proof coupling, not an extra observation
or uncharged evaluation. Write $Z_i=(n_i/L)Y_i\in[0,1]$, $X_i=H_i/L$, and
$\mu=G^{-1}\sum_iX_i$. For any $\lambda\in\mathbb R$ let
$a_i=\E e^{\lambda Z_i}>0$. Independently choose the uniform outer subset $S$
of size $q$. Then
\begin{align}
 \E e^{\lambda\sum_{i\in S}Z_i}
 &=\binom Gq^{-1}\sum_{|S|=q}\prod_{i\in S}a_i\\
 &\le\left(\frac1G\sum_i a_i\right)^q
 \le(1-\mu+\mu e^\lambda)^q.\label{eq:mgf}
\end{align}
The first inequality is Maclaurin's elementary-symmetric-mean inequality.
For completeness, with all other coordinates fixed and $a_i+a_j$ fixed,
the elementary symmetric polynomial is a constant plus a nonnegative multiple
of $a_i a_j$; averaging the pair cannot decrease it. Repeated averaging at fixed
total achieves the equal-coordinate maximum. The second inequality follows
from $e^{\lambda z}\le1-z+ze^\lambda$ on $[0,1]$ and $\E Z_i=X_i$.

Optimizing the Chernoff bound for $\bar Z\ge x>\mu$ or
$\bar Z\le x<\mu$ gives $\exp[-q\kl(x,\mu)]$. A two-tail union bound and
inversion yield Equation~\eqref{eq:kl}. No identical-distribution assumption
on tasks or probabilistic assumption on the fixed cohort was used.
At $\bar Z=0$ or one the interval has the corresponding exact analytic KL
endpoint; other endpoints are solved numerically. Our bound is deliberately
not asserted to be the sharpest possible bounded-variable interval.

\subsection{All active tasks and finite complements}

When $q\ge G$, the counts $m_i$ are functions of the fixed observed prefix.
The independent task-level variables $Z_i=(n_i/L)\bar Y_i$ still lie in
$[0,1]$ with means $H_i/L$. Arithmetic-geometric mean and the same convexity
argument give Equation~\eqref{eq:mgf} with $q=G$ for their full sum.

Let $S_i$ be the positive count in the within-task sample, and
$p_i=H_i/n_i$. Its contribution to estimation error is
\begin{equation}
 \frac{n_i}{N m_i}(S_i-m_i p_i).
\end{equation}
Applying the sample/complement Hoeffding bound independently within tasks
gives variance proxy
\begin{equation}\label{eq:prefixV}
 V=\sum_{i:n_i>0}\left(\frac{n_i}{N m_i}\right)^2
 \frac{\min(m_i,n_i-m_i)}4.
\end{equation}
The two-sided radius at failure probability $\alpha/2$ is
$\sqrt{2V\log(4/\alpha)}$. Intersect it with the KL interval also at
failure probability $\alpha/2$. The union bound preserves total coverage
$1-\alpha$ despite this data-dependent intersection. At census $V=0$ and
all terminal labels of unresolved paths are known.

\subsection{Oracle variance decomposition for one draw per sampled task}

The special case $q\le G$ offers a useful decomposition. Set $X_i=H_i/L$ and
$v_i=(n_iH_i-H_i^2)/L^2=\Var(Z_i)$. For $G>1$ let $S_X^2$ be the finite
population variance of $X_i$ with denominator $G-1$. The law of total variance
over the selected task set gives
\begin{equation}
 \Var(\widehat\theta)=\left(\frac GM\right)^2\frac1q
 \left[\left(1-\frac qG\right)S_X^2+\frac1G\sum_i v_i\right].
\end{equation}
For $G=1$ the between-task term is zero. Task coverage removes the first
component, whereas informative prefixes can reduce the second. This is a
classical two-stage variance decomposition, not an additional new estimator.
It uses unknown $H_i$ and is an oracle explanation, not free information
available to the policy.

\section{A conditional bridge to a population target}\label{app:bridge}

Suppose the planned path outcomes are independent across $i,r$ and, within
each fixed task $i$, have common survival probability $p_i(K)$. No assumption
of independence among the $K$ attempts within a path is needed for this
definition. Then $\E\theta_{\mathcal C}=\theta_*=M^{-1}\sum_i p_i(K)$ and
Hoeffding's inequality gives
\begin{equation}
 \Prb\left\{|\theta_{\mathcal C}-\theta_*|>
 \sqrt{\frac{\log(2/\beta)}{2ML}}\right\}\le\beta.
\end{equation}
If $[l(D),u(D)]$ has conditional design coverage $1-\alpha$ for every cohort,
expanding both endpoints by that radius gives joint coverage at least
$1-\alpha-\beta$ for $\theta_*$ over cohort generation and design randomization.
The proof is a union bound; no independence between the two error events is
required. To obtain a total nominal 95\% statement, the probabilities must
be split in advance, not appended to a 95\% conditional interval for free.

For independent identically distributed Bernoulli attempts within a task,
$p_i(K)$ becomes $(1-p_i)^K$ for first success and $p_i^K$ for first failure.
Without that additional assumption, those power identities need not hold.
Shared execution conditions can also violate independence across paths.
Accordingly, this bridge is not used to relabel the public design-coverage
results as latent-reliability guarantees.

\section{Missing agent outcomes and reproducibility checks}\label{app:missing}

The agent source contains 120 grading runs. The data adapter verifies source
Git-blob hashes, run indices, submitted/completed/empty-patch/error ID
partitions, and common task universes. In the one incomplete configuration,
nine tasks lack a submitted patch in run 8. Keeping the 491 tasks with all ten
observed submissions changes that configuration's estimand. It does not
establish that excluded tasks are exchangeable with retained tasks.

Let $w=M_c/500$ be the complete-task fraction. Because each omitted task's
score lies in $[0,1]$, a confidence interval $[l,u]$ for the complete-task
mean implies $[wl,wu+1-w]$ for the all-task mean, with the same design coverage
for every assignment of missing outcomes. For $M_c=491$, the additional
identification width is $0.018$. This simple extension is conservative because
it ignores the observed runs of omitted tasks. We retain the full known/unknown
matrix so tighter task/path-specific sensitivity analyses remain reproducible.

Exact small-cohort checks cover sampling unbiasedness, finite budget, interval
coverage, hypergeometric endpoint monotonicity, conditional transition laws,
and e-value expectations. Additional checks enumerate the two lower-bound
priors and compute finite-$M$ trimmed interval-length bounds. Such checks are
useful for implementation validation but do not establish sharpness from
small conservative coverage examples. All randomized replay outputs include
the fixed target, method, budget, seed index, estimate, interval when claimed,
and charged cost. Policies and the prospective protocol are hash-locked; data
schema corrections and the missingness deviation are recorded separately.

\section{Data inventory and additional empirical accounting}\label{app:empirical}

\paragraph{Design, budgets, and study status.}
The original experiment protocol, policy code, seeds, budgets, and main
comparisons were archived before confirmatory outcomes were read. Earlier
exploration used only the first half of four Llama-3-8B response banks.
Replication studies and interval refinements are separately declared
extensions. Protocol hashes document local content consistency, not
independently witnessed preregistration or historical blindness.

For responses, we use the last 5000 outcomes per task in all 22 public panels
of \citet{brown2024}, across mathematics, code, and formal proof. We form
nonoverlapping blocks at $K\in\{10,100,1000\}$. At $K=1000$ the budgets are
$10^4,3\cdot10^4,10^5$, with 1000 evaluator randomizations per cell/method;
at shorter horizons we add $B=2000$ and use 200 randomizations.

For agents, we use the ten numbered SWE-bench grading runs for each of twelve
model, scaffold, and temperature configurations released by
\citet{bjarnason2026}. Eleven cover all 500 tasks. One DeepSWE-preview /
R2E-Gym run has nine incomplete submissions; its configuration uses the 491
tasks observed in every run. Missing submissions are not relabeled as observed
failures. Submitted empty patches and grading errors count as operational
non-resolutions under the resolved-ID label rule. We examine
$K\in\{1,2,5,10\}$ and budgets $500,1500,3000,5000$, with 1000 randomizations
at $K=5$ and 200 otherwise. Appendix~\ref{app:missing} gives the
assumption-free extension from the complete-case target to the all-500 score.

\subsection{Accuracy gains depend on the baseline and budget}\label{app:legacy_results}

The completed grid contains 868 cohort/event/horizon/budget conditions, seven
methods, and 2,492,000 randomized method executions, with no hard-budget
violations. This count measures replay precision, not independent evidence
from millions of new agent runs. Table~\ref{tab:aggregate} reports all primary
conditions; Figure~\ref{fig:mini} examines the proof-evaluation condition
specified in the prospective protocol because it motivated the study.

At $B=10{,}000$ and $30{,}000$, task-aware progressive sampling reduces MSE
relative to task-balanced adaptive completion by $6.05\times$ and $8.04\times$
(95\% replay-bootstrap intervals $[5.26,6.94]$ and $[7.00,9.26]$).
At $B=100{,}000$, that advantage disappears: its MSE-gain ratio is $0.94$
$[0.83,1.05]$. The pooled design is then worse than task-balanced completion
by $3.02\times$ in MSE; this reverses the favorable impression obtained by
comparing it only with uniform adaptive completion.

These gains are not universal, and the implemented width gaps need not be
unavoidable (Appendix~\ref{sec:uncertainty}). Across 132 primary
response conditions, the median task-aware MSE ratio to task-balanced
adaptive completion is 1.02; for 96 agent conditions it is 1.30. These ratios
use the 64 and 57 conditions, respectively, in which both MSEs are nonzero.
Other conditions are retained explicitly in Table~\ref{tab:aggregate}, rather
than turning zero reference error into a large but arbitrary finite ratio.
Pooled progressive MSE is worse in the median by factors 1.46 and 1.78.
The fixed full-prefix design also loses on median point accuracy. Thus the
large proof-benchmark gains do not support a recommendation to replace
task-balanced completion generally.

The favorable proof case is strongly stratified: 96.2\% of its tasks are
internally constant at $K=1000$. From the complete cohort, the design variance
of one uniform path per task is $5.21\cdot10^{-5}$, versus $1.52\cdot10^{-3}$
for the same number of uniformly pooled paths. This $29.2\times$ ratio
quantifies why task identities matter; it is an oracle-only explanation,
not a free feature available to the allocation policy or a prediction of the
adaptive methods' exact MSE.

\begin{table}[t]
\centering\small
\caption{All primary conditions: responses at $K=1000$ (132 conditions) and
agents at $K=5$ (96). MSE ratio is method / task-balanced adaptive; width ratio
is method / uniform adaptive. Brackets contain the 10th and 90th percentiles
across conditions, not confidence limits. Smaller is better. MSE ratios use
64 response and 57 agent conditions. ``Zero'' counts give both methods zero /
reference only zero; no candidate-only zero cases occur. Width ratios use
65 response and 57 agent conditions. All excluded width-zero counts are in
Appendix~\ref{app:empirical}. All widths include the uniform post-confirmation
feasible-set refinement. Balanced adaptive has no statistical interval;
its separate logical certificate is not included in this 95\% comparison.}
\label{tab:aggregate}
\begin{tabular}{lrrr}
\toprule
Method & MSE ratio & Zero & Width ratio\\
\midrule
\multicolumn{4}{l}{\emph{Response banks}}\\
Uniform fixed &3.19 [1.31, 49.31]&51 / 17&0.98 [0.75, 2.42]\\
Uniform adaptive &1.54 [1.02, 4.14]&67 / 1&1.00 [1.00, 1.00]\\
Balanced fixed &2.40 [1.16, 32.13]&51 / 17&1.22 [0.95, 3.23]\\
Balanced adaptive &1.00 [1.00, 1.00]&68 / 0&---\\
Pooled progressive &1.46 [0.80, 3.96]&67 / 1&1.85 [1.35, 2.21]\\
Task-aware progressive &1.02 [0.74, 1.61]&66 / 2&1.66 [1.20, 2.60]\\
Full prefix &2.86 [1.21, 15.51]&65 / 3&1.27 [0.82, 2.15]\\
\midrule
\multicolumn{4}{l}{\emph{Coding agents}}\\
Uniform fixed &5.21 [1.61, 20.62]&24 / 15&0.91 [0.72, 1.67]\\
Uniform adaptive &1.58 [1.12, 2.94]&39 / 0&1.00 [1.00, 1.00]\\
Balanced fixed &3.38 [1.45, 11.91]&24 / 15&1.46 [1.08, 3.64]\\
Balanced adaptive &1.00 [1.00, 1.00]&39 / 0&---\\
Pooled progressive &1.78 [1.16, 3.42]&36 / 3&1.48 [1.00, 1.65]\\
Task-aware progressive &1.30 [0.99, 2.08]&36 / 3&1.90 [1.74, 3.04]\\
Full prefix &3.42 [1.52, 11.92]&24 / 15&1.45 [1.08, 3.64]\\
\bottomrule
\end{tabular}
\end{table}

\subsection{Honest uncertainty can favor a different design}\label{sec:uncertainty}

The strong small-budget MiniF2F MSE gains still come with wider intervals,
but much of the original penalty was avoidable. After the feasible-set
intersection, task-aware widths fall from 0.995 and 0.635 to 0.744 and 0.419,
versus 0.611 and 0.389 for uniform adaptive completion. Revised width
differences are $0.1336$ $[0.1315,0.1358]$ and $0.0303$
$[0.0291,0.0317]$ under independent replay bootstrap. The original intervals
and every refined cell are retained; estimates, costs, seeds, and empirical
coverage are unchanged in all 2.492 million same-seed replays.

At $B=100{,}000$, task-aware width becomes slightly smaller than uniform
adaptive (0.2006 versus 0.2084). Full prefix is narrower still (0.0818),
despite not beating balanced adaptive point accuracy; Appendix~\ref{app:empirical}
retains the differences and replay-bootstrap intervals.

Across primary conditions, however, every nonuniform interval design has a
median width ratio above one against uniform adaptive completion
(Table~\ref{tab:aggregate}). Task-aware progressive ratios are 1.66 for
responses and 1.90 for agents. Figure~\ref{fig:all} displays the condition-level
spread. The intervals use different valid concentration constructions;
these comparisons rank the \emph{implemented policy--interval pairs}, not the
smallest achievable confidence width for each observation design. In
particular, Theorem~\ref{thm:interval} does not prove that the entire observed
gap is unavoidable. Improving conservative adaptive inference remains open.

\subsection{Coverage, cost, and interpretation}

Hierarchical intervals cover in every replay, evidence of conservativeness
rather than optimality. Minimum empirical coverage is 0.995 for pooled
progressive on response banks, 0.920 on agents, and 0.915 for uniform fixed in
some 200-replay cells. Such cell minima do not overturn analytical guarantees;
all cell-wise Monte Carlo intervals and unfavorable values are retained.

Hard-budget compliance differs from utilization: mean cost/budget ranges from
0.39 to 0.83 for the highlighted adaptive and prefix designs because savings
are not always recycled. Lower use alone does not establish better error per
response. The primary $K=5,L=2$ agent grid also cannot activate quarter-budget
prefixing below census; shorter active horizons remain in the artifact.

The practical implication is conditional. Include task-balanced completion
when judging point accuracy. When a valid interval is required, compare a
simple fixed or full-prefix design as well as a sequential confidence
procedure, and disclose any resulting change in the accuracy ranking.
Neither a point-only cost-stopped estimate nor a narrow population-model
posterior should be silently substituted for the fixed-cohort coverage
requirement used here.

\subsection{Complete figures and reproducibility accounting}

\begin{figure}[tbp]
\centering
\includegraphics[width=\linewidth]{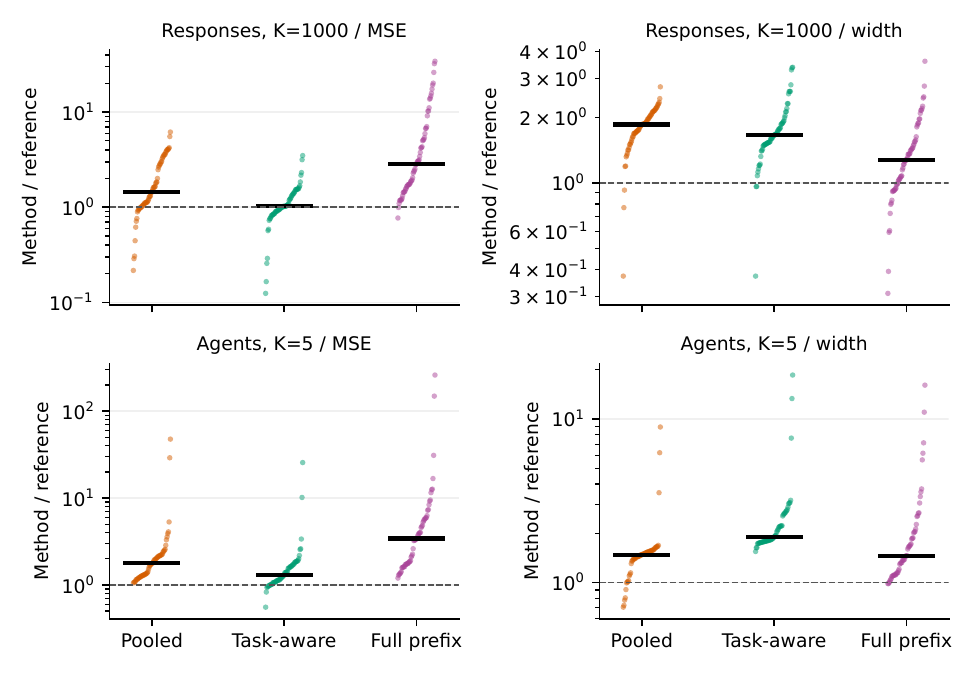}
\caption{Every primary condition with nonzero reference error or width, after
the uniform logical refinement. MSE reference: task-balanced adaptive; width
reference: uniform adaptive. Dots are correlated conditions, not independent
datasets; black bars are medians. The dashed line marks parity. Zero-reference
conditions are separately tabulated, not plotted on logarithmic axes.}
\label{fig:all}
\end{figure}

\begin{figure}[tbp]
\centering
\includegraphics[width=\linewidth]{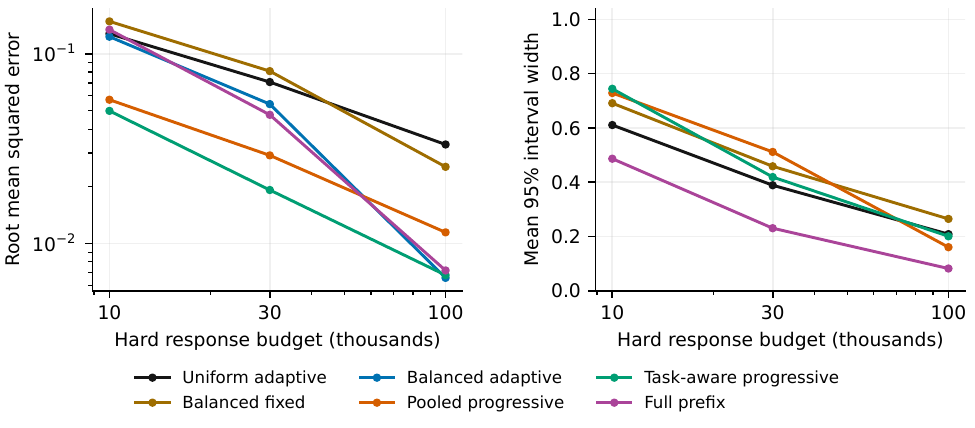}
\caption{Held-out MiniF2F-MATH / Llama-3-8B-Instruct, first success,
$K=1000$. Lower is better on both axes. At small budgets, task-aware
progressive sampling has the best point accuracy but a wider interval than
uniform adaptive completion. At the largest budget, task-balanced adaptive
completion has the lowest RMSE; full-prefix sampling has the narrowest
interval. The adaptive balanced baseline's purely logical certificate is not
plotted. Curves show 1000-replay means after the uniform logical refinement.}
\label{fig:mini}
\end{figure}

\paragraph{Uniform post-confirmation inference correction.}
Private review identified that purchased terminal labels imply the deterministic
range $[P/N,1-Z/N]$. The refined analysis intersects this range with every
original confidence interval, while preserving all sampling policies and RNGs.
Every one of the 2,492,000 replay estimates and costs matched the original
row. An initial 6076-cell check also recomputed and matched the original
interval endpoints, before the full replay reused saved endpoints to reduce
computation. Both original and revised summaries are included. Coverage is
identical, because the true target always belongs to the feasible range.
The balanced-adaptive heuristic has only the logical range, with 100\%
coverage; its median width ratio to uniform-adaptive 95\% intervals is 3.36
on primary responses and 4.57 on primary agents. It is not omitted because
of poor point performance or granted a nominal-95\% statistical claim.

\paragraph{Response panels.}
{\raggedright
The CodeContests panels have 140 tasks and use Gemma-2B, Gemma-7B,
Llama-3-8B, Llama-3-8B-Instruct, and Llama-3-70B-Instruct. GSM8K has 127 tasks
with Llama-3-8B-Instruct and Llama-3-70B-Instruct. MATH has 128 tasks with the
same five models as CodeContests plus Pythia-70M, 160M, 410M, 1B, 1.4B, 2.8B,
6.9B, and 12B. MiniF2F-MATH has 130 tasks with Llama-3-8B-Instruct and
Llama-3-70B-Instruct. Each panel provides 10,000 correctness outcomes per task;
only columns 5000--9999 are used for confirmation. The source release is
\path{ScalingIntelligence/monkey_business}, source revision
\path{a9f8f73bcd6948a57ed922cba4e48062ef95f553}, converted revision
\path{5acc07474317fe5280a2618d2b6652cdb740d101}. The sample release is MIT
licensed; original problem licenses remain applicable. The adapter retrieves
only required correctness columns rather than generated responses.\par}

\paragraph{Agent configurations.}
{\raggedright
The twelve panels cross Qwen3-32B, DeepSWE-preview, and Devstral-2 with the
nano-agent and R2E-Gym scaffolds, each at default and zero temperature.
Ten run-indexed grading files per configuration give the observed terminal
resolution labels. The source is
\path{ASSERT-KTH/agentic-evals-artifacts}, revision
\path{5db0c4b69382d160a313d7ceaded915398c63e13}, licensed CC BY 4.0.
These records originate from the repeated coding-agent study of
\citet{bjarnason2026}; the present work does not rerun those agents or claim
the source's generation compute as its own. The complete-case convention
for the single incomplete configuration is stated in the main text.\par}

\paragraph{Zero-error and zero-width accounting.}
An empirical zero MSE means every replay estimate equals the known cohort
target, up to an absolute score-roundoff tolerance of $10^{-12}$; it need
not imply that the procedure has certified the score or performed a census.
No positive constant is inserted into a ratio denominator. In the primary
response comparison, uniform adaptive intervals have zero width in 67 of
132 conditions. Of these 67, the numbers with both widths zero / only the
reference width zero are respectively: uniform fixed 0/67, balanced fixed
0/67, pooled progressive 67/0, task-aware progressive 65/2, full prefix 64/3.
For agents the corresponding denominator-zero count is 39 of 96; the
both/reference-only counts are uniform fixed 24/15, balanced fixed 24/15,
pooled 36/3, task-aware 36/3, full prefix 24/15. There are no hidden infinite
ratios favoring the proposed designs in these exclusions.

\paragraph{Monte Carlo uncertainty.}
Each primary cell uses 1000 independent evaluator randomizations; secondary
cells use 200. Coverage intervals are two-sided 95\% Clopper--Pearson
intervals over those Bernoulli coverage indicators. Selected MSE-gain ratios
and width differences use 2000 independent bootstrap resamples, separately
resampling each method's stream (seed 20260916 for original MSE comparisons;
20260925 for the uniformly refined interval widths). This is a numerical
precision assessment conditional on one bank, not a test treating repeated
tasks, related models, or budget settings as independent scientific replications.
All displayed point estimates, run-level rows, and bootstrap endpoints are
available in the anonymous supplement.

\paragraph{Code validation.}
Exact checks include 600 pooled-design/inference cases with 7250 randomization
leaves; 160 hierarchical cases with 2064 leaves; 480 full-prefix cases with
3800 leaves; 40 task-balanced fixed/count-stopped cases; and 378
one-per-task prior-law checks. The largest absolute unbiasedness discrepancy
in these enumeration families is below $1.6\cdot10^{-15}$. A separate 3310-case
hypergeometric coverage enumeration has minimum coverage 0.950226. A synthetic
access/budget smoke grid tests all seven methods on 756 cases, including
early, late, and mixed decisive-event times. These finite checks supplement,
rather than replace, the general mathematical arguments. They do not validate
the sharpness of confidence bounds from their conservative coverage.

\paragraph{Exploration and reproducibility.}
The four explored panels were Llama-3-8B-Instruct on CodeContests, GSM8K, MATH,
and MiniF2F-MATH, using their first 5000 columns only. Exploration considered
prediction-powered, multilevel, and profile-inference variants before the
seven-method confirmation was frozen. The previously observed $11.24\times$
pooled MSE gain over uniform adaptive sampling at MiniF2F's largest budget
was not robust to a task-balanced baseline and is not presented as a
confirmatory result. Confirmatory policies were not retuned after acquisition.
The protocol lock is timestamped and hashed locally; it is not an external
registry entry. The missing-data handling and filename-schema adapter fixes
are documented as deviations. Replay and acquisition scripts resolve paths
from the artifact root; exact checks run from that root. Replays use a stable
SHA-256 cell-to-seed mapping, not Python's
process-dependent hash. The computation uses CPU-only NumPy/SciPy; no model
weights, accelerators, paid API calls, or model training are required.

\clearpage\section{Replication validation and post-review comparisons}\label{app:replication_empirical}
The following tables report every two-path condition: $K=1000$, $L=5$, 500 randomizations. Hull is the exact-law count-only Chernoff bound; KL-$2M$ uses all sampled labels; KL/H splits error equally between KL and finite-complement Hoeffding. D-Bern is the original disagreement certificate; Pair is the exact pair mixture. Uniform uses adaptive completion at the same allowed budget, with its own observations. All bounds use purchased-label intersection. The stronger baselines and Pair are post-validation analyses, not new held-out experiments (Appendix~\ref{app:pair_envelope}).
\begin{table}[H]\centering\footnotesize
\caption{Two-path design, first success. Mean 95\% interval widths. The first five columns share identical labels; Uniform uses the same allowed budget.}
\begin{tabular}{lrrrrrr}\toprule
Panel & Hull & KL-$2M$ & KL/H & D-Bern & Pair & Uniform\\\midrule
CodeContests\_Gemma-2B&0.0474&0.0585&0.0628&0.0528&0.0511&0.0606\\
CodeContests\_Gemma-7B&0.1012&0.1202&0.1309&0.0826&0.0773&0.1061\\
CodeContests\_Llama-70B&0.1234&0.1449&0.1577&0.0786&0.0744&0.1088\\
CodeContests\_Llama-8B&0.0956&0.1135&0.1237&0.0848&0.0789&0.1018\\
GSM8K\_Llama-70B&0.0147&0.0184&0.0212&0.0288&0.0191&0.0000\\
MATH\_Gemma-2B&0.0927&0.1111&0.1211&0.0941&0.0874&0.0000\\
MATH\_Gemma-7B&0.0670&0.0830&0.0905&0.0790&0.0767&0.0000\\
MATH\_Llama-70B&0.0462&0.0559&0.0602&0.0481&0.0410&0.0000\\
MATH\_Llama-8B&0.0748&0.0919&0.1002&0.0823&0.0789&0.0000\\
MATH\_Pythia-1.4B&0.1379&0.1623&0.1767&0.1084&0.1000&0.0607\\
MATH\_Pythia-12B&0.1273&0.1493&0.1625&0.1053&0.0970&0.0000\\
MATH\_Pythia-160M&0.1400&0.1651&0.1797&0.1086&0.1002&0.1141\\
MATH\_Pythia-1B&0.1435&0.1679&0.1828&0.1077&0.0993&0.0896\\
MATH\_Pythia-2.8B&0.1343&0.1572&0.1711&0.1058&0.0975&0.0380\\
MATH\_Pythia-410M&0.1443&0.1684&0.1833&0.1143&0.1059&0.1034\\
MATH\_Pythia-6.9B&0.1337&0.1566&0.1705&0.0995&0.0918&0.0217\\
MATH\_Pythia-70M&0.0904&0.1087&0.1185&0.0991&0.0915&0.1015\\
MiniF2F-MATH\_Llama-70B&0.1431&0.1670&0.1818&0.0711&0.0688&0.0834\\
\bottomrule\end{tabular}\end{table}
\begin{table}[H]\centering\footnotesize
\caption{Two-path design, first failure. Mean 95\% interval widths. The first five columns share identical labels; Uniform uses the same allowed budget.}
\begin{tabular}{lrrrrrr}\toprule
Panel & Hull & KL-$2M$ & KL/H & D-Bern & Pair & Uniform\\\midrule
CodeContests\_Gemma-2B&0.0103&0.0131&0.0155&0.0232&0.0122&0.0000\\
CodeContests\_Gemma-7B&0.0103&0.0131&0.0155&0.0232&0.0122&0.0000\\
CodeContests\_Llama-70B&0.0312&0.0370&0.0405&0.0318&0.0208&0.0000\\
CodeContests\_Llama-8B&0.0103&0.0131&0.0155&0.0232&0.0122&0.0000\\
GSM8K\_Llama-70B&0.1371&0.1609&0.1752&0.1153&0.1070&0.0444\\
MATH\_Gemma-2B&0.0113&0.0143&0.0170&0.0254&0.0134&0.0000\\
MATH\_Gemma-7B&0.0113&0.0143&0.0170&0.0254&0.0134&0.0000\\
MATH\_Llama-70B&0.0382&0.0456&0.0495&0.0512&0.0500&0.0000\\
MATH\_Llama-8B&0.0113&0.0143&0.0170&0.0254&0.0134&0.0000\\
MATH\_Pythia-1.4B&0.0113&0.0143&0.0170&0.0254&0.0134&0.0000\\
MATH\_Pythia-12B&0.0113&0.0143&0.0170&0.0254&0.0134&0.0000\\
MATH\_Pythia-160M&0.0113&0.0143&0.0170&0.0254&0.0134&0.0000\\
MATH\_Pythia-1B&0.0113&0.0143&0.0170&0.0254&0.0134&0.0000\\
MATH\_Pythia-2.8B&0.0113&0.0143&0.0170&0.0254&0.0134&0.0000\\
MATH\_Pythia-410M&0.0113&0.0143&0.0170&0.0254&0.0134&0.0000\\
MATH\_Pythia-6.9B&0.0113&0.0143&0.0170&0.0254&0.0134&0.0000\\
MATH\_Pythia-70M&0.0113&0.0143&0.0170&0.0254&0.0134&0.0000\\
MiniF2F-MATH\_Llama-70B&0.0333&0.0395&0.0432&0.0379&0.0304&0.0000\\
\bottomrule\end{tabular}\end{table}
Minimum empirical coverage across the 36 conditions is 1.000 for D-Bern, 0.998 for Pair, and 0.998 for KL-$2M$. These numerical checks do not replace the design-coverage proofs. The paired 2000-resample interval comparisons use seeds 20260924 (original extension), 20260926 (stronger-KL correction), and 20260927 (pair mixture). The archive contains all endpoints and unfavorable comparisons. Same-budget Uniform reaches zero width in 23/36 conditions; Pair is narrower in only seven, all first-success conditions.

\section{Stronger fixed-count baselines and a direct pair envelope}\label{app:pair_envelope}

These are post-second-review refinements, not changes to the frozen sampling
designs. All counts in this section are fixed before the sampled terminal
labels are read. Neither the KL count argument nor the pair mixture below
is asserted for cost-stopped terminal counts.

\subsection{Equal-count KL and complement inversion}

Let $S_i$ be the number of positives in an $r$-draw uniform sample without
replacement from task $i$, with fixed positive proportion $p_i$. Hoeffding's
convex-order comparison \citep{bardenet2015}, or the elementary-symmetric-mean
argument in Appendix~\ref{app:prefix}, gives, for every real $t$,
\begin{align}
 \E e^{tS_i}&\le(1-p_i+p_i e^t)^r,\\
 \E e^{t\sum_iS_i}&\le\prod_i(1-p_i+p_i e^t)^r
 \le(1-\theta_{\mathcal C}+\theta_{\mathcal C}e^t)^{Mr}.
\end{align}
The second line uses independent task randomizations and concavity of the
logarithm; the fixed task means need not agree. Chernoff inversion therefore
gives a valid $1-\alpha$ interval
\begin{equation}\label{eq:strong_kl}
 I_{\rm KL}=\{\mu\in[0,1]:Mr\,\kl(\widehat\theta,\mu)
                      \le\log(2/\alpha)\}.
\end{equation}
This uses all $Mr$ labels, unlike the original extension's valid but weaker
bound that regards only the $M$ task averages as bounded observations.

For $r<L$, the unsampled complement is itself a within-task uniform sample
of $L-r$ labels. At the true target its aggregate mean is
$C=(L\theta_{\mathcal C}-r\widehat\theta)/(L-r)$. Applying the same
inequality to that random complement gives another valid confidence set:
\begin{equation}\label{eq:complement_kl}
 I_{\rm cKL}=\left\{\mu\in F:
 M(L-r)\,\kl\!\left(\frac{L\mu-r\widehat\theta}{L-r},\mu\right)
 \le\log(2/\alpha)\right\},
\end{equation}
where $F=[r\widehat\theta/L,\,1-r(1-\widehat\theta)/L]$ is the
purchased-label feasible range. Joint convexity of KL makes this a sublevel
interval containing $\widehat\theta$. Its endpoints are found by scalar
bisection/Brent inversion, not by reading the unsampled labels. We report
sample KL and complement KL separately at level $\alpha$, and also report
two valid intersections with an explicit $\alpha/2$ allocation: sample KL
with complement KL, and sample KL with finite-complement Hoeffding. Selecting
the narrower nominal-95\% interval without an error adjustment is not used.

The artifact also retains the original $M$-KL, Hoeffding, intersection,
and SEBB comparisons at $r=1,2,3$. SEBB-WR and SEBB-WOR are two separately
fixed factor choices from \citet{burgess2021}, Theorem 4.5, not its optimized
allocation algorithm. On two-path samples the pair mixture wins 34/36
against $2M$-KL and 35/36 against each error-split intersection (median
ratios 0.865, 0.786, 0.786). These favorable comparisons are supplemented
by the stronger, less favorable exact-law comparison below.

\subsection{A stronger count-only finite-hull Chernoff comparator}

The finite task size permits another strengthening without using disagreements.
Let $g_t(h)=\log\E_h e^{tS_i}$ for the exact $\HG(L,h,r)$ task law.
Its least concave majorant $\bar g_t$ on $[0,L]$ is the maximum, at each
argument, of linear interpolants between bracketing integer grid points.
Equivalently, maximize $\sum_h\pi_hg_t(h)$ over distributions satisfying
$\sum_h\pi_hh=x$; an optimum uses at most two support points. Hence
\begin{equation}\label{eq:finite_hull}
 \log\E e^{t\sum_iS_i}=\sum_i g_t(h_i)
 \le M\bar g_t\!\left(\frac{\sum_i h_i}{M}\right)
 =M\bar g_t(L\theta_{\mathcal C}).
\end{equation}
This relaxation allows fractional nuisance composition counts and is therefore
conservative for the actual integer cohort. It assumes neither equal task
proportions nor an unproved concavity property of the raw $g_t$ values.

We invert the lower-tail Chernoff bound over a fixed 256-point tilt grid
$t=-2^z$, with $z$ equally spaced from $-10$ to $8$; binary complementation
gives the upper-tail bound. The finite grid can only weaken an optimal
Chernoff bound. There is no extra multiplicity penalty across these tilts:
for a fixed candidate target, their lower-tail rejection regions are nested
sets of the scalar total $S=\sum_iS_i$. The largest region corresponds to
one deterministic threshold selected using that candidate and the design,
not using $S$. Each of the two tails receives error probability $\alpha/2$.
All bracketing pairs are used for the majorant; log-sum-exp and scalar root
inversion avoid a composition enumeration over all tasks.

This comparator uses the same sampled-positive total as KL, but more of the
known finite sampling law. At $M=128,L=5,r=2$, its all-zero upper endpoint is
0.01128345, narrower than both $2M$-KL (0.0143) and the pair mixture (0.01335).
We retain this adverse example: the rate-adaptation theorem is over the
worst pure cohort, not a promise of improvement at every boundary target.
Exact checks cover 7252 cohort histograms (minimum coverage 0.95833) and
1,856,512 product-MGF inequalities. Every two-path public row is included;
the paired bootstrap uses seed 20260928. This additional comparator was
developed after the pair-mixture results and is explicitly post-validation.

\subsection{Exact finite-pair exponential envelope}

Fix $L\ge3$ and two uniform draws per task. For a task with $h$ positive
paths, set $p=h/L$, $A=(Y_1+Y_2)/2$, and $D=\ind\{Y_1\ne Y_2\}$.
The three possible $(A,D)$ values are $(0,0),(1/2,1),(1,0)$, with probabilities
\begin{equation}
 q_{0,h}=\frac{(L-h)(L-h-1)}{L(L-1)},\quad
 q_{1,h}=\frac{2h(L-h)}{L(L-1)},\quad
 q_{2,h}=\frac{h(h-1)}{L(L-1)}.
\end{equation}
For $0<h<L$, define
\begin{align}
 f_{0,h}(t)&=q_{0,h}e^{-tp}+q_{2,h}e^{t(1-p)},\qquad
 f_{1,h}(t)=q_{1,h}e^{t(1/2-p)},\\
 \psi_L(t)&=\max_{0<h<L}\log\frac{f_{1,h}(t)}{1-f_{0,h}(t)}.
 \label{eq:pair_psi}
\end{align}
The domain is $0<t<t_{\rm cap}$, where $t_{\rm cap}$ is the smallest
positive solution to $f_{0,h}(t)=1$ over $h$ with $q_{2,h}>0$.
Every such root exists and is unique: $f_{0,h}(0)=1-q_{1,h}<1$,
the function is convex, and its positive exponential term diverges.
The $h=1$ case decreases and imposes no finite upper limit. Thus this
domain is positive and determined only by the known $L$.

\begin{proposition}[Finite-pair envelope]\label{prop:pair}
For every fixed task proportion and every $t$ in this domain,
\begin{equation}\label{eq:pair_mgf}
 \E\exp\{\pm t(A-p)-\psi_L(t)D\}\le1.
\end{equation}
Consequently, independent two-draw randomizations across tasks yield
$\E\exp\{\pm tM(\widehat\theta-\theta_{\mathcal C})
-\psi_L(t)d\}\le1$.
\end{proposition}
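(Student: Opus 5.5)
The plan is to verify the single-task inequality by direct computation over the three atoms $(A,D)\in\{(0,0),(1/2,1),(1,0)\}$, and then multiply across tasks. Fix $h$ and the sign; by the symmetry $h\mapsto L-h$ (which swaps $q_{0,h}$ and $q_{2,h}$, fixes $q_{1,h}$, and replaces $A-p$ by $-(A-p)$), it suffices to treat $+t$ simultaneously for all $h$, because $\psi_L$ is a maximum over all $0<h<L$ and so is symmetric under that reflection. First we dispose of the boundary cases. For $h=0$ or $h=L$ the task is constant: $A=p$ and $D=0$ almost surely, so the expectation equals $1$.

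For $0<h<L$ we compute the expectation exactly:
\begin{equation*}
 \E\exp\{t(A-p)-\psi_L(t)D\}
 =f_{0,h}(t)+f_{1,h}(t)e^{-\psi_L(t)}.
\end{equation*}
On the stated domain, $f_{0,h}(t)<1$ for every $h$ with $q_{2,h}>0$. This holds because $f_{0,h}(0)=1-q_{1,h}<1$, $f_{0,h}$ is convex, and $t_{\rm cap}$ is the first crossing of $1$. For $h=1$ we have $q_{2,1}=0$, so $f_{0,1}(t)=q_{0,1}e^{-tp}<1$ for all $t>0$. Hence $1-f_{0,h}(t)>0$, and the ratio defining $\psi_L$ is finite and positive. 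By the definition of $\psi_L$ as a maximum,
\begin{equation*}
 e^{-\psi_L(t)}\le\frac{1-f_{0,h}(t)}{f_{1,h}(t)},
\end{equation*}
so the expectation is at most $f_{0,h}+1-f_{0,h}=1$. For the $-t$ sign, the same computation gives $\tilde f_{0,h}(t)+\tilde f_{1,h}(t)e^{-\psi_L}$, where $\tilde f$ denotes $f$ evaluated at the reflected proportion $L-h$. The reflection $L-h$ also lies in $(0,L)$ and is covered by the maximum in the definition of $\psi_L$. The domain condition is likewise symmetric, since $t_{\rm cap}$ is a minimum over all $h$ with $q_{2,h}>0$, and this set contains every reflection that has a nontrivial positive exponential term.

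For the multi-task statement, we write
\begin{equation*}
 M(\widehat\theta-\theta_{\mathcal C})=\sum_i(A_i-p_i),\qquad d=\sum_iD_i.
\end{equation*}
The pairs $(A_i,D_i)$ are independent across tasks because the two-draw randomizations are independent, and $\psi_L(t)$ is a deterministic function of $t$ and $L$ alone. The expectation of the exponential therefore factorizes into a product of single-task factors, each at most $1$ by the argument above.

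The main obstacle is the domain and sign bookkeeping rather than any inequality. We must confirm that the reflection $h\mapsto L-h$ exactly exchanges the roles of $f_{0,h}$ and $f_{1,h}$ under $t\mapsto -t$, so that one maximum serves both signs. We must also confirm that $1-f_{0,h}(t)>0$ for every $h$ in the domain, including the case $q_{2,h}=0$, where there is no cap. Once positivity of the denominators is secured, the bound is a one-line rearrangement.
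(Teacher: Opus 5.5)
Your proposal is correct and follows essentially the same route as the paper. Both write the single-task expectation as $f_{0,h}(t)+e^{-\psi_L(t)}f_{1,h}(t)$, bound it by one through the maximum defining $\psi_L$, note that $h=0,L$ give exactly one, get the negative sign from the reflection $h\mapsto L-h$ (which turns the $-t$ expression at $h$ into the $+t$ expression at $L-h$), and multiply over independent tasks; your explicit check that $1-f_{0,h}(t)>0$ on the domain, including the uncapped case $h=1$, is settled in the paper's text defining $t_{\rm cap}$ rather than inside its proof.
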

\begin{proof}
For the positive sign the left side is
$f_{0,h}(t)+e^{-\psi_L(t)}f_{1,h}(t)\le1$ by construction.
At $h=0,L$, both $A-p$ and $D$ are zero and the expectation is exactly one.
Replacing $h$ by $L-h$ proves the negative-sign statement with the same
maximized envelope. Independence then permits multiplication.
\end{proof}

\subsection{A fully specified data-independent mixture}

Set $J=\lceil\log_2 M\rceil+1$, $t_0=t_{\rm cap}M/(M+1)$, and
\begin{equation}
 t_j=t_0/2^j,\quad w_j=\frac{1}{(j+1)(j+2)}\ (0\le j<J),
 \qquad w_* =\frac{1}{J+1}.
\end{equation}
The weights sum to one, including the constant test value with weight $w_*$.
The choice depends only on $M,L$, not on the observed disagreements, cohort
labels, or desired empirical win count. For each sign the mixture
\begin{equation}
 E_\pm(\mu)=w_*+\sum_{j=0}^{J-1}w_j
 \exp\{\pm t_jM(\widehat\theta-\mu)-\psi_L(t_j)d\}
\end{equation}
has expectation at most one at $\mu=\theta_{\mathcal C}$, by
Proposition~\ref{prop:pair}. Markov's inequality and a two-tail union bound
give coverage at least $1-\alpha$ for $\{\mu:E_+(\mu)\le2/\alpha,
E_-(\mu)\le2/\alpha\}$. Equivalently, let $x\ge0$ uniquely solve
\begin{equation}\label{eq:pair_root}
 w_*+\sum_{j=0}^{J-1}w_j
       \exp\{t_jx-\psi_L(t_j)d\}=2/\alpha.
\end{equation}
The interval is $[\widehat\theta-x/M,\widehat\theta+x/M]\cap F$.
Jensen's inequality gives $f_{0,h}+f_{1,h}\ge1$, hence $\psi_L\ge0$.
The left side is continuous and strictly increasing, at most one at zero,
and diverges. The code uses log-sum-exp and scalar root inversion.
No data-dependent optimization over uncorrected confidence bounds is used.
Product and mixture inference are classical \citep{howard2021}; the
specialization here is the exact binary finite-pair penalty.

On every pure-task cohort $d=0$ and $\widehat\theta=\theta_{\mathcal C}$.
The $j=0$ term alone yields
\begin{equation}
 |I|\le\frac{2\log(4/\alpha)}{Mt_0}=O_{\alpha,L}(M^{-1}),
\end{equation}
retaining Theorem~\ref{thm:two}'s sharp order. At $L=5$ the numerical domain
endpoint is $t_{\rm cap}=2.5541281188$; at $M=130$, solving the full mixture
gives $2x/M=0.02628643$. No optimal-constant or uniform dominance claim is
made. The rule is fixed-count and equal-task-size; it is not a confidence
sequence or an implementation of the unequal-size prefix extension.

\paragraph{Checks and analysis chronology.}
The exact envelope passed 119,952 signed per-task MGF checks over $L=3$--50
and seven task counts. Exact enumeration of 2914 small cohort histograms
with 27,274 randomization leaves gives minimum coverage 0.99588. The stronger
KL procedures separately pass 7252 exact cohort-histogram checks. These
diagnostics supplement the proofs and do not establish sharpness.
The envelope and mixture were specified before their public-row computation,
but after those outcomes and the stronger-baseline comparison were known.
Every two-path fixed row was then reanalyzed, with no new sampling or discarded
conditions. Paired 2000-resample bootstrap intervals use seed 20260927;
the stronger-baseline correction uses seed 20260926. Both are Monte Carlo
uncertainty conditional on the fixed cohorts, not new scientific replications.

\section{Complete horizon, event, and budget stratification}\label{app:stratification}
Each entry is the median method/reference ratio across panels with positive numerator and denominator. MSE reference: task-balanced adaptive; width reference: uniform adaptive. A dash denotes no such cells, not parity. The parenthesized count is the number of ratio-eligible panels; remaining zero cases and the additional task-balanced-fixed width reference are retained in the machine-readable artifact. These are descriptive, correlated panel summaries, not population estimates.
\begin{table}[H]
\centering\small
\caption{Responses, $K=10$. Ratios below one favor the row method.}
\begin{tabular}{lrrrrrr}\toprule
&\multicolumn{2}{c}{Pooled progressive}&\multicolumn{2}{c}{Task-aware progressive}&\multicolumn{2}{c}{Full prefix}\\
Event / $B$&MSE&Width&MSE&Width&MSE&Width\\\midrule
S / 2,000&2.64 (22)&2.06 (22)&1.36 (22)&1.83 (22)&1.23 (22)&1.12 (22)\\
S / 10,000&3.17 (22)&1.86 (22)&1.57 (22)&2.29 (22)&1.16 (22)&1.46 (22)\\
S / 30,000&3.29 (22)&1.77 (22)&1.59 (22)&2.20 (22)&1.30 (22)&1.44 (22)\\
S / 100,000&3.24 (21)&1.69 (21)&1.67 (21)&2.04 (21)&1.21 (21)&1.40 (21)\\
F / 2,000&3.62 (12)&1.87 (22)&1.59 (12)&13.63 (22)&6.92 (12)&4.76 (22)\\
F / 10,000&2.66 (12)&1.79 (22)&1.59 (12)&21.77 (22)&6.78 (12)&16.88 (22)\\
F / 30,000&3.37 (12)&1.74 (22)&1.75 (12)&34.46 (22)&7.62 (12)&37.56 (22)\\
F / 100,000&4.47 (7)&1.61 (7)&2.06 (7)&3.84 (7)&10.41 (7)&4.17 (7)\\
\bottomrule\end{tabular}\end{table}
\begin{table}[H]
\centering\small
\caption{Responses, $K=100$. Ratios below one favor the row method.}
\begin{tabular}{lrrrrrr}\toprule
&\multicolumn{2}{c}{Pooled progressive}&\multicolumn{2}{c}{Task-aware progressive}&\multicolumn{2}{c}{Full prefix}\\
Event / $B$&MSE&Width&MSE&Width&MSE&Width\\\midrule
S / 2,000&2.41 (22)&2.08 (22)&2.05 (22)&1.94 (22)&1.59 (22)&1.05 (22)\\
S / 10,000&5.09 (22)&2.61 (22)&3.15 (22)&1.99 (22)&2.08 (22)&1.01 (22)\\
S / 30,000&3.56 (20)&2.09 (20)&1.47 (20)&1.91 (20)&1.69 (20)&1.30 (20)\\
S / 100,000&2.92 (19)&1.78 (19)&1.14 (19)&1.65 (19)&1.55 (19)&1.37 (19)\\
F / 2,000&6.79 (8)&2.26 (22)&4.63 (8)&55.26 (22)&26.51 (8)&33.62 (22)\\
F / 10,000&11.47 (8)&2.56 (8)&5.88 (8)&6.71 (8)&39.91 (8)&3.59 (8)\\
F / 30,000&5.32 (4)&1.97 (4)&1.84 (4)&2.73 (4)&7.61 (4)&2.25 (4)\\
F / 100,000&3.01 (2)&1.72 (2)&1.32 (2)&1.77 (2)&3.29 (2)&1.38 (2)\\
\bottomrule\end{tabular}\end{table}
\begin{table}[H]
\centering\small
\caption{Responses, $K=1000$. Ratios below one favor the row method.}
\begin{tabular}{lrrrrrr}\toprule
&\multicolumn{2}{c}{Pooled progressive}&\multicolumn{2}{c}{Task-aware progressive}&\multicolumn{2}{c}{Full prefix}\\
Event / $B$&MSE&Width&MSE&Width&MSE&Width\\\midrule
S / 10,000&1.04 (20)&1.70 (20)&0.89 (20)&1.57 (20)&1.85 (20)&1.19 (20)\\
S / 30,000&1.31 (20)&1.99 (20)&0.99 (20)&1.67 (20)&2.72 (20)&1.32 (20)\\
S / 100,000&3.12 (16)&1.83 (16)&1.35 (16)&1.67 (16)&3.51 (16)&1.16 (16)\\
F / 10,000&1.80 (4)&1.55 (5)&1.36 (4)&1.50 (5)&10.69 (4)&1.50 (5)\\
F / 30,000&3.43 (3)&2.07 (3)&1.85 (3)&2.31 (3)&17.47 (3)&1.97 (3)\\
F / 100,000&2.65 (1)&1.71 (1)&1.55 (1)&1.86 (1)&3.15 (1)&1.35 (1)\\
\bottomrule\end{tabular}\end{table}
\begin{table}[H]
\centering\small
\caption{Agents, $K=1$. Ratios below one favor the row method.}
\begin{tabular}{lrrrrrr}\toprule
&\multicolumn{2}{c}{Pooled progressive}&\multicolumn{2}{c}{Task-aware progressive}&\multicolumn{2}{c}{Full prefix}\\
Event / $B$&MSE&Width&MSE&Width&MSE&Width\\\midrule
S / 500&2.14 (12)&0.58 (12)&1.00 (12)&1.26 (12)&0.98 (12)&0.90 (12)\\
S / 1,500&2.25 (12)&0.54 (12)&0.92 (12)&1.35 (12)&0.97 (12)&1.09 (12)\\
S / 3,000&2.33 (12)&0.52 (12)&1.03 (12)&1.54 (12)&1.09 (12)&1.13 (12)\\
S / 5,000&--&--&--&--&--&--\\
F / 500&2.51 (12)&0.58 (12)&1.04 (12)&1.26 (12)&1.07 (12)&0.90 (12)\\
F / 1,500&2.49 (12)&0.54 (12)&0.99 (12)&1.35 (12)&1.10 (12)&1.09 (12)\\
F / 3,000&2.20 (12)&0.52 (12)&0.98 (12)&1.54 (12)&0.95 (12)&1.13 (12)\\
F / 5,000&--&--&--&--&--&--\\
\bottomrule\end{tabular}\end{table}
\begin{table}[H]
\centering\small
\caption{Agents, $K=2$. Ratios below one favor the row method.}
\begin{tabular}{lrrrrrr}\toprule
&\multicolumn{2}{c}{Pooled progressive}&\multicolumn{2}{c}{Task-aware progressive}&\multicolumn{2}{c}{Full prefix}\\
Event / $B$&MSE&Width&MSE&Width&MSE&Width\\\midrule
S / 500&1.80 (12)&1.04 (12)&1.37 (12)&1.71 (12)&1.24 (12)&0.93 (12)\\
S / 1,500&2.97 (12)&1.00 (12)&1.33 (12)&1.52 (12)&1.47 (12)&1.26 (12)\\
S / 3,000&3.64 (12)&1.05 (12)&1.43 (12)&1.85 (12)&1.43 (12)&1.36 (12)\\
S / 5,000&--&--&--&--&--&--\\
F / 500&3.11 (12)&1.20 (12)&2.24 (12)&2.32 (12)&2.19 (12)&1.06 (12)\\
F / 1,500&3.85 (12)&1.25 (12)&2.20 (12)&2.31 (12)&2.10 (12)&1.68 (12)\\
F / 3,000&10.67 (9)&2.06 (9)&5.25 (9)&3.94 (9)&4.33 (9)&2.88 (9)\\
F / 5,000&--&--&--&--&--&--\\
\bottomrule\end{tabular}\end{table}
\begin{table}[H]
\centering\small
\caption{Agents, $K=5$. Ratios below one favor the row method.}
\begin{tabular}{lrrrrrr}\toprule
&\multicolumn{2}{c}{Pooled progressive}&\multicolumn{2}{c}{Task-aware progressive}&\multicolumn{2}{c}{Full prefix}\\
Event / $B$&MSE&Width&MSE&Width&MSE&Width\\\midrule
S / 500&1.22 (12)&1.56 (12)&1.07 (12)&1.78 (12)&1.50 (12)&1.08 (12)\\
S / 1,500&2.34 (12)&1.46 (12)&1.81 (12)&1.85 (12)&2.80 (12)&1.25 (12)\\
S / 3,000&1.96 (10)&1.01 (10)&1.18 (10)&1.82 (10)&2.44 (10)&2.19 (10)\\
S / 5,000&--&--&--&--&--&--\\
F / 500&1.55 (12)&1.53 (12)&1.26 (12)&2.43 (12)&4.34 (12)&1.53 (12)\\
F / 1,500&2.16 (9)&1.50 (9)&1.73 (9)&3.08 (9)&9.23 (9)&2.65 (9)\\
F / 3,000&1.35 (2)&0.74 (2)&1.20 (2)&1.99 (2)&5.40 (2)&3.47 (2)\\
F / 5,000&--&--&--&--&--&--\\
\bottomrule\end{tabular}\end{table}
\begin{table}[H]
\centering\small
\caption{Agents, $K=10$. Ratios below one favor the row method.}
\begin{tabular}{lrrrrrr}\toprule
&\multicolumn{2}{c}{Pooled progressive}&\multicolumn{2}{c}{Task-aware progressive}&\multicolumn{2}{c}{Full prefix}\\
Event / $B$&MSE&Width&MSE&Width&MSE&Width\\\midrule
S / 500&1.21 (12)&1.79 (12)&1.23 (12)&1.75 (12)&1.76 (12)&1.20 (12)\\
S / 1,500&0.90 (12)&1.45 (12)&0.92 (12)&1.81 (12)&2.41 (12)&1.49 (12)\\
S / 3,000&0.60 (6)&0.85 (6)&0.52 (6)&1.64 (6)&1.81 (6)&1.83 (6)\\
S / 5,000&--&--&--&--&--&--\\
F / 500&1.29 (12)&1.70 (12)&1.23 (12)&2.95 (12)&6.83 (12)&2.26 (12)\\
F / 1,500&0.82 (2)&1.39 (2)&0.90 (2)&1.74 (2)&2.63 (2)&1.56 (2)\\
F / 3,000&--&--&--&--&--&--\\
F / 5,000&--&--&--&--&--&--\\
\bottomrule\end{tabular}\end{table}

\clearpage\section{Exploratory noncensus agent replication}\label{app:agent_replication}
This post-review analysis reuses all twelve already-seen agent configurations at $K=2$, $L=5$, two paths per task, and $B=2MK$. It is not held-out validation. The declared complete grid uses 500 randomizations (seed 20260929) and records code/input hashes before computation: 36,000 executions, three policies, 24 conditions. No budget violations occurred. DeepSWE/R2E/default uses 491 complete-case tasks with the missingness limitation in Appendix~\ref{app:empirical}; all others have 500.
Pair and Hull use identical task-balanced observations; U-adapt uses cost-adaptive uniform sampling at the same allowed episode budget. A subsequent full-grid correction adds U-fixed: $2M$ uniformly sampled paths with an exact hypergeometric interval, 12,000 replays (seed 20261001). This baseline was added after the other results were known. MSE ratio below is fixed task-balanced replication / balanced adaptive completion. Each interval is individually 95\%; no uncorrected minimum across intervals is used.
\begin{table}[H]\centering\footnotesize
\caption{Agent two-path design, first success. Mean confidence width, MSE ratio, and realized budget fraction.}
\begin{tabular}{lrrrrrr}\toprule
Configuration & Pair & Hull & U-adapt & U-fixed & MSE ratio & Cost/$B$\\\midrule
Nano / Qwen3-32B / default&0.0468&0.0606&0.0686&0.0403&1.01&0.92\\
Nano / Qwen3-32B / zero&0.0468&0.0603&0.0682&0.0401&1.18&0.92\\
Nano / DeepSWE / default&0.0570&0.0722&0.0748&0.0478&1.22&0.84\\
Nano / DeepSWE / zero&0.0533&0.0646&0.0719&0.0432&1.01&0.90\\
Nano / Devstral 2 / default&0.0436&0.0666&0.0571&0.0446&2.01&0.68\\
Nano / Devstral 2 / zero&0.0446&0.0662&0.0564&0.0443&2.02&0.68\\
R2E / Qwen3-32B / default&0.0584&0.0684&0.0744&0.0456&1.26&0.88\\
R2E / Qwen3-32B / zero&0.0582&0.0673&0.0740&0.0449&1.03&0.89\\
R2E / DeepSWE / default&0.0541&0.0735&0.0750&0.0486&1.32&0.83\\
R2E / DeepSWE / zero&0.0510&0.0638&0.0714&0.0426&1.33&0.90\\
R2E / Devstral 2 / default&0.0611&0.0729&0.0745&0.0484&1.47&0.83\\
R2E / Devstral 2 / zero&0.0617&0.0729&0.0744&0.0485&1.32&0.82\\
\bottomrule\end{tabular}\end{table}
\begin{table}[H]\centering\footnotesize
\caption{Agent two-path design, first failure. Mean confidence width, MSE ratio, and realized budget fraction.}
\begin{tabular}{lrrrrrr}\toprule
Configuration & Pair & Hull & U-adapt & U-fixed & MSE ratio & Cost/$B$\\\midrule
Nano / Qwen3-32B / default&0.0385&0.0438&0.0321&0.0304&3.21&0.58\\
Nano / Qwen3-32B / zero&0.0364&0.0444&0.0325&0.0307&3.43&0.58\\
Nano / DeepSWE / default&0.0550&0.0602&0.0494&0.0400&2.39&0.66\\
Nano / DeepSWE / zero&0.0455&0.0492&0.0372&0.0335&3.87&0.60\\
Nano / Devstral 2 / default&0.0506&0.0727&0.0732&0.0480&1.33&0.82\\
Nano / Devstral 2 / zero&0.0517&0.0727&0.0735&0.0481&1.40&0.82\\
R2E / Qwen3-32B / default&0.0501&0.0517&0.0402&0.0349&2.37&0.62\\
R2E / Qwen3-32B / zero&0.0481&0.0491&0.0380&0.0335&2.58&0.61\\
R2E / DeepSWE / default&0.0553&0.0639&0.0535&0.0426&2.40&0.67\\
R2E / DeepSWE / zero&0.0380&0.0468&0.0352&0.0321&2.67&0.60\\
R2E / Devstral 2 / default&0.0592&0.0619&0.0520&0.0411&2.16&0.67\\
R2E / Devstral 2 / zero&0.0595&0.0619&0.0523&0.0412&2.09&0.68\\
\bottomrule\end{tabular}\end{table}
Pair beats Hull width in 24/24 conditions and U-adapt in 14/24; all 48 difference Monte Carlo intervals exclude zero (2000 bootstrap resamples, seed 20260930). However, Pair beats U-fixed in only 1/24 (median width ratio 1.248), while task-balanced replication has lower MSE in 24/24 (median ratio 0.370). Fixed uniform avoids the sequential confidence penalty, reversing most apparent practical gains. All 48,000 rows were reaggregated, all Pair/Hull endpoints recomputed, and 144 selected policy runs freshly replayed. All denominators are positive. Bootstrap intervals quantify only conditional replay error; independently randomized policies share no common-random-number advantage. Minimum coverage is 0.998 for Pair and 0.940 for U-fixed (500-replay binomial interval $[0.915,0.959]$), not evidence against its exact guarantee. Balanced adaptive has lower empirical MSE throughout, but not every difference is resolved by its Monte Carlo interval.

\section{Sharp interpolation over task-covering adaptive policies}\label{app:partial}

This section proves Theorem~\ref{thm:partial}. The design and theorem were
developed after the preceding empirical analyses. Its exhaustive checks and
exploratory agent test are separate from the original locked study.
The upper bound uses a fixed random-subset design. The lower bound applies
uniformly to all task-covering adaptive policies with the same hard budget,
and hence also to this particular design. The confidence formula below itself
still requires label-independent audit selection; no task purity is assumed
known. The policy-class converse was developed after the fixed-design analysis.

\subsection{Design and a constructive honest interval}

Write $p_i=L^{-1}\sum_rY_{ir}$. In each task couple the design to a uniform
ordered pair of distinct paths, with labels $(X_i,X'_i)$, independently across
tasks. Choose a uniform subset $S$ of exactly $t$ tasks independently of all
these draws. Purchase $X_i$ in every task and $X'_i$ only for $i\in S$.
The other second labels are a proof coupling, not observations.
Every purchased path is completed through the prefix oracle, with reservation
$(M+t)K$; no saved terminal cost is recycled. For $i\notin S$ put $A_i=X_i$;
otherwise put $A_i=(X_i+X'_i)/2$. Conditional on every $S$, the $A_i$ are
independent and have means $p_i$, so $\widehat\theta=M^{-1}\sum_i A_i$ is unbiased.

Let $D_i=\ind\{X_i\ne X'_i\}$, $d=\sum_{i\in S}D_i$, and
$q_i=\E D_i=2Lp_i(1-p_i)/(L-1)$, with $\bar q=M^{-1}\sum_iq_i$.
For $t\ge1$, the nonnegative elementary-symmetric-mean inequality gives,
for every $\lambda\ge0$,
\begin{align}\label{eq:partial_mgf}
 \E e^{-\lambda d}
 &=\binom Mt^{-1}\sum_{|S|=t}\prod_{i\in S}(1-q_i+q_i e^{-\lambda})\\
 &\le (1-\bar q+\bar q e^{-\lambda})^t
 \le \exp\{t\bar q(e^{-\lambda}-1)\}.\nonumber
\end{align}
This is an unconditional comparison over the randomized subset, not a
binomial model for $d$ or a statement conditional on an arbitrary $S$.
Define $U(d,\delta;t)$ to be the largest $u\in[d,t]$ with
$u-d+d\log(d/u)\le\log(1/\delta)$, interpreting $0\log0=0$.
The endpoints are $U(0,\delta;t)=\min\{t,\log(1/\delta)\}$ and
$U(t,\delta;t)=t$. For $\mu=t\bar q$ and an integer $k<\mu$,
optimizing exponential Markov in Equation~\eqref{eq:partial_mgf} gives
$\Prb(d\le k)\le\exp\{-\mu+k-k\log(k/\mu)\}$.
For $k=0$ this is the limiting bound $e^{-\mu}$.
The event $\mu>U(d,\delta;t)$ is an initial segment of the count values;
applying this bound at its last value establishes failure probability at
most $\delta$ (the empty event causes no difficulty).

Sampling two distinct labels has covariance $-p_i(1-p_i)/(L-1)$.
Thus, conditional on any $S$,
\begin{equation}\label{eq:partial_v0}
 \Var(\widehat\theta\mid S)\le V_0
 :=M^{-2}\sum_i p_i(1-p_i)=\frac{L-1}{2LM}\bar q,
 \qquad |A_i-p_i|/M\le c_0:=\frac{L-1}{LM}.
\end{equation}
The last bound uses the finite $L$-grid: a nonconstant task's proportion
lies between $1/L$ and $1-1/L$; the two-draw bound is smaller still.
For $x=\log(4/\alpha)$, independent-sum Bernstein conditional on each $S$
bounds two-sided mean error by $\alpha/2$ at radius
$c_0x/3+\sqrt{2V_0x+(c_0x/3)^2}$. Both $V_0$ and $c_0$ are the same for
every $S$, so the inequality also holds unconditionally. On the audit event
of probability at least $1-\alpha/2$, $V_0$ is at most
\begin{equation}\label{eq:partial_vu}
 V_U=\min\left\{\frac1{4M},\frac{(L-1)U(d,\alpha/2;t)}{2LMt}\right\}.
\end{equation}
Monotonicity of the radius and a union bound therefore prove honesty of
$I_{\rm audit}=[\widehat\theta-r,\widehat\theta+r]\cap[0,1]$, where
$r=c_0x/3+\sqrt{2V_Ux+(c_0x/3)^2}$.
No independence between $d$ and $\widehat\theta$ is invoked.
At $t=0$ use Hoeffding radius $\sqrt{\log(2/\alpha)/(2M)}$ instead.
Purchased-label bounds can be intersected without error allocation.

On a pure cohort, $d=0$ and $\widehat\theta=\theta_{\mathcal C}$ surely.
For $t\ge1$, put $a=(L-1)\log(2/\alpha)x/L$ and
$b=(L-1)x/(3L)$. The untruncated width is at most
$2\sqrt a/\sqrt{Mt}+4b/M$. Since $1\le t\le M$, this is at most
$(2\sqrt{2a}+4\sqrt2b)/\sqrt{M(t+1)}$. The Hoeffding endpoint handles $t=0$.
These constants are uniform in $M,t$, for fixed $L,\alpha$.

The exact point-error identity, useful for checking the implementation, is
\begin{equation}\label{eq:partial_mse}
 \E(\widehat\theta-\theta_{\mathcal C})^2
 =\frac{\sum_i p_i(1-p_i)}{M^2}
   \left(1-\frac{t}{M}\frac{L}{2(L-1)}\right).
\end{equation}
It follows by averaging the conditional variance, because each task is audited
with probability $t/M$ and the conditional mean is constant.
The unknown $p_i$ are available only to the offline analyst, never the policy.

\subsection{A policy-uniform late-event subexperiment}

Fix any policy in $\Pi_{M,t}$ and an interval honest on every fixed cohort.
For any binary label table set exactly $T_{ir}=K$ for $Y_{ir}=0$ and
$T_{ir}=K+1$ for $Y_{ir}=1$. Both event times are allowed. An advance from
$a$ to $b<K$ always survives and costs $b-a$, regardless of the label.
An advance to $K$ costs $K-a$ and reveals the terminal label. Thus each
distinct revealed label has cumulative charge $K$; unfinished prefixes add
cost but no information. A pathwise budget $(M+t)K$ therefore implies
\begin{equation}\label{eq:adaptive_counts}
 r_i\ge1,\qquad n:=\sum_i r_i\le M+t,
 \qquad \sum_i(r_i-1)\le t,
\end{equation}
where $r_i$ counts revealed distinct labels in task $i$. This conclusion
uses task coverage, not a predetermined order of visits. A policy may recycle
savings and reveal more labels on other cohorts; the lower-bound priors
below have no such savings. No padding of the prefix budget is used.

Put $\epsilon=1/[4(t+1)]$. Under prior A, tasks are pure with independent
fair bits. Under prior B, each task is independently pure with probability
$1-\epsilon$; otherwise it has either one or $L-1$ positive paths with equal
probability and uniformly random placement. Both are distributions over fixed
cohorts, embedded with the event times above. Let $E$ be the event that all
revealed labels in each task agree. Let $T$ contain the full transcript and
independent policy/interval randomness. Common unfinished-prefix observations
may be retained in $T$.

For a pure-compatible transcript $\tau$, write $b_i$ for task $i$'s first
revealed bit and $r_i=r_i(\tau)$. Specified distinct positions all having
specified bit $b_i$ have probability $1/2$ under A and $f_{r_i}/2$ under B,
where
\begin{equation}
 f_1=1,\qquad f_r=1-\epsilon r/L\quad(2\le r\le L).
\end{equation}
For $r\ge2$, the contaminated component can agree only when its single
opposite bit avoids all $r$ positions. The policy's sequential action factors
are the same for the same preceding observations and cancel in a transcript
likelihood ratio, even if task identities, path identities, and stopping
depend on labels. Hence, as subprobability measures,
\begin{equation}\label{eq:partial_transcript}
 d\Prb_B(T\in d\tau,E)=w(\tau)d\Prb_A(T\in d\tau),
 \qquad w(\tau)=\prod_i f_{r_i(\tau)}.
\end{equation}
Using $r\le2(r-1)$ for $r\ge2$ and
$\prod_j(1-a_j)\ge1-\sum_j a_j$ gives
\begin{equation}\label{eq:partial_g}
 1\ge w(\tau)\ge1-\frac{2\epsilon}{L}(n-M)
 \ge1-\frac{2\epsilon t}{L}\ge\frac56=:g_0.
\end{equation}
Thus $\Prb_B(E)\ge g_0$. In general $w$ depends on $\tau$:
\emph{the conditional law B given $E$ need not equal A}.

Task coverage makes $m(T):=M^{-1}\sum_i b_i$ the true target under A.
The first event below is transcript-measurable, so likelihood domination
and A-honesty bound it; B-honesty bounds the second:
\begin{equation}\label{eq:partial_transfer}
 \Prb_B\{E,m(T)\notin I\}\le\alpha,
 \qquad \Prb_B\{E,\theta_B\notin I\}\le\alpha,
 \qquad \E_A|I|\ge\E_B[|I|\ind_E].
\end{equation}
These are integrated coverage statements, not posterior-coverage assumptions.
On $E$, failure of simultaneous inclusion has unnormalized probability at
most $2\alpha$.

\subsection{Adaptive posterior factorization and separation}

Conditional on a complete compatible transcript and its seeds, the policy's
action factors are constants with respect to the latent cohort. The remaining
constraints factor over tasks. The product B prior consequently gives
independent posterior task errors $\delta_i=p_i-b_i$, though their conditional
means generally do not vanish. For $b_i=0$, their laws are
\begin{center}\small
\begin{tabular}{lll}\toprule
Revealed count & Values of $\delta_i$ & Probability of each value\\\midrule
$r_i=1$ & $0$ & $1-\epsilon$\\
& $1/L$ & $\epsilon(L-1)/L$\\
& $(L-1)/L$ & $\epsilon/L$\\
$2\le r_i<L$ & $1/L$ & $q_{r_i}:=\epsilon(L-r_i)/(Lf_{r_i})$\\
& $0$ & $1-q_{r_i}$\\
$r_i=L$ & $0$ & $1$\\\bottomrule
\end{tabular}
\end{center}
For $b_i=1$ reflect these values. At $r_i=1$ the variance is
\begin{equation}
 \frac{\epsilon(L-1)}{L^2}
 \left(1-\frac{4\epsilon(L-1)}{L^2}\right)
 \ge\frac{\epsilon}{2L^3}.
\end{equation}
For $2\le r_i<L$, $q_{r_i}\ge\epsilon/L$ and $q_{r_i}\le1/3$,
so the variance $q_{r_i}(1-q_{r_i})/L^2$ has the same lower bound.
At most $\lfloor t/(L-1)\rfloor$ tasks are fully observed; at least $M/2$
retain this positive variance. With all moments below conditional on $T,E$,
write
\begin{equation}\label{eq:partial_vlower}
 H=M(\theta_B-m(T))=\sum_i\delta_i,\quad
 \mu=\E_B H,\quad v=\Var_B H\ge a_L\frac{M}{t+1},
 \qquad a_L=\frac1{16L^3}.
\end{equation}
Each centered increment $Z_i=\delta_i-\E\delta_i$ has $|Z_i|\le1$.
Thus $\sum_i\E Z_i^4\le v$ and $|\sum_i\E Z_i^3|\le v$.
Independence gives, with $s=\E H^2=\mu^2+v$,
\begin{align}
 \E H^4
 &\le\mu^4+6\mu^2v+4|\mu|v+3v^2+v\\
 &\le3s^2+4|\mu|v+v.\nonumber
\end{align}
The maximum of $4|\mu|v/(\mu^2+v)^2$ over $\mu$ is
$9/(4\sqrt{3v})$, attained at $|\mu|=\sqrt{v/3}$, and $v/s^2\le1/v$.
For $v\ge4$ their sum is at most $9/(8\sqrt3)+1/4<1$, so
$\E H^4\le4s^2$. Paley--Zygmund applied to $H^2$ therefore gives
\begin{equation}
 \Prb_B\{|H|\ge\sqrt{s/10}\mid T,E\}\ge\frac{81}{400}.
\end{equation}
If $a_LM/(t+1)\ge4$, every compatible transcript satisfies this branch.
Since $s\ge v$, integration over $E$ and Equation~\eqref{eq:partial_transfer}
give
\begin{equation}
 \E_A|I|\ge\frac{c_1}{\sqrt{M(t+1)}},\qquad
 c_1=\left(\frac56\frac{81}{400}-2\alpha\right)\sqrt{a_L/10}>0.
\end{equation}
At $\alpha=1/12$ the parenthesis is $1/480$. No independence of separation
and simultaneous-coverage events is required. The supremum over pure cohorts
is at least this A-average.

\subsection{The small-scale branch without padding}

If $a_LM/(t+1)<4$, compare the all-zero cohort law $\Prb_0$ with a prior
$\Prb_1$ placing one positive path uniformly among the $ML$ positions, using
the same late-event embedding. For a zero-compatible transcript with
$n(\tau)\le M+t$ revealed labels,
\begin{equation}
 d\Prb_1(\tau,\text{no hit})=q(\tau)d\Prb_0(\tau),\qquad
 q(\tau)=1-\frac{n(\tau)}{ML}\ge1-\frac{M+t}{ML}=:q_{\min}\ge\frac13.
\end{equation}
This is a likelihood identity: exactly the unqueried positive locations
produce that same zero transcript. It is not a conditional no-hit probability
given an already observed zero transcript. Variable stopping makes $q(\tau)$
nonconstant, but honesty implies
$\alpha\ge q_{\min}\Prb_0\{1/(ML)\notin I\}$.
Together with $\Prb_0(0\notin I)\le\alpha$, this yields
\begin{equation}
 \E_0|I|\ge\frac{1-\alpha-\alpha/q_{\min}}{ML}
 \ge\frac{1-4\alpha}{ML}
 >\frac{c_2}{\sqrt{M(t+1)}},\qquad
 c_2=\frac{(1-4\alpha)\sqrt{a_L}}{2L}>0.
\end{equation}
The last step uses $t+1>a_LM/4$. Taking $\min(c_1,c_2)$ gives a lower
constant independent of the policy and interval. Infimizing over all honest
task-covering policy--interval pairs proves Theorem~\ref{thm:partial}.
For $L=2,t=M$, full census is feasible, so that restriction cannot be removed.
Task coverage is also a stated restriction; no result here optimizes over
policies allowed to leave tasks unobserved.

\subsection{Relation to established inference results}

Honesty over a larger class while minimizing expected length on a smaller
class is a classical adaptation problem \citep{cai2004}. One-per-stratum
variance estimation and replication-based remedies are established in spatial
sampling \citep{barabesi2012}. That analysis uses geometric and smoothness
conditions, not arbitrary binary-cohort honesty. Empirical Bernstein
certificates, including independent nonidentical variables, are classical
\citep{maurer2009}. Applying ordinary sample variance across task means still
retains between-task variation on heterogeneous pure cohorts.
\citet{burgess2021} already give finite-stratum empirical Bernstein inference;
their sampling algorithm initializes at least two observations in every
stratum. The contribution here is the matching uniform $M,t$ rate over
task-covering hard-budget policies, attained by a random-subset audit, not
replication, concentration, or the adaptation formulation themselves. It does
not optimize finite constants or cover policies allowed to omit tasks, and it
establishes no general finite-budget dominance over pooled inference.

\section{Complete partial-replication agent exploration}\label{app:partial_empirical}

\paragraph{Chronology and design.}
After deriving the fixed-design version of Theorem~\ref{thm:partial}, we declared a complete exploratory
grid on all twelve previously inspected coding-agent configurations. This is
not held-out evidence or a new model-generation experiment. The input manifest
hashes the protocol, policies, theorem note and all twelve banks before the
new computation. At $K=2,L=5$, both events and 500 evaluator randomizations,
we use $t\in\{0,\lceil\sqrt M\rceil,\lceil M/10\rceil,\lceil M/4\rceil,
\lceil M/2\rceil,M\}$ and $B=2(M+t)$. All 288,000 policy runs are retained.
The $491$-task exception and missingness interpretation are unchanged.
The six choices are not selected from the results.

\paragraph{Comparators and error allocation.}
Partial replication is compared to pooled fixed uniform sampling, uniform
adaptive completion, and balanced adaptive completion. Fixed uniform purchases
exactly $M+t$ paths and uses an exact hypergeometric interval. It has the same
worst-case reservation and expected realized charge as partial replication,
because each path has marginal inclusion $(M+t)/(ML)$ under either design.
Uniform adaptive uses a valid confidence sequence and recycles saved cost.
Balanced adaptive is a point-estimation heuristic, with only a logical interval.

On the \emph{same} partial-replication sample, we compare the Audit interval
of Equation~\eqref{eq:main_partial} to a mixed-count exact-law Hull interval
defined below. We additionally retain the older bounded-KL/finite-complement
interval, with fixed counts $m_i\in\{1,2\}$ and Hoeffding variance proxy
$\sum_i\min(m_i,L-m_i)/(4M^2m_i^2)$, and two valid Bonferroni intersections:
Audit/Hull and Audit/older-bound, each component at failure probability
$\alpha/2$. An unadjusted minimum of nominal-95\% widths is not a valid
selection rule and is not used. At $t=M$ we separately report the dedicated
Pair certificate, rather than representing the generic Audit formula as
the strongest two-path procedure. All intervals use purchased-label clipping.

\subsection{Mixed-count exact-law Hull baseline}

Here ``exact-law'' refers to the finite local sampling distributions. The
composition relaxation and Chernoff inversion yield a conservative interval,
not exact tail inversion for the full observation law or an optimal
confidence rule for this design.

Conditional on the preselected subset, use the integer statistic
$E_2=2\sum_i A_i$. For a task with $h$ positive paths, its contribution is
$2X_i$ if unaudited, or $X_i+X'_i$ if audited. For a fixed $z<0$, define
\begin{align}
 g_1(h,z)&=\log\{1-h/L+(h/L)e^{2z}\},\\
 g_2(h,z)&=\log\left\{
 \frac{(L-h)(L-h-1)+2h(L-h)e^z+h(h-1)e^{2z}}{L(L-1)}\right\}.
\end{align}
Let $\bar g_j(\cdot,z)$ be the least concave majorant of the grid values
$g_j(0,z),\ldots,g_j(L,z)$, linearly interpolated between its vertices.
With $n_1=M-t,n_2=t$ and total cohort positives $H=ML\theta$, independence
and concavity bound the conditional log MGF by
\begin{equation}\label{eq:mixed_hull}
 G_z(H)=\max_{\substack{0\le h_1,h_2\le L\\n_1h_1+n_2h_2=H}}
 \{n_1\bar g_1(h_1,z)+n_2\bar g_2(h_2,z)\}.
\end{equation}
When a group is empty its term and variable are omitted. This relaxes unknown
integer task compositions, so it is conservative. Because each majorant is
piecewise-linear concave, multiply segment lengths by group size, merge the
two groups' segment slopes in descending order, and consume total resource
$H$. This solves Equation~\eqref{eq:mixed_hull}; decreasing slopes ensure
that any segment's predecessors in its group are consumed first.

For each candidate $\theta$, exponential Markov gives the lower-tail bound
$\Prb(E_2\le e\mid S)\le\exp\{G_z(ML\theta)-ze\}$.
We minimize over the fixed grid $z_j=-2^{-10+18j/255}$, $j=0,\ldots,255$.
For any fixed candidate target, the corresponding rejection events are nested
lower tails in the scalar observation $E_2$, so minimizing these bounds does
not require a union penalty over tilts. Invert the $\alpha/2$ tail bound for
the upper endpoint and use complementary labels for the lower endpoint.
The group bound is independent of which tasks comprise $S$, proving coverage
conditionally on every selected subset, hence unconditionally. The numerical
implementation uses 46 bisection steps. At $t=M$ it reduces to the earlier
equal-two-count Hull; at $t=0$ it gives its one-draw counterpart.

\subsection{Results and implementation checks}

Every budget retains all 24 panel/event conditions. The full tables below
report the original Audit comparisons; Table~\ref{tab:partial} instead shows
the later Joint refinement from Appendix~\ref{app:joint_partial}.
Original Audit never beats fixed-uniform width in
the 144 cells; neither does the valid Audit/Hull intersection. The latter
beats nominal-95\% Hull in $0,0,5,10,12,0$ cells across the six budgets,
illustrating the cost of its error split. Audit improves on same-label Hull
most often at intermediate budgets, not at either endpoint. Its generic
variance upper bound deliberately retains the one-draw variance envelope,
whereas the dedicated Pair and full-replication bounds exploit the smaller
two-draw variance. The asymptotic theorem does not optimize these constants.

Partial replication's empirical MSE is lower than fixed uniform in all 144
cells, with every difference's Monte Carlo interval excluding zero. It is
lower than balanced adaptive in only $0,2,1,2,1,0$ cells; none of those six
apparent wins excludes zero under its replay-bootstrap interval. Median
realized budget utilization is approximately 0.750 at every budget.
Minimum Audit and Hull empirical coverage is 1.000, versus 0.996 for uniform
adaptive and 0.930 for fixed uniform over this grid. A selected minimum across
144 finite Monte Carlo cells is not a coverage guarantee or evidence against
the exact hypergeometric proof; all cellwise values and Monte Carlo uncertainty
are retained. The rate theorem is established by its proof, not by high coverage
in these heterogeneous, previously seen panels.

The analyzer reconstructs 444,000 partial interval pairs, reaggregates all
288,000 rows, and freshly executes 1,152 selected policy randomizations.
Summary discrepancies and budget violations are zero. Separate synthetic
checks enumerate 6,502 Audit coverage cases and 3,251 Hull cases at small
$M,L,t$; check the exact MSE identity; and test 2,292,736 conditional MGF
inequalities for the Hull relaxation. Maximum floating-point log-MGF excess
in those checks is $4.6\times10^{-13}$, and full-replication endpoint agreement
with the previous Hull implementation is within $2.8\times10^{-14}$.
Finite enumerations and floating-point tolerances are numerical checks, not
substitutes for the analytical guarantees.

The complete machine-readable outputs include bias, MSE, width, coverage,
realized charge and every interval variant in each panel/event/budget cell.
Two thousand bootstrap samples use seed 20261006, pairing metrics within a
policy and independently resampling independently randomized policies.
These intervals quantify replay Monte Carlo uncertainty conditional on the
fixed banks, not new-task, new-model or deployment uncertainty.
Tables~\ref{tab:partial_all_hull}--\ref{tab:partial_all_uniform} show every
Audit width ratio against the two principal fixed-time comparators.

\raggedbottom
\begin{table}[H]
\centering\small
\caption{Audit / same-label mixed-count Hull mean-width ratios, all configurations and both events. S: first success; F: first failure. All denominators are positive; below one favors Audit.}
\label{tab:partial_all_hull}
\begin{tabular}{llrrrrrr}\toprule
Configuration & Event & $0$ & $\lceil\sqrt M\rceil$ & $\lceil M/10\rceil$ & $\lceil M/4\rceil$ & $\lceil M/2\rceil$ & $M$\\\midrule
Nano / Qwen / default & S & 1.237 & 1.138 & 0.965 & 0.854 & 0.873 & 1.164\\
Nano / Qwen / default & F & 1.697 & 1.410 & 1.172 & 1.000 & 0.957 & 1.395\\
Nano / Qwen / zero & S & 1.245 & 1.121 & 0.966 & 0.859 & 0.873 & 1.168\\
Nano / Qwen / zero & F & 1.675 & 1.368 & 1.142 & 0.967 & 0.915 & 1.324\\
Nano / DeepSWE / default & S & 1.038 & 1.064 & 0.933 & 0.863 & 0.888 & 1.153\\
Nano / DeepSWE / default & F & 1.245 & 1.235 & 1.067 & 0.970 & 0.993 & 1.331\\
Nano / DeepSWE / zero & S & 1.147 & 1.125 & 0.975 & 0.890 & 0.905 & 1.210\\
Nano / DeepSWE / zero & F & 1.509 & 1.371 & 1.146 & 1.014 & 0.979 & 1.404\\
Nano / Devstral / default & S & 1.112 & 0.984 & 0.838 & 0.758 & 0.761 & 1.000\\
Nano / Devstral / default & F & 1.030 & 0.977 & 0.844 & 0.783 & 0.801 & 1.031\\
Nano / Devstral / zero & S & 1.121 & 0.978 & 0.859 & 0.771 & 0.782 & 1.026\\
Nano / Devstral / zero & F & 1.030 & 0.982 & 0.855 & 0.796 & 0.816 & 1.046\\
R2E / Qwen / default & S & 1.085 & 1.105 & 0.991 & 0.913 & 0.945 & 1.254\\
R2E / Qwen / default & F & 1.441 & 1.336 & 1.169 & 1.043 & 1.008 & 1.435\\
R2E / Qwen / zero & S & 1.102 & 1.119 & 1.007 & 0.934 & 0.956 & 1.267\\
R2E / Qwen / zero & F & 1.511 & 1.392 & 1.179 & 1.047 & 1.011 & 1.463\\
R2E / DeepSWE / default & S & 1.027 & 1.007 & 0.883 & 0.818 & 0.840 & 1.079\\
R2E / DeepSWE / default & F & 1.172 & 1.151 & 1.010 & 0.923 & 0.947 & 1.265\\
R2E / DeepSWE / zero & S & 1.161 & 1.101 & 0.961 & 0.868 & 0.884 & 1.177\\
R2E / DeepSWE / zero & F & 1.587 & 1.295 & 1.101 & 0.942 & 0.897 & 1.294\\
R2E / Devstral / default & S & 1.018 & 1.070 & 0.973 & 0.913 & 0.947 & 1.229\\
R2E / Devstral / default & F & 1.209 & 1.240 & 1.110 & 1.016 & 1.043 & 1.405\\
R2E / Devstral / zero & S & 1.017 & 1.082 & 0.988 & 0.922 & 0.960 & 1.243\\
R2E / Devstral / zero & F & 1.207 & 1.232 & 1.103 & 1.030 & 1.049 & 1.411\\
\bottomrule\end{tabular}
\end{table}
\begin{table}[H]
\centering\small
\caption{Audit / fixed uniform hypergeometric mean-width ratios, all configurations and both events. S: first success; F: first failure. All denominators are positive; below one favors Audit.}
\label{tab:partial_all_uniform}
\begin{tabular}{llrrrrrr}\toprule
Configuration & Event & $0$ & $\lceil\sqrt M\rceil$ & $\lceil M/10\rceil$ & $\lceil M/4\rceil$ & $\lceil M/2\rceil$ & $M$\\\midrule
Nano / Qwen / default & S & 1.830 & 1.721 & 1.503 & 1.402 & 1.488 & 1.751\\
Nano / Qwen / default & F & 2.428 & 2.075 & 1.779 & 1.644 & 1.722 & 2.018\\
Nano / Qwen / zero & S & 1.841 & 1.693 & 1.502 & 1.406 & 1.488 & 1.757\\
Nano / Qwen / zero & F & 2.409 & 2.011 & 1.739 & 1.601 & 1.643 & 1.913\\
Nano / DeepSWE / default & S & 1.546 & 1.612 & 1.441 & 1.382 & 1.459 & 1.742\\
Nano / DeepSWE / default & F & 1.842 & 1.866 & 1.659 & 1.593 & 1.694 & 2.001\\
Nano / DeepSWE / zero & S & 1.710 & 1.708 & 1.511 & 1.446 & 1.518 & 1.809\\
Nano / DeepSWE / zero & F & 2.207 & 2.052 & 1.763 & 1.676 & 1.747 & 2.059\\
Nano / Devstral / default & S & 1.656 & 1.497 & 1.299 & 1.228 & 1.268 & 1.495\\
Nano / Devstral / default & F & 1.538 & 1.485 & 1.305 & 1.252 & 1.315 & 1.560\\
Nano / Devstral / zero & S & 1.667 & 1.486 & 1.335 & 1.248 & 1.306 & 1.534\\
Nano / Devstral / zero & F & 1.538 & 1.491 & 1.322 & 1.274 & 1.340 & 1.582\\
R2E / Qwen / default & S & 1.621 & 1.682 & 1.537 & 1.474 & 1.573 & 1.882\\
R2E / Qwen / default & F & 2.113 & 2.012 & 1.811 & 1.715 & 1.785 & 2.128\\
R2E / Qwen / zero & S & 1.643 & 1.699 & 1.562 & 1.510 & 1.595 & 1.896\\
R2E / Qwen / zero & F & 2.204 & 2.091 & 1.815 & 1.730 & 1.813 & 2.151\\
R2E / DeepSWE / default & S & 1.534 & 1.531 & 1.365 & 1.308 & 1.380 & 1.630\\
R2E / DeepSWE / default & F & 1.749 & 1.753 & 1.569 & 1.504 & 1.600 & 1.896\\
R2E / DeepSWE / zero & S & 1.733 & 1.673 & 1.489 & 1.415 & 1.491 & 1.761\\
R2E / DeepSWE / zero & F & 2.287 & 1.926 & 1.693 & 1.556 & 1.612 & 1.882\\
R2E / Devstral / default & S & 1.526 & 1.628 & 1.504 & 1.462 & 1.553 & 1.851\\
R2E / Devstral / default & F & 1.796 & 1.884 & 1.725 & 1.657 & 1.773 & 2.112\\
R2E / Devstral / zero & S & 1.524 & 1.645 & 1.526 & 1.475 & 1.572 & 1.870\\
R2E / Devstral / zero & F & 1.793 & 1.874 & 1.721 & 1.683 & 1.782 & 2.123\\
\bottomrule\end{tabular}
\end{table}

\clearpage
\section{A declared finite-size and near-purity study}\label{app:synthetic}

\begin{figure}[H]
\centering
\includegraphics[width=\linewidth]{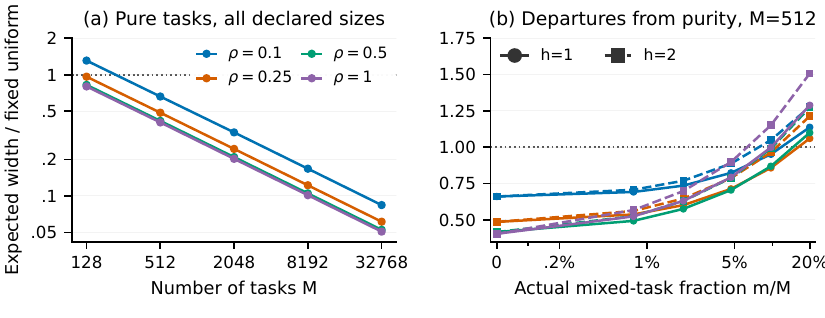}
\caption{A finite regime for auditing, and its limits: $L=5$, 95\% intervals,
$t=\lceil\rho M\rceil$, equal hard cost $(M+t)K$. Ratios compare expected
Audit width with exact fixed-uniform width; below one favors Audit.
Left: all declared pure-cohort sizes. Right: $M=512$, with all budgets and
both mixing families. Curves use numerical integration of exact finite laws,
not new model outcomes.}
\label{fig:replication}
\end{figure}

\paragraph{Scope and chronology.}
After the complete partial-agent analysis, a private review asked where the
pure-cohort rate advantage becomes useful at finite $M$ and whether any
advantage survives modest within-task variation. We declared the following
synthetic grid before computing it. It is a controlled fixed-cohort analysis,
not a reserved public-data test, new model generations, or a study of how often
these regimes arise. A later, separately recorded deterministic integration
checks all width comparisons on the unchanged grid. No constants, error
splits, interval mixtures or grid points were tuned to its outcomes.

\paragraph{Every cohort and budget.}
Set $L=5$, $\alpha=.05$, $M\in\{128,512,2048,8192,32768\}$,
$t=\lceil\rho M\rceil$ for $\rho\in\{.1,.25,.5,1\}$, and requested
$\eta\in\{0,.002,.01,.02,.05,.1,.2\}$. The actual even mixed-task count is
$m=2\lfloor\eta M/2\rfloor$. For each $h\in\{1,2\}$, assign
$(M-m)/2$ tasks each to positive counts zero and five, and $m/2$ tasks each
to positive counts $h$ and $5-h$. Thus $\theta=1/2$ exactly. At the same
mixed fraction, $h=2$ gives more within-task variation than $h=1$.
The late-event embedding charges exactly $M+t$ labels, or $(M+t)K$ units,
to both partial replication and fixed uniform. There is no cost recycling.

Pure duplicates share one canonical cell. Requested $\eta=.002,.01$ both
round to zero at $M=128$, and $\eta=.002$ rounds to zero at $M=512$.
The full 280-request map therefore gives 236 partial cells and 20 distinct
fixed-uniform references. We retain the map, not 280 ostensibly independent
experiments. Each canonical cell has 10,000 evaluator randomizations, with
SHA-256-derived streams from seed 20261007: 2.36 million partial samples and
200,000 fixed-uniform samples. The pre-run manifest hashes the protocol,
runner and ten inference/sampling modules.

\paragraph{Exact grouped sampling, not an iid approximation.}
Let the four class sizes be $c_j$ and their positive-path counts be
$h_j\in\{0,5,h,5-h\}$. The audited class counts have the multivariate
hypergeometric law from choosing $t$ of the $M$ tasks. Conditional on them,
unaudited positive counts are independent $\Bin(c_j-a_j,h_j/5)$; audited
pair counts are independent multinomials of size $a_j$ with probabilities
\begin{equation}
 \left(\frac{(5-h_j)(4-h_j)}{20},
       \frac{2h_j(5-h_j)}{20},\frac{h_j(h_j-1)}{20}\right).
\end{equation}
These are the exact distributions of aggregates over fixed, independent
within-task random draws. If $U$ and $S$ count unaudited and audited purchased
positives, then $E_2=2U+S$, $P=U+S$, and $d$ counts pairs with one positive.
The joint sampler retains dependence among all three sufficient statistics.
For fixed uniform, the purchased-positive count has law
$\HG(5M,5M/2,M+t)$, the same for every $\eta,h$ at a given $M,t$.
Sharing one reference stream at that budget is therefore appropriate and
induces dependence across reported comparisons.

\paragraph{Intervals and uncertainty.}
Every cell reports the frozen generic Audit formula, the old fixed-count
baseline, and Pair at $t=M$, all with purchased-label clipping. Exact-law
mixed-count Hull is not recomputed in this synthetic grid; its optional
omission was recorded before outcomes and does not imply Audit is best on the
same labels. No unadjusted minimum of nominal intervals is treated as valid.
Two thousand empirical bootstrap resamples use seed 20261008, pairing
measurements within a policy and separately resampling independent policies.
Cellwise coverage intervals use binomial inversion. These quantify simulation
error, not new-task uncertainty or a simultaneous guarantee over 236 cells.

\paragraph{Deterministic checks and expected width.}
With $q=2h(5-h)/20$ and $Q$ the number of audited mixed tasks,
\begin{align}
 Q&\sim\HG(M,m,t), & d\mid Q&\sim\Bin(Q,q),\\
 \operatorname{MSE}_{\rm partial}
 &=\frac{mh(5-h)}{25M^2}\left(1-\frac{5t}{8M}\right),
 &\operatorname{MSE}_{\rm UF}
 &=\frac{5M-(M+t)}{4(M+t)(5M-1)}.
\end{align}
The variance check uses
$\Var(d)=q(1-q)tm/M+q^2t(m/M)(1-m/M)(M-t)/(M-1)$.
The marginal law for $d$ does not assert independence from the estimate.

In this particular symmetric grid, clipping is inactive for \emph{every}
possible sample, not only the simulated ones. Each pure-one task contributes
at least $3/(5M)$ to $\widehat\theta-P/(5M)$; other contributions are
nonnegative. The pure-zero group gives the analogous upper gap.
Both gaps are at least $3(M-m)/(10M)\ge.24$. By monotonicity in $d$,
the maximum possible Audit radius is $r(\min\{t,m\})$; its largest value
over the grid is below .141. Consequently width equals $2r(d)$ exactly.
The post-run analysis evaluates its expectation by summing the binomial
conditional law over the hypergeometric $Q$ law. The omitted contribution
is bounded by omitted probability, at most $3.7\times10^{-14}$ in this
calculation. Fixed-uniform expected widths sum their exact count law;
omitted probability is below $2\times10^{-13}$. These bounds control
truncation, not floating-point roundoff: the results are numerical
integrations of exact laws, not certified interval-arithmetic enclosures.
The complete simulation is separately reaggregated; all 2.36 million Audit
intervals are reconstructed and summary discrepancies are zero.

\paragraph{Results, including the limits.}
Tables~\ref{tab:synthetic_h1}--\ref{tab:synthetic_h2} give every requested
Audit comparison. Exact-law expectations confirm 185 wins and 51 losses,
with no crossing changed from the empirical comparison. The 10\% design's
pure advantage first appears between the tested sizes 128 and 512; the
other three audit fractions already win at the smallest tested size.
No finer or universal crossover threshold is claimed. At $M=128,t=32$,
only two mixed tasks remove the pure gain: ratios 1.0159 and 1.0417 for
$h=1,2$. At $M=512$, all designs still win with $m=24$, but not all with
$m=50$. The full-audit $h=1,m=50$ ratio is only 0.9986, a negligible
practical difference. At $M=8192$ and 32768, the 10\% and 25\% designs win
in both families throughout the tested grid, including about 20\% mixing.
These are genuinely nonzero-variation examples, not a near-pure minimax theorem.

Increasing the audit fraction need not improve the ratio. At $M=32768$,
$h=2$ and about 20\% mixing, it is $0.8460,0.8977,1.0064,1.2442$ across
the four budgets. Fixed uniform's budget also grows, and generic Audit does
not exploit the full-pair variance reduction. Pair is narrower than Audit
in all 59 full-audit cells and beats fixed uniform in 57, losing the two
largest-mixing $h=2$ cases at $M=128,512$
(Table~\ref{tab:synthetic_pair}). The old fixed-count baseline loses to
fixed uniform in all 236 cells. All intervals and uncertainty summaries,
including these losses, remain in the machine-readable artifact.

The exact partial MSE is below fixed uniform's throughout, with ratio zero
on pure cohorts and at most 0.283 over this grid. Audit empirical coverage
ranges from .9987 to one, Pair from .9976 to one, and fixed uniform from
.9470 to .9665; each below-.95 fixed-uniform cell's marginal Monte Carlo
interval includes .95. These finite simulations neither prove honesty
nor contradict the exact hypergeometric guarantee. The study fixes equal
$L=5$, balanced prevalence, symmetric mixtures and late-event costs; it does
not establish generalization to other prevalences, real cost distributions,
or future agent outputs. It does not overturn the complete public-bank
width comparisons.

\begin{table}[p]
\centering\small
\caption{Complete synthetic expected-width ratios: Audit / fixed uniform, $h=1$. Values below one favor Audit. Columns give requested $\eta$; the actual mixed count is $m=2\lfloor\eta M/2\rfloor$. Repeated pure cells are displayed for completeness, not counted as new evidence. Both expectations use exact finite laws, numerically summed as described in the text.}
\label{tab:synthetic_h1}
\begin{tabular}{rr*{7}{r}}\toprule
$M$ & $\rho$ & 0 & .002 & .01 & .02 & .05 & .1 & .2\\\midrule
128 & 0.1 & 1.3067 & 1.3067 & 1.3067 & 1.3383 & 1.3975 & 1.4768 & 1.6041\\
 & 0.25 & 0.9628 & 0.9628 & 0.9628 & 1.0159 & 1.1103 & 1.2290 & 1.4143\\
 & 0.5 & 0.8239 & 0.8239 & 0.8239 & 0.9064 & 1.0375 & 1.1844 & 1.3969\\
 & 1 & 0.8002 & 0.8002 & 0.8002 & 0.9376 & 1.1190 & 1.3041 & 1.5698\\
\midrule
512 & 0.1 & 0.6605 & 0.6605 & 0.6920 & 0.7355 & 0.8232 & 0.9512 & 1.1371\\
 & 0.25 & 0.4860 & 0.4860 & 0.5375 & 0.6014 & 0.7132 & 0.8577 & 1.0602\\
 & 0.5 & 0.4160 & 0.4160 & 0.4931 & 0.5755 & 0.7051 & 0.8689 & 1.0993\\
 & 1 & 0.4041 & 0.4041 & 0.5243 & 0.6306 & 0.7927 & 0.9986 & 1.2882\\
\midrule
2048 & 0.1 & 0.3343 & 0.3500 & 0.4039 & 0.4570 & 0.5729 & 0.7023 & 0.8863\\
 & 0.25 & 0.2444 & 0.2703 & 0.3444 & 0.4061 & 0.5330 & 0.6746 & 0.8758\\
 & 0.5 & 0.2092 & 0.2480 & 0.3385 & 0.4085 & 0.5528 & 0.7138 & 0.9424\\
 & 1 & 0.2024 & 0.2627 & 0.3770 & 0.4647 & 0.6454 & 0.8468 & 1.1325\\
\midrule
8192 & 0.1 & 0.1679 & 0.1966 & 0.2697 & 0.3284 & 0.4449 & 0.5767 & 0.7634\\
 & 0.25 & 0.1225 & 0.1649 & 0.2477 & 0.3118 & 0.4391 & 0.5828 & 0.7862\\
 & 0.5 & 0.1050 & 0.1610 & 0.2553 & 0.3283 & 0.4731 & 0.6364 & 0.8673\\
 & 1 & 0.1015 & 0.1780 & 0.2959 & 0.3871 & 0.5680 & 0.7717 & 1.0598\\
\midrule
32768 & 0.1 & 0.0842 & 0.1278 & 0.2062 & 0.2652 & 0.3825 & 0.5149 & 0.7022\\
 & 0.25 & 0.0614 & 0.1161 & 0.2016 & 0.2659 & 0.3937 & 0.5377 & 0.7414\\
 & 0.5 & 0.0526 & 0.1188 & 0.2161 & 0.2892 & 0.4344 & 0.5978 & 0.8290\\
 & 1 & 0.0508 & 0.1368 & 0.2584 & 0.3497 & 0.5307 & 0.7346 & 1.0229\\
\bottomrule\end{tabular}
\end{table}

\begin{table}[p]
\centering\small
\caption{Complete synthetic expected-width ratios: Audit / fixed uniform, $h=2$. Values below one favor Audit. Columns give requested $\eta$; the actual mixed count is $m=2\lfloor\eta M/2\rfloor$. Repeated pure cells are displayed for completeness, not counted as new evidence. Both expectations use exact finite laws, numerically summed as described in the text.}
\label{tab:synthetic_h2}
\begin{tabular}{rr*{7}{r}}\toprule
$M$ & $\rho$ & 0 & .002 & .01 & .02 & .05 & .1 & .2\\\midrule
128 & 0.1 & 1.3067 & 1.3067 & 1.3067 & 1.3539 & 1.4393 & 1.5466 & 1.6976\\
 & 0.25 & 0.9628 & 0.9628 & 0.9628 & 1.0417 & 1.1741 & 1.3301 & 1.5609\\
 & 0.5 & 0.8239 & 0.8239 & 0.8239 & 0.9450 & 1.1192 & 1.3017 & 1.5625\\
 & 1 & 0.8002 & 0.8002 & 0.8002 & 0.9969 & 1.2230 & 1.4505 & 1.7779\\
\midrule
512 & 0.1 & 0.6605 & 0.6605 & 0.7072 & 0.7692 & 0.8873 & 1.0490 & 1.2771\\
 & 0.25 & 0.4860 & 0.4860 & 0.5609 & 0.6468 & 0.7870 & 0.9643 & 1.2133\\
 & 0.5 & 0.4160 & 0.4160 & 0.5253 & 0.6296 & 0.7887 & 0.9901 & 1.2733\\
 & 1 & 0.4041 & 0.4041 & 0.5675 & 0.6980 & 0.8977 & 1.1510 & 1.5066\\
\midrule
2048 & 0.1 & 0.3343 & 0.3576 & 0.4322 & 0.5005 & 0.6432 & 0.8021 & 1.0281\\
 & 0.25 & 0.2444 & 0.2821 & 0.3782 & 0.4540 & 0.6099 & 0.7838 & 1.0307\\
 & 0.5 & 0.2092 & 0.2641 & 0.3768 & 0.4629 & 0.6403 & 0.8379 & 1.1181\\
 & 1 & 0.2024 & 0.2843 & 0.4250 & 0.5329 & 0.7548 & 1.0019 & 1.3520\\
\midrule
8192 & 0.1 & 0.1679 & 0.2088 & 0.3009 & 0.3730 & 0.5161 & 0.6777 & 0.9067\\
 & 0.25 & 0.1225 & 0.1800 & 0.2818 & 0.3605 & 0.5168 & 0.6930 & 0.9422\\
 & 0.5 & 0.1050 & 0.1783 & 0.2941 & 0.3837 & 0.5614 & 0.7614 & 1.0444\\
 & 1 & 0.1015 & 0.1995 & 0.3444 & 0.4564 & 0.6781 & 0.9277 & 1.2807\\
\midrule
32768 & 0.1 & 0.0842 & 0.1418 & 0.2380 & 0.3104 & 0.4543 & 0.6165 & 0.8460\\
 & 0.25 & 0.0614 & 0.1313 & 0.2363 & 0.3152 & 0.4718 & 0.6482 & 0.8977\\
 & 0.5 & 0.0526 & 0.1361 & 0.2556 & 0.3452 & 0.5230 & 0.7232 & 1.0064\\
 & 1 & 0.0508 & 0.1585 & 0.3077 & 0.4195 & 0.6413 & 0.8910 & 1.2442\\
\bottomrule\end{tabular}
\end{table}

\begin{table}[p]
\centering\small
\caption{Dedicated full-pair interval / fixed-uniform mean-width ratios at $t=M$. These are simulation means from 10,000 randomizations per canonical cell, not the deterministic Audit calculation in Tables~\ref{tab:synthetic_h1}--\ref{tab:synthetic_h2}. All conditions and both losses are retained. Monte Carlo intervals are in the artifact.}
\label{tab:synthetic_pair}
\begin{tabular}{rr*{7}{r}}\toprule
$M$ & $h$ & 0 & .002 & .01 & .02 & .05 & .1 & .2\\\midrule
128 & 1 & 0.2763 & 0.2763 & 0.2763 & 0.4580 & 0.6762 & 0.7996 & 0.9196\\
 & 2 & 0.2763 & 0.2763 & 0.2763 & 0.5405 & 0.7621 & 0.8625 & 1.0316\\
\midrule
512 & 1 & 0.1387 & 0.1387 & 0.3154 & 0.3923 & 0.4631 & 0.5854 & 0.8210\\
 & 2 & 0.1387 & 0.1387 & 0.3601 & 0.4201 & 0.5198 & 0.7013 & 1.0181\\
\midrule
2048 & 1 & 0.0694 & 0.1633 & 0.2221 & 0.2700 & 0.4118 & 0.5793 & 0.7722\\
 & 2 & 0.0694 & 0.1832 & 0.2461 & 0.3171 & 0.5110 & 0.6783 & 0.9526\\
\midrule
8192 & 1 & 0.0348 & 0.1066 & 0.1818 & 0.2629 & 0.3872 & 0.5615 & 0.7740\\
 & 2 & 0.0348 & 0.1161 & 0.2256 & 0.3105 & 0.4779 & 0.6845 & 0.9431\\
\midrule
32768 & 1 & 0.0174 & 0.0819 & 0.1750 & 0.2481 & 0.3877 & 0.5532 & 0.7861\\
 & 2 & 0.0174 & 0.1002 & 0.2122 & 0.3096 & 0.4723 & 0.6904 & 0.9503\\
\bottomrule\end{tabular}
\end{table}

\clearpage
\section{A joint mean/disagreement refinement}\label{app:joint_partial}

Equation~\eqref{eq:main_partial} establishes the upper rate in
Theorem~\ref{thm:partial}, but spends separate error probabilities on variance
and mean control. The following post-review refinement uses a joint
exponential moment. It does not change the minimax claim, sampling design,
point estimate or budget, and is not asserted for outcome-adaptive audit
subsets. The construction uses classical exponential and symmetric-polynomial
inequalities, rather than claiming a new general concentration principle.

\subsection{An unconditional fixed-subset moment bound}

Fix $0<t\le M$ and $\rho=t/M$. The random subset, distinct within-task draws,
$A_i$ and disagreement count $d$ are as in Theorem~\ref{thm:partial}.
For a task with $h\in\{1,\ldots,L-1\}$ positive paths, write $p=h/L$ and
\begin{align}
 q_{0,h}&=\frac{(L-h)(L-h-1)}{L(L-1)},&
 q_{1,h}&=\frac{2h(L-h)}{L(L-1)},&
 q_{2,h}&=\frac{h(h-1)}{L(L-1)},\\
 b_h(u)&=(1-p)e^{-up}+pe^{u(1-p)},\nonumber\\
 a_{0,h}(u)&=q_{0,h}e^{-up}+q_{2,h}e^{u(1-p)},&
 a_{1,h}(u)&=q_{1,h}e^{u(1/2-p)}.\nonumber
\end{align}
Here $b_h$ is the centered one-draw MGF; $a_{0,h}$ and $a_{1,h}$ partition
the centered pair-mean MGF by agreement and disagreement. Define
\begin{equation}\label{eq:joint_domain}
 C_h(u)=b_h(u)\{1-\rho^{-1}\log b_h(u)\}-a_{0,h}(u).
\end{equation}
Whenever $u>0$ and every mixed-task $C_h(u)>0$, put
\begin{equation}\label{eq:joint_penalty}
 \psi_{\rho,L}(u)=\max\left\{0,\max_{1\le h<L}
 \log\frac{a_{1,h}(u)}{C_h(u)}\right\}.
\end{equation}
The admissible set contains an open neighborhood to the right of zero,
since $C_h(0)=q_{1,h}>0$. We need neither a global monotonicity claim nor
a unique positive root for its boundary.

\begin{proposition}[Joint fixed-subset certificate]\label{prop:joint_partial}
For any fixed cohort and any deterministic admissible $u$,
\begin{equation}\label{eq:joint_mgf}
 \E_R\exp\{\pm uM(\widehat\theta-\theta_{\mathcal C})
              -\psi_{\rho,L}(u)d\}\le1.
\end{equation}
Consequently, let $u_j$ be finitely many admissible, data-independent tilts,
$w_j>0$, and $w_*\ge0$, with $w_*+\sum_jw_j=1$ and at least one tilt.
For $0<\alpha<1$, the unique $x(d)>0$ solving
\begin{equation}\label{eq:joint_inverse}
 w_*+\sum_jw_j\exp\{u_jx(d)-\psi_{\rho,L}(u_j)d\}=2/\alpha
\end{equation}
gives an honest interval $\widehat\theta\pm x(d)/M$. Purchased-label
feasible-set clipping preserves this coverage.
\end{proposition}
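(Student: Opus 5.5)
The plan is to compute the exponential moment task by task, first conditioning on the audit subset $S$, and then average over $S$ with Maclaurin's inequality, as in Equation~\eqref{eq:partial_mgf}. Fix the cohort and the $+$ sign, and write $\psi=\psi_{\rho,L}(u)$. The within-task draws are independent across tasks given $S$, and $M(\widehat\theta-\theta_{\mathcal C})=\sum_i(A_i-p_i)$ with $d=\sum_{i\in S}D_i$. Hence
\begin{equation*}
 \E_R\exp\{uM(\widehat\theta-\theta_{\mathcal C})-\psi d\}
 =\binom Mt^{-1}\sum_{|S|=t}\prod_{i\in S}\beta_i\prod_{i\notin S}b_i,
\end{equation*}
where $b_i=b_{h_i}(u)$ is the one-draw centered MGF. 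The audited factor is $\beta_i=a_{0,h_i}(u)+e^{-\psi}a_{1,h_i}(u)$, obtained by splitting the pair law into its agreeing outcomes $(0,0),(1,0)$ and its disagreeing outcome $(1/2,1)$. For pure tasks ($h_i\in\{0,L\}$) we have $A_i=p_i$ and $D_i=0$, so $b_i=\beta_i=1$ and these tasks drop out.

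Next I factor out $\prod_i b_i$ and put $c_i=\beta_i/b_i\ge0$. The subset average is then the normalized elementary symmetric mean of the $c_i$. Maclaurin's inequality, in the same form used in Appendix~\ref{app:prefix}, bounds it by $\bar c^{\,t}$ with $\bar c=M^{-1}\sum_ic_i$. Using $\log\bar c\le\bar c-1$ and $t=\rho M$, the logarithm of the whole expectation is at most
\begin{equation*}
 \sum_i\bigl[\log b_i+\rho(c_i-1)\bigr].
\end{equation*}
It therefore suffices to show each summand is nonpositive. That is equivalent to $\beta_i\le b_i\{1-\rho^{-1}\log b_i\}$, i.e.\ $e^{-\psi}a_{1,h}(u)\le C_h(u)$ in the notation of Equation~\eqref{eq:joint_domain}. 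On the admissible set $C_h(u)>0$, and $\psi\ge\log\{a_{1,h}(u)/C_h(u)\}$ by Equation~\eqref{eq:joint_penalty}, so the inequality holds for every mixed $h$. This proves the $+$ case of Equation~\eqref{eq:joint_mgf}.

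For the $-$ sign, I replace every label by its complement. This maps $h\mapsto L-h$, preserves $d$, and negates $A_i-p_i$. One checks $b_{L-h}(u)$, $a_{0,L-h}(u)$ and $a_{1,L-h}(u)$ equal the negative-tilt quantities for $h$, because $q_{0,h}$ and $q_{2,h}$ swap and $q_{1,h}$ is symmetric. The maximum over $1\le h<L$ in $\psi$ is symmetric under this reflection, so the same penalty works. The main thing to verify carefully is this bookkeeping: the decomposition of the pair MGF, the sign conventions after reflection, and that Maclaurin is applied to nonnegative $c_i$ averaged over the subset itself (not conditional on $S$). The inequality $\log\bar c\le\bar c-1$ is exactly what produces the $\rho^{-1}\log b$ term in $C_h$.

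For the interval, consider the mixtures $E_\pm(\mu)=w_*+\sum_jw_j\exp\{\pm u_jM(\widehat\theta-\mu)-\psi_{\rho,L}(u_j)d\}$. By linearity and Equation~\eqref{eq:joint_mgf}, each has expectation at most one at $\mu=\theta_{\mathcal C}$. Markov's inequality gives $\Prb\{E_\pm(\theta_{\mathcal C})\ge2/\alpha\}\le\alpha/2$, and a union bound over the two signs gives coverage at least $1-\alpha$. The left side of Equation~\eqref{eq:joint_inverse} is continuous and strictly increasing in $x$, since $w_j>0$ and $u_j>0$. At $x=0$ it is at most $1<2/\alpha$, because $\psi\ge0$, and it diverges as $x\to\infty$. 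So $x(d)$ exists and is unique. For each sign, $E_\pm(\theta_{\mathcal C})<2/\alpha$ holds exactly when $\pm M(\widehat\theta-\theta_{\mathcal C})<x(d)$, so the acceptance set is the interval $\widehat\theta\pm x(d)/M$. Finally, intersecting with the deterministic feasible range $F$ cannot exclude the target, so clipping preserves coverage.
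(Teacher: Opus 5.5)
Your proof is correct and follows the paper's argument: the exact conditional product MGF, factoring out $\prod_i b_{h_i}$, Maclaurin's inequality over the uniform $t$-subset, reflection $h\mapsto L-h$ for the negative sign, then Markov with a two-sign union bound. Your step $\log\bar c\le\bar c-1$, which reduces the bound to per-task nonpositivity, is only a rearrangement of the paper's bound $e^{B_0}(1-B_0/t)^t\le1$ with $B_0=\sum_i\log b_{h_i}$, and it conveniently makes the paper's separate check that $B_0<t$ unnecessary.
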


\begin{proof}
Consider the positive sign. Set
$a_h=a_{0,h}+e^{-\psi_{\rho,L}(u)}a_{1,h}$. Equations
\eqref{eq:joint_domain}--\eqref{eq:joint_penalty} imply
$a_h/b_h\le1-\rho^{-1}\log b_h$. For pure tasks put $a_h=b_h=1$.
Conditional on the uniform subset $S$, task sampling is independent.
Averaging the product over all $t$-subsets therefore gives the exact MGF
\begin{equation}
 \left(\prod_{i=1}^M b_{h_i}\right)
 \frac{e_t(a_{h_1}/b_{h_1},\ldots,a_{h_M}/b_{h_M})}{\binom Mt},
\end{equation}
where $e_t$ is the elementary symmetric polynomial. Maclaurin's inequality
bounds the normalized polynomial by the $t$th power of the arithmetic mean
of its nonnegative arguments. With $B_0=\sum_i\log b_{h_i}\ge0$,
this is at most $(1-B_0/t)^t$. Admissibility implies
$\log b_h<\rho$ for every mixed task, hence $B_0<t$. It follows that
the MGF is at most $e^{B_0}(1-B_0/t)^t\le1$.
Complementing every binary label maps $h$ to $L-h$, leaves $d$ unchanged,
and proves the negative sign with the same penalty.

Each fixed mixture has expectation at most one. For an error exceeding
$x(d)$ in either direction its mixture exceeds $2/\alpha$; two exponential
Markov bounds give total noncoverage at most $\alpha$. This argument does
not assume that $\widehat\theta$ and $d$ are independent. Since every penalty
is nonnegative, the left side of Equation~\eqref{eq:joint_inverse} is at
most one at zero and is strictly increasing to infinity. The inversion is
therefore well defined. The true target always belongs to the
purchased-label feasible interval, which proves the clipping assertion.
\end{proof}

For fixed $\rho,L$, expansion of each mixed-task logarithmic ratio gives
\begin{equation}
 \log\{a_{1,h}(u)/C_h(u)\}
 =\left\{\frac{L-1}{4L\rho}-\frac18\right\}u^2+O(u^3).
\end{equation}
The quadratic coefficient does not depend on $h$. It reflects the exact
design variance factor $1-\rho L/[2(L-1)]$, unlike the generic Audit
variance envelope. This local expansion is not a finite-sample dominance
or optimal-constants theorem.

\subsection{Declared implementation and complete exploratory comparison}

The tested rule, called \emph{Joint}, uses the original Hoeffding interval at
$t=0$ and the existing Pair interval at $t=M$. For $0<t<M$, test the fixed
grid $u=\sqrt\rho\,2^{k/32}$, $k=-256,\ldots,192$, retaining only values
with every $C_h(u)>10^{-10}$ in the implementation. Select its largest
retained value $u_0$. For $J=\lceil\log_2M\rceil+1$, test
$u_j=u_0/2^j$, $j=0,\ldots,J-1$, again checking admissibility. Assign
$w_j=1/[(j+1)(j+2)]$ and $w_*=1/(J+1)$; move any discarded tilt's weight
to $w_*$. If the grid has no admissible tilt, use the original Audit
certificate as a design-dependent fallback. Neither the fallback nor any
tilt, weight or endpoint choice uses observed labels. No fallback occurs
in the reported grid. This rule is not an unadjusted minimum of nominal
intervals.

The implementation evaluates $\log b_h$ using the identity
\begin{equation}
 \log b_h(u)=\log\{1+(1-p)R(-up)+pR(u(1-p))\},\qquad
 R(z)=e^z-1-z.
\end{equation}
For $|z|<.01$, a degree-12 series evaluates $R$ without losing its
quadratic term; otherwise it uses \texttt{expm1}. Directly rounding $b_h$
to one before taking its logarithm is unsafe for very small $\rho$.
A $10^{-12}$ outward penalty guard and ordinary numerical inversion are
used. These are double-precision calculations, not formal interval-arithmetic
enclosures. The supplement retains the numerical correction and its
pre-empirical-computation chronology separately.

After the earlier complete study and its review, the formula, full comparison
grid and implementation were fixed before examining the new widths. All
72,000 existing partial-design realizations are reused: twelve already-seen
agent configurations, both events, all six budgets, and 500 repeats. This
adds no model observations and changes no purchase, point estimate or cost.
All old comparisons remain in Appendix~\ref{app:partial_empirical}.

Joint is narrower than generic Audit in all 120 positive-audit conditions,
with 24 exact $t=0$ ties. Its median Joint/Audit ratios by budget are
$1.000,0.863,0.837,0.796,0.788,0.679$. Against same-label Hull it wins
$0,14,24,24,24,24$ of the 24 conditions at the respective budgets;
Table~\ref{tab:partial} reports those ratios and the unchanged point-error
comparisons. Against exact fixed uniform it wins only $0,0,0,3,2,1$:
six wins and 138 losses overall. Only five wins have a marginal Monte Carlo
95\% difference interval excluding zero; all 138 losses do. The full-audit
endpoint simply recovers the previously reported Pair result, not a new
method gain. These comparisons show removable conservativeness, not broad
certification superiority or the best attainable inference for either design.

The minimum observed Joint coverage is 0.994. The artifact provides every
cell's binomial Wilson coverage interval and all width comparisons, including
adverse and zero-reference cases. Two thousand bootstrap resamples use seed
20261009, pairing metrics within each policy and independently resampling
independently randomized policies. They quantify marginal replay Monte Carlo
uncertainty conditional on fixed banks, not simultaneous testing or future
model/task generalization. Point MSE, budget utilization and the original
uniform reference coverage observations are unchanged.

Numerical checks enumerate 13,004 coverage cases over all small cohorts with
$M\le4$, $3\le L\le6$, every audit count, and four error levels. They include
97,948 randomization leaves and 10,574 signed MGF checks; the largest MGF is
one. A broader deterministic grid checks 23,182 local inequalities with no
positive excess and 99 unchanged endpoint comparisons. These finite tests
support implementation checking; the proposition, not the enumeration or
observed high coverage, establishes mathematical honesty.

\begin{table}[t]
\centering\small
\caption{All 24 agent panel/event conditions at every audit budget, $K=2,L=5$. Median ratios; parentheses count strict wins out of 24. Width uses the post-review Joint refinement versus same-label Hull or fixed uniform (UF); point MSE is unchanged, versus UF or balanced adaptive (BA). Joint equals the original Hoeffding rule at $t=0$ and Pair at $t=M$. All denominators are positive.}
\label{tab:partial}
\begin{tabular}{lrrrr}\toprule
Audited tasks $t$ & Joint / Hull & Joint / UF & MSE / UF & MSE / BA\\\midrule
$0$ & 1.167 (0) & 1.741 (0) & 0.361 (24) & 1.293 (0)\\
$\lceil\sqrt M\rceil$ & 0.977 (14) & 1.485 (0) & 0.378 (24) & 1.279 (2)\\
$\lceil M/10\rceil$ & 0.817 (24) & 1.270 (0) & 0.392 (24) & 1.370 (1)\\
$\lceil M/4\rceil$ & 0.744 (24) & 1.212 (3) & 0.409 (24) & 1.510 (2)\\
$\lceil M/2\rceil$ & 0.724 (24) & 1.240 (2) & 0.419 (24) & 1.792 (1)\\
$M$ & 0.831 (24) & 1.248 (1) & 0.363 (24) & 1.622 (0)\\
\bottomrule\end{tabular}
\end{table}

\section{Finite-cohort dominance region and selector}\label{app:finite_region}

This section turns the Joint boundary into a finite expected-width comparison.
It is a comparison theorem for two fixed designs, not a new confidence
interval and not a claim that the cohort descriptors are known in advance.

For a fixed cohort, let $h_i=\sum_{r=1}^L Y_{ir}$,
$H=\sum_i h_i$, and
\begin{equation}\label{eq:mean_pair_disagreement}
 \bar q=\frac1M\sum_{i=1}^M
 \frac{2h_i(L-h_i)}{L(L-1)}.
\end{equation}
Thus $\bar q$ is the mean probability that two distinct uniform paths in a
task disagree.  Its largest feasible value is
$q_{\max,L}=\max_{0\le h\le L}2h(L-h)/\{L(L-1)\}$.

Fix $0<t<M$ and the data-independent Joint mixture in
Equation~\eqref{eq:joint_inverse}.  Extend its boundary $x_t(d)$ to every
real $d\in[0,t]$ by the same equation.  For pooled uniform sampling, put
$N=ML$, $n=M+t$, and let $[a_s,b_s]$ be the integer endpoints returned by
the exact equal-tailed hypergeometric inversion after $s$ positives.  Its
exact expected interval width on a cohort with total $H$ is
\begin{equation}\label{eq:uniform_expected_width}
 W_U(H)=\sum_{s=\max\{0,n-(N-H)\}}^{\min\{n,H\}}
 \frac{\binom Hs\binom{N-H}{n-s}}{\binom Nn}\,
 \frac{b_s-a_s}{N}.
\end{equation}
Define the finite Joint region and its boundary by
\begin{align}
 \mathcal R_J&=\left\{(H,q):0\le H\le N,\ 0\le q\le q_{\max,L},\
       \frac{2x_t(tq)}M\le W_U(H)\right\},\label{eq:finite_region}\\
 q_\star(H)&=\sup\{q:(H,q)\in\mathcal R_J\},\label{eq:qstar}
\end{align}
with $q_\star(H)=-\infty$ if the section is empty.

\begin{proof}[Proof of Proposition~\ref{prop:finite_region}]
Let
\[
 G(x,d)=\log\!\left(w_*+\sum_jw_j
               \exp\{u_jx-\psi_{\rho,L}(u_j)d\}\right).
\]
This is a log-sum-exp of affine functions, hence is jointly convex in $(x,d)$.
It is strictly increasing in $x$, while every penalty is nonnegative.  The
sublevel set $G(x,d)\le\log(2/\alpha)$ is therefore exactly the hypograph
$x\le x_t(d)$.  A function has a convex hypograph precisely when it is
concave, so $x_t$ is concave; nonnegative penalties also make it
nondecreasing.

Let $D_i$ be the disagreement indicator when task $i$ is selected for audit.
The audited set is uniform of size $t$, and the two paths are distinct and
uniform.  Consequently
\[
 \E_R d=\sum_i\Prb_R(i\in S)\E_R(D_i\mid i\in S)
 =\frac tM\sum_i\frac{2h_i(L-h_i)}{L(L-1)}=t\bar q.
\]
Feasible-label clipping can only shorten Joint.  Jensen's inequality now gives
\[
 \E_R|I_J|\le\frac2M\E_R x_t(d)
 \le\frac{2x_t(\E_Rd)}M=\frac{2x_t(t\bar q)}M.
\]
Equation~\eqref{eq:uniform_expected_width} is the expectation of the exact
uniform interval under $S\sim\HG(N,H,n)$.  The definition of $q_\star(H)$
therefore proves the dominance implication.
\end{proof}

\paragraph{What a valid selector requires.}
Suppose that, before accessing current-cohort labels, external information
specifies a descriptor set $\mathcal A$ known to contain $(H,\bar q)$.  If
$\mathcal A\subseteq\mathcal R_J$, selecting Joint in advance guarantees no
larger expected width than pooled uniform.  Otherwise this theorem recommends
neither design.  Because the selection precedes current-cohort outcome access
and both candidate procedures are individually honest for every fixed cohort,
the selected interval retains $1-\alpha$ coverage.  This is the selector
corollary.  Plugging full-cohort $H$ and $\bar q$ into
Equation~\eqref{eq:qstar} after inspection is instead an oracle diagnostic;
it cannot be used to claim prospectively selected coverage or dominance.

\subsection{Complete post-hoc LiveCodeBench diagnostic}

The analysis follows the declaration in
\texttt{research/finite\_regime\_boundary\_v1.md}.  It uses every eligible
panel and every interior declared budget, recomputes $W_U(H)$ by exact
hypergeometric expectation, and numerically inverts the fixed Joint mixture.
Table~\ref{tab:finite_region} contains all budget-level comparisons between
the sufficient region and observed mean-width wins.

\begin{table}[H]
\centering\small
\caption{Post-hoc oracle diagnostic on all 16 LiveCodeBench panels.
$C$ contains cells satisfying Proposition~\ref{prop:finite_region}; $W$
contains locked mean-width wins. The last three columns count their overlap,
certified non-wins, and uncertified wins. Outside $C$, the proposition leaves
the width comparison unresolved.}
\label{tab:finite_region}
\begin{tabular}{rrrrrrr}\toprule
$t$&median $q_\star$&$|C|$&$|W|$&$|C\cap W|$&$|C\setminus W|$&$|W\setminus C|$\\\midrule
8&0.037&3&3&3&0&0\\
15&0.056&9&9&9&0&0\\
29&0.055&9&11&9&0&2\\
30&0.055&9&12&9&0&3\\
88&0.127&14&14&14&0&0\\
220&0.128&15&15&15&0&0\\
440&0.110&14&14&14&0&0\\
\bottomrule\end{tabular}
\end{table}

Across 112 panel--budget cells, 73 satisfy the sufficient condition and all
73 are locked mean-width wins. Five additional observed wins lie outside
the sufficient region: DeepSeek-V3 and Mistral-Large at $t=29$, plus those
two and GPT-4-Turbo-2024-04-09 at $t=30$. No certified cell is an observed
non-win. At $t=220$, the condition holds for the 15 observed wins and does
not hold for the sole loss, DeepSeek-R1-Lite-Preview. Non-certification
leaves the comparison unresolved; the proposition does not imply a loss.

For continuity with the original diagnostic, imposing the additional
convention that non-certification predicts a non-win would give 107/112
agreements, with five false negatives and no false positives. That converse
is not a theorem, and this accounting is not predictive validation. All
descriptors use the inspected full cohort. The diagnostic illustrates how
the sufficient region relates to this grid, without supplying a selection
rule from budgeted observations or a future-model accuracy estimate.

The analytical quantity bounds the true expected Joint length.  Locked widths
are 1000-randomization Monte Carlo means and can fluctuate around that
expectation; the largest sample-mean excess over the bound is $9.17\times
10^{-4}$ at the smallest audit budget.  Machine-readable output retains all
112 cells, including exact uniform expectations, thresholds, the original
diagnostic classifications, and the five additional wins outside the region.

\clearpage
\section{Certification when tasks may be omitted}\label{app:omission}

This section proves Theorem~\ref{thm:omission}. It removes the
task-covering restriction from Appendix~\ref{app:partial}; the target,
hard-budget access model, and requirement of honesty over every fixed cohort
are unchanged. Policies have task identities and observed histories but no
external information about the latent cohort. The fixed upper construction
uses simple random sampling twice. The lower bound permits arbitrary adaptive
task and path choices, interleaving, stopping, and omission.

\subsection{A constructive omission-enabled interval}

Choose an integer $s\in\{0,\ldots,\lfloor(M-t)/2\rfloor\}$ before observing
labels, and set
\[
 n=M-s,\qquad q=t+s\le n.
\]
Select a uniform $n$-task subset $H$. In every selected task purchase one
uniform path, and in a uniform $q$-task subset of $H$ purchase a second
distinct path. All allocations precede observation. This reserves exactly
$n+q=M+t$ paths, and hence costs at most $(M+t)K$ for every cohort. Write
$p_i=L^{-1}\sum_rY_{ir}$ and let $A_i$ be the one- or two-label task mean.
Then
\[
 \widehat\theta_H=\frac1n\sum_{i\in H}A_i,\qquad
 \E(\widehat\theta_H\mid H)=\mu_H:=\frac1n\sum_{i\in H}p_i,
\]
so simple random task sampling gives $\E\widehat\theta_H=\theta_{\mathcal C}$.

When $s>0$, apply the Audit construction of
Appendix~\ref{app:partial} inside $H$ at error $\alpha/2$, obtaining radius
$r_{\rm in}(n,L,q,d,\alpha/2)$. If $O=H^c$ and
$\mu_O=s^{-1}\sum_{i\in O}p_i$, then
\begin{equation}\label{eq:omission_identity}
 \mu_H-\theta_{\mathcal C}
 =\frac{s}{n}(\theta_{\mathcal C}-\mu_O)
 =\frac{s}{M}(\mu_H-\mu_O).
\end{equation}
Sampling without replacement from the bounded task means and
Equation~\eqref{eq:omission_identity} give the outer radius
\begin{equation}\label{eq:omission_outer}
 r_{\rm out}=\min\left\{\frac{s}{M},
   \frac{\sqrt{s\log(4/\alpha)/2}}{n}\right\}.
\end{equation}
The first term is deterministic; the second fails with probability at most
$\alpha/2$ by finite-population Hoeffding
\citep{hoeffding1963,bardenet2015}. Conditional inner honesty holds for every
$H$, so a union bound proves coverage of
\[
 [\widehat\theta_H-r_{\rm in}-r_{\rm out},
   \widehat\theta_H+r_{\rm in}+r_{\rm out}].
\]
Intersecting this interval with the compatible-label range
\[
 \left[\frac{P_{\rm obs}}{ML},
 1-\frac{M+t-P_{\rm obs}}{ML}\right]
\]
preserves coverage. For $s=0$ use Audit directly at error $\alpha$.

On an internally constant cohort $d=0$ surely, and the untruncated width is
\begin{equation}\label{eq:omission_pure_upper}
 O_{\alpha,L}\left\{
 \frac1{\sqrt{n(q+1)}}+\frac1n+\frac{\sqrt s}{n}\right\}.
\end{equation}
For $M\ge16$ and $t^2<M$, take $s=\lfloor\sqrt M\rfloor$; this is feasible,
$n\ge M/2$, and $q+1\ge\sqrt M$. Equation~\eqref{eq:omission_pure_upper}
is then $O_{\alpha,L}(M^{-3/4})$. If $t^2\ge M$, take $s=0$ and invoke
Theorem~\ref{thm:partial}. The two branches are uniformly
$O_{\alpha,L}([M(t+\sqrt M)]^{-1/2})$. For $M<16$, the unit interval absorbs
the finite cases.

The exact MSE identity makes the price of omission explicit. Define
\[
 \sigma_p^2=\frac1M\sum_i(p_i-\theta_{\mathcal C})^2,\qquad
 \overline V=\frac1M\sum_ip_i(1-p_i).
\]
For $M>1$, total variance and conditional unbiasedness give
\begin{equation}\label{eq:omission_mse}
 \E(\widehat\theta_H-\theta_{\mathcal C})^2
 =\frac{s\sigma_p^2}{n(M-1)}
 +\frac{\overline V}{n}
  \left(1-\frac qn\frac{L}{2(L-1)}\right).
\end{equation}
The first term is simple-random-sample task variance. Averaging the
partial-audit variance within $H$ gives the second; the covariance is zero.
When $M=1$, feasibility forces $s=0$ and only the second term remains. Thus
omission creates point error on heterogeneous pure cohorts even though the
within-task term vanishes.

\subsection{Lower bound over adaptive omission-enabled policies}

Use the late-event subexperiment from Appendix~\ref{app:partial}: set
$T_{ir}=K$ for label zero and $T_{ir}=K+1$ for label one. Prefixes below $K$
are label-independent and every distinct terminal label costs $K$. Let $r_i$
be the number of revealed distinct labels in task $i$ and let
$z=\#\{i:r_i=0\}$. Every pathwise-$(M+t)K$ policy satisfies
\begin{equation}\label{eq:omission_counts}
 \sum_i r_i\le M+t,\qquad
 \sum_{i:r_i\ge1}(r_i-1)\le t+z.
\end{equation}
Include all policy and interval random seeds in the full transcript $T$.
Adaptive action factors then cancel when the same compatible transcript is
compared under two cohort priors.

Under prior A, task labels are independent fair bits and every task is pure.
For $M\ge256$, put $s_0=\lceil\sqrt M\rceil$ and $F=\{z\le s_0\}$.
If $\Prb_A(F^c)\ge1/4$, then conditional on a pure-compatible transcript the
$z$ omitted bits remain independent fair bits. An interval of length below
$\ell=\sqrt{s_0}/(4M)$ contains at most $\sqrt z/2$ possible values of their
sum divided by $M$. The largest $\Bin(z,1/2)$ atom is at most $z^{-1/2}$.
Consequently its posterior coverage is at most $1/2$ on $F^c$. Integrated
honesty, rather than any pointwise posterior-coverage assumption, yields
\[
 \Prb_A(F^c,\ |I|<\ell)\le2\alpha,\qquad
 \E_A|I|\ge\ell(1/4-2\alpha)
 \ge\frac{\sqrt{s_0}}{48M}.
\]
For $\alpha\le1/12$ this lower bounds the target rate.

It remains to consider $\Prb_A(F)>3/4$. Put
$D=t+s_0+1$, $\epsilon=1/(4D)$, and define prior B independently across
tasks: with probability $1-\epsilon$ a task is pure and fair; otherwise it
has one or $L-1$ positive paths, with equal probability and a uniformly
located exceptional path. Let $E$ be within-task agreement of all revealed
labels, vacuously true for $r_i=0$. For every specified A-compatible
transcript $\tau$,
\begin{equation}\label{eq:omission_likelihood}
 d\Prb_B(T\in d\tau,E)=w(\tau)d\Prb_A(T\in d\tau),\qquad
 w(\tau)=\prod_i f_{r_i},
\end{equation}
where $f_0=f_1=1$ and $f_r=1-\epsilon r/L$ for $r\ge2$. From
$r\le2(r-1)$ and Equation~\eqref{eq:omission_counts}, on $F$,
\[
 1\ge w\ge1-\frac{2\epsilon(t+z)}L\ge\frac56.
\]

Construct a subprobability coupling $Q$ by drawing an A transcript, restricting
to $F$, weighting by $w$, and then independently drawing latent A and B
cohorts from their respective posteriors given the transcript (and $E$ for
B). Its mass is greater than $5/8$. Its A marginal is dominated by the
original A joint law, and its B marginal is B restricted to $F,E$. Thus
honesty gives
\begin{equation}\label{eq:omission_transfer}
 Q(\theta_A\notin I)\le\alpha,\qquad
 Q(\theta_B\notin I)\le\alpha,\qquad
 \E_A|I|\ge\E_Q|I|.
\end{equation}
The independent A posterior draw is essential because omitted pure bits make
$\theta_A$ no longer transcript-measurable.

Given a transcript in $F$, the number of observed but noncensused tasks is at
least
\[
 M-z-\frac{t+z}{L-1}\ge \frac M2-\frac{3s_0}{2}\ge\frac M4.
\]
Each such task has B posterior variance at least $\epsilon/(2L^3)$.
Unobserved A and B task means add nonnegative variance. For
$H=M(\theta_B-\theta_A)$, conditional mean $\mu$, and conditional variance
$v$, therefore
\begin{equation}\label{eq:omission_vlower}
 v\ge a_L\frac MD,\qquad a_L=\frac1{32L^3}.
\end{equation}
Decompose $H$ into the independent B task means and the negative omitted A
bits. Each centered summand is bounded by one. With
$S=\E(H^2)=\mu^2+v$,
\[
 \E H^4\le3S^2+4|\mu|v+v.
\]
If $v\ge16$, the right side is below $(7/2)S^2$. Paley--Zygmund applied to
$H^2$ gives
\[
 Q\{|H|\ge\sqrt{S/100}\mid T\}\ge\frac{9801}{35000}.
\]
Equations~\eqref{eq:omission_transfer}--\eqref{eq:omission_vlower} then imply
\[
 \E_A|I|\ge
 \left\{\frac58\frac{9801}{35000}-2\alpha\right\}
 \frac{\sqrt{a_L/100}}{\sqrt{MD}}.
\]
The bracket is positive through $\alpha=1/12$.

If $a_LM/D<16$, compare the all-zero cohort with a prior placing one positive
path uniformly among the $ML$ positions. Any all-zero-compatible transcript
reveals at most $M+t$ positions, so its no-hit likelihood ratio is at least
$1-(M+t)/(ML)\ge1/3$. The same two-target honesty argument used in
Appendix~\ref{app:partial} yields
\[
 \E_0|I|\ge\frac{1-4\alpha}{ML}
 \ge\frac{(1-4\alpha)\sqrt{a_L}}{4L\sqrt{MD}}.
\]
For $M<256$, this single-positive bound is at least
$(1-4\alpha)/(4L)$ times the target rate. Finally, for $M\ge256$,
$t+\sqrt M\le D\le(9/8)(t+\sqrt M)$. Taking the smallest positive constant
from the branches proves the lower half of
Equation~\eqref{eq:omission_rate}.

\paragraph{Scope.}
The theorem concerns expected interval length on the worst pure fixed cohort
subject to honesty over all fixed cohorts. It does not establish pointwise
adaptation, optimal finite constants, or a general benefit from task omission.
The upper rule was chosen to attain the order, not after observing empirical
outcomes. Its complete finite experiment in Appendix~\ref{app:lcb} is adverse.

\section{Prospective LiveCodeBench replication}\label{app:lcb}

\paragraph{Chronology and source.}
This extension was declared only after the omission theorem and its independent
mathematical audit. Before downloading or inspecting model correctness, a
local prospective lock fixed the source revision, file digests, eligibility
rule, methods, budgets, seeds, comparisons and analysis code. This is not a
public preregistration or independently witnessed timestamp. The source is the
official \citet{jain2025livecodebench} code-generation sample Space at immutable
revision \texttt{79837278b7c58c64a17c90936207db25f977f414}. Its
\texttt{all\_outputs.json} file has 284,487,753 bytes and SHA-256
\texttt{b8fa8293294ed03c607e4c0d861b50203bd6aee395414d8f5d9d7e9a7853acc9}.
The adapter verifies this digest and the pinned 880-problem metadata.

A model is eligible exactly when it has one source-ordered record for every
problem, and every record contains at least five binary
\texttt{pass1\_list} entries. We retain the first five. Sixteen panels pass.
Six N=1 panels---the three O1-2024-12-17 effort levels, O1-Mini,
O1-Preview and QwQ-32B-Preview---fail the repetition rule and are listed as
structural exclusions. No model, task or output is filtered by accuracy,
purity, interval width or result. The adapter retains task identifiers and
binary correctness only; it neither executes nor retains generated code.

\paragraph{Frozen designs and analysis.}
Here $M=880,L=5,K=1$, and
\[
 t\in\{0,8,15,29,30,88,220,440,880\}.
\]
At every $t$, each method purchases exactly $M+t$ labels. Task-covering Joint
uses one uniform label per task and a second distinct label in a uniform
$t$-task subset, with Proposition~\ref{prop:joint_partial}. Exact pooled
uniform samples $M+t$ of the $5M$ labels without replacement and inverts the
hypergeometric law.

Omission-Joint uses $s=\lfloor\sqrt M\rfloor=29$ when $t^2<M$ and zero
otherwise. For $s>0$, select $n=M-s$ tasks and audit $q=t+s$ of them.
Apply Joint inside the selected tasks at error $\alpha/2$, add
Equation~\eqref{eq:omission_outer} at error $\alpha/2$, and intersect with
the full compatible-label range. This replaces generic Audit only inside the
same valid union-bound construction; no interval is selected after seeing
outcomes.

Every panel/design/budget has 1000 independent evaluator randomizations from
stable SHA-256-derived streams with base seed 20261010. The complete grid has
432,000 runs. We report conditional bias, MSE, mean width, coverage and
charged labels. Binomial intervals quantify coverage Monte Carlo error.
For every omission-active comparison, 2000 independently resampled bootstrap
draws quantify only randomization error in mean-width differences. Models
share tasks and providers; panels are not independent draws from a model
population, and no across-model $p$-value is used.

\paragraph{Complete budget summary.}
Table~\ref{tab:lcb} reports all nine task-covering comparisons. Joint point
MSE is lower than exact uniform in all 144 panel--budget cells. Width crosses
near the smallest declared audits: Joint wins 9/16 at $t=15$ and 11/16 at
$t=29$. It wins 15/16 at $t=220$, with median width ratio 0.694 and median
MSE ratio 0.130. Table~\ref{tab:lcb_panels} shows every model at $t=220$ and
full audit. DeepSeek-R1-Lite-Preview is the only width loss at $t=220$;
DeepSeek-R1-Preview also loses at full audit. Both retain substantial MSE
gains. Thus the conclusion is a cohort-conditional policy--interval result,
not uniform dominance.

The post-hoc finite-region calculation in Appendix~\ref{app:finite_region}
uses the exact cohort total and mean pair disagreement to test
Proposition~\ref{prop:finite_region}. It retains all seven interior budgets and
all 16 panels. Its 73 sufficient certificates are all observed wins; five
additional observed wins lie outside the sufficient region and are reported
by model and budget. The calculation uses the inspected full cohort and
does not validate prospective design selection.

\begin{table}[t]
\centering\small
\caption{Separately protocol-locked LiveCodeBench panels ($M=880,L=5,K=1$). J/U is
task-covering Joint / exact fixed uniform; J/H is Joint / same-observation
count-only Hull. Ratios are medians across the 16 panels, using each panel's
mean width or MSE. Below one favors Joint; parentheses give strict panel wins.
Coverage is the minimum--maximum across panels. Every method uses $M+t$
labels and 1000 randomizations per panel. Hull is a
conservative local-law Chernoff comparator, computed post hoc on the locked
Joint observations.}
\label{tab:lcb}
\begin{tabular}{rrrrr}\toprule
$t$ & J/U width & J/H width & J/U MSE & Joint coverage\\\midrule
0   &1.562 (0)&1.040 (0)&0.113&1.000--1.000\\
8   &1.105 (3)&0.732 (16)&0.112&0.999--1.000\\
15  &0.966 (9)&0.639 (16)&0.115&0.995--1.000\\
29  &0.907 (11)&0.595 (16)&0.109&0.995--1.000\\
30  &0.900 (12)&0.591 (16)&0.112&0.995--1.000\\
88  &0.807 (14)&0.516 (16)&0.120&0.992--1.000\\
220 &0.694 (15)&0.425 (16)&0.130&0.998--1.000\\
440 &0.673 (14)&0.406 (16)&0.126&0.998--1.000\\
880 &0.677 (14)&0.449 (16)&0.110&0.999--1.000\\
\bottomrule\end{tabular}
\end{table}

\begin{table}[H]
\centering\small
\caption{Every eligible model at two declared budgets. Entries are
task-covering Joint / exact fixed uniform; below one favors Joint.}
\label{tab:lcb_panels}
\begin{tabular}{lrrrr}\toprule
&\multicolumn{2}{c}{$t=220$}&\multicolumn{2}{c}{$t=880$}\\
Model&Width&MSE&Width&MSE\\\midrule
Claude-3-Haiku&0.704&0.122&0.649&0.102\\
Claude-3.5-Sonnet-20240620&0.559&0.060&0.493&0.059\\
Claude-3.5-Sonnet-20241022&0.662&0.119&0.669&0.101\\
Codestral-Latest&0.684&0.138&0.685&0.113\\
DeepSeek-R1-Lite-Preview&1.049&0.315&1.061&0.297\\
DeepSeek-R1-Preview&0.964&0.270&1.028&0.243\\
DeepSeek-V3&0.741&0.159&0.781&0.134\\
GPT-4-Turbo-2024-04-09&0.745&0.177&0.786&0.148\\
GPT-4O-2024-05-13&0.660&0.109&0.666&0.106\\
GPT-4O-2024-08-06&0.798&0.190&0.846&0.180\\
GPT-4O-mini-2024-07-18&0.667&0.121&0.666&0.105\\
Gemini-Flash-1.5-002&0.532&0.054&0.444&0.046\\
Gemini-Flash-2.0-Exp&0.567&0.072&0.496&0.064\\
Gemini-Flash-2.0-Thinking&0.839&0.221&0.891&0.197\\
Gemini-Pro-1.5-002&0.626&0.100&0.603&0.087\\
Mistral-Large&0.741&0.155&0.778&0.137\\
\bottomrule\end{tabular}
\end{table}

\paragraph{The adverse omission result.}
At the four active budgets $t=0,8,15,29$, Omission-Joint loses width to exact
uniform in every panel. Median ratios are respectively
$1.346,1.328,1.324,1.287$. At $t=0$ it is narrower than task-covering Joint
in 15/16 panels, but at each positive active audit count it loses to
task-covering Joint in all 16. All bootstrap intervals place the
Omission-Joint minus uniform mean-width difference above zero through $t=15$;
15/16 do so at $t=29$. This does not contradict
Theorem~\ref{thm:omission}: the branch rule proves an asymptotic order, not
finite dominance or optimized constants. No post-outcome omission count is
substituted.

\paragraph{Cohort structure, coverage, and checks.}
Panel accuracies range from 0.222 to 0.777. Pure-task fraction has median
0.893 and range 0.718--0.959; mean random-pair disagreement has median 0.0518
and range 0.0202--0.1357. These post-lock descriptors help explain why task
stratification is valuable but did not select the experiment.

Across all 144 cells per method, empirical coverage ranges from 0.992 to one
for task-covering Joint, 0.994 to one for Omission-Joint, and 0.938 to 0.972
for exact uniform. The first two ranges indicate conservative certificates;
the exact analytical arguments establish coverage. There are no budget
violations. Before outcome access, exhaustive enumeration of the
Omission-Joint wrapper covered 8,797 small finite designs, 82,354 exact
probability states and 26,391 interval conditions at three error levels, with
no undercoverage. After the run, every locked code digest still matches, all
16 panel summaries contain 27,000 rows, and the analyzer retains all 432
metric cells and 288 declared panel comparisons. These computational checks
are not formal floating-point enclosures or human proof verification.

\section{Post-hoc finite-optimum and decision calibration}\label{app:accept_gap}

\paragraph{Chronology and scope.}
Every analysis in this section was declared after all LiveCodeBench outcomes
and the primary comparisons had been inspected. The declaration fixes all
cells, width targets, omission-selection criterion, randomization count and
seed. These results strengthen finite interpretation but are not held-out
confirmation. Machine-readable outputs retain 288,000 analyzed or newly
sampled rows, every model and budget, and the complete declaration.

\subsection{Same-observation comparator and radius-tuned omission}

The mixed-count Hull in Appendix~\ref{app:partial_empirical} uses the same
preselected audited-task subset and the same one-or-two observed labels as
Joint. It relaxes unknown task compositions through exact finite local laws
and concave envelopes, but does not use the disagreement statistic. The
local laws are exact; the resulting Chernoff interval is conservative and is
not an optimal exact interval over all rules for the observation design. We
recompute it on all 144,000 locked task-covering observations. Table
\ref{tab:accept_gap_budget} reports every budget. Joint loses at $t=0$, where
it falls back to its one-draw rule, then wins in every panel for every
$t\ge8$. Thus the favorable Joint/uniform comparison is not explained only by
different observations. Hull coverage is one in all 144 panel--budget cells;
its analytical construction, not this conservative replay value, establishes
validity.

For the unrestricted policy class, forcing $s=\lfloor\sqrt M\rfloor$ is only
an order argument. We instead choose $s$ without outcomes by minimizing the
pure-cohort radius
\[
\begin{aligned}
&r_{\rm Joint}(M-s,L,t+s,0;\alpha_s)\\
&\quad+\min\left\{\frac{s}{M},
 \frac{\sqrt{s\log(4/\alpha)/2}}{M-s}\right\},\qquad
 \alpha_s=\alpha\ind\{s=0\}+(\alpha/2)\ind\{s>0\}.
\end{aligned}
\]
over every feasible integer $0\le s\le\lfloor(M-t)/2\rfloor$, breaking ties
toward smaller $s$. This selector uses only $M,L,t,\alpha$ and the fixed
certificate formulas. It chooses $s=39,28$ at $t=0,8$ and zero thereafter.
Fresh 1000-randomization evaluation per cell uses seed 20261020. It still
loses to uniform in all panels at $t=0,8$, then matches the task-covering
distribution. Minimum observed coverage is 0.993 and there are no budget
violations. This is a valid fallback, not evidence that omission has favorable
finite constants.

\begin{table}[H]
\centering\small
\caption{Complete post-hoc budget summary. Opt/U is the radius-selected
unrestricted rule / exact uniform; J/H is locked task-covering Joint /
same-observation Hull. Parentheses are strict panel wins out of 16.}
\label{tab:accept_gap_budget}
\begin{tabular}{rrrr}
\toprule
$t$ & selected $s$ & Opt/U width & J/H width\\\midrule
0&39&1.380 (0)&1.040 (0)\\
8&28&1.333 (0)&0.732 (16)\\
15&0&0.961 (9)&0.639 (16)\\
29&0&0.900 (11)&0.595 (16)\\
30&0&0.885 (12)&0.591 (16)\\
88&0&0.802 (14)&0.516 (16)\\
220&0&0.696 (15)&0.425 (16)\\
440&0&0.673 (14)&0.406 (16)\\
880&0&0.679 (14)&0.449 (16)\\
\bottomrule
\end{tabular}
\end{table}

\subsection{Finite confidence-rule program on tiny designs}

The complete grid is $L=3$, $M\in\{2,3,4\}$, and $L=5$,
$M\in\{2,3\}$, with every integer $0\le t\le M$: 19 cells in total.
No $M=1$ cell is included. The optimization concerns this fixed sampling
design and the worst pure cohort, subject to coverage over every cohort.

For a fixed task-covering design, let $x$ record the audited subset and every
observed within-task success count. A cohort is a vector
$h\in\{0,\ldots,L\}^M$, with target-grid index $H=\sum_i h_i$, and
$P_h(x)$ is its finite randomization law. For every observation $x$ and target
grid interval $[a,b]$, the program selects probability $q_{x,a,b}$. It solves
\begin{align}
\min_q\quad &W,\\
\text{s.t.}\quad
&\sum_xP_h(x)\sum_{a\le H\le b}q_{x,a,b}\ge1-\alpha
 &&\text{for every }h,\\
&\sum_{a\le b}q_{x,a,b}=1 &&\text{for every }x,\\
&\sum_xP_p(x)\sum_{a\le b}q_{x,a,b}\frac{b-a}{ML}\le W
 &&\text{for every pure }p.
\end{align}
Restricting endpoints to the target grid loses nothing: an endpoint between
adjacent targets can move inward without changing coverage. The program is the
finite optimum among randomized interval rules for this fixed design and is a
lower benchmark for deterministic rules. It does not optimize sampling
policies. We solve every declared cell in double precision with HiGHS; all
minimum reconstructed coverages are within $1.2\times10^{-14}$ of 0.95 or
above, and maximum state-probability residual is $1.4\times10^{-13}$. These
residuals are numerical checks, not rational dual certificates.

Table~\ref{tab:finite_optimum} retains all 19 cells. Joint is within a factor
1.116--1.818 of the numerical randomized optimum on this grid. The displayed
Joint and Hull ratios coincide in 18 cells, so these tiny designs provide
little evidence separating their finite efficiency. The reported factors
apply only to the checked designs. They neither establish a constant-gap
guarantee at $M=880$ nor a lower benchmark for the best adaptive sampling
policy; the program holds sampling fixed.

\begin{table}[H]
\centering\scriptsize
\caption{All 19 tiny-design program cells, with $M\ge2$. $W^*$ is the
double-precision numerical randomized-rule optimum for the fixed sampling
design; the last columns are worst-pure expected width divided by $W^*$.}
\label{tab:finite_optimum}
\begin{tabular}{rrrrrr}
\toprule
$L$&$M$&$t$&$W^*$&Joint/$W^*$&Hull/$W^*$\\\midrule
3&2&0&0.597&1.116&1.116\\
3&2&1&0.408&1.224&1.224\\
3&2&2&0.275&1.212&1.212\\
3&3&0&0.574&1.162&1.162\\
3&3&1&0.388&1.431&1.431\\
3&3&2&0.282&1.574&1.574\\
3&3&3&0.244&1.364&1.364\\
3&4&0&0.550&1.211&1.211\\
3&4&1&0.375&1.555&1.555\\
3&4&2&0.292&1.712&1.712\\
3&4&3&0.229&1.818&1.818\\
3&4&4&0.229&1.455&1.455\\
5&2&0&0.700&1.143&1.143\\
5&2&1&0.495&1.414&1.414\\
5&2&2&0.445&1.348&1.348\\
5&3&0&0.640&1.250&1.250\\
5&3&1&0.469&1.562&1.562\\
5&3&2&0.387&1.721&1.721\\
5&3&3&0.354&1.696&1.676\\
\bottomrule
\end{tabular}
\end{table}

\subsection{Width targets and computation}

For each method and panel, take the monotone envelope of mean width over the
declared budget grid and report the first grid point at or below each fixed
target. There is no interpolation. These are retrospective comparisons of
Monte Carlo mean widths on a fixed grid, not guarantees for each realized
interval or an outcome-dependent stopping procedure.
Table~\ref{tab:width_targets} shows that
the useful regime is a precision requirement, not every possible target.
At width 0.04, Joint saves a median 660 labels with no panel-level loss; at
0.05 it saves 352 in the median but loses in three panels. At 0.06, the
initial uniform design already suffices and is uniformly cheaper on this grid.

\begin{table}[H]
\centering\small
\caption{Labels required on the declared LiveCodeBench grid. Medians use
resolved panels only; $+$/$=$/$-$ compare Joint savings with uniform.}
\label{tab:width_targets}
\begin{tabular}{rrrrrr}
\toprule
Width&Joint resolved&Joint labels&Uniform resolved&Uniform labels&Saving; $+ / = / -$\\\midrule
0.02&4&1760&0&---&---\\
0.03&12&1320&0&---&---\\
0.04&16&1100&16&1760&660; 13/3/0\\
0.05&16&968&16&1320&352; 13/0/3\\
0.06&16&895&16&880&$-15$; 0/0/16\\
\bottomrule
\end{tabular}
\end{table}

The complete Hull recomputation plus 144,000 new optimized-policy runs takes
64 seconds on the analysis host, with peak resident memory about 106 MiB.
An independent checker reaggregates every row, recomputes 1,440 Hull endpoints,
freshly replays 1,056 optimized allocations, and reruns all 19 finite programs.
It finds minimum coverage 1.000 for Hull and 0.993 for the optimized rule,
with no budget or summary discrepancy. More evaluator randomizations would
reduce Monte Carlo error but would not address model- or task-population
generalization.

\end{document}